\documentclass[twoside,11pt]{article}
\usepackage{blindtext}
\usepackage{bm}
\usepackage{graphicx}
\usepackage{tabularx}
\usepackage{mathrsfs}%
\usepackage{enumitem}
\usepackage{cite}
\usepackage{algorithm}%
\usepackage{algorithmicx}%
\usepackage{algpseudocode}
\usepackage[colorlinks=true,linkcolor=blue]{hyperref}
\usepackage{float}
\usepackage{makecell}
\usepackage{amsmath,amsthm, amssymb, graphicx, caption, subcaption}  % 数学公式、图形支

\usepackage{amsfonts}
\usepackage{booktabs}  % 用于三线表
\usepackage{geometry}
\usepackage{hyperref}
\newtheorem{definition}{Definition}
\newtheorem{theorem}{Theorem}[section]
\newtheorem{proposition}{Proposition}
\numberwithin{definition}{section}
\numberwithin{equation}{section}
\numberwithin{proposition}{section}
\newtheorem{assumption}[theorem]{Assumption}
\usepackage{jmlr2e}
\newcommand{\RR}{\mathbb{R}}

\newcommand{\mc}{\mathcal}

\newcommand{\abs}[1]{\left|#1\right|}

\newcommand{\inner}[2]{\langle #1, #2 \rangle_{L^2}}
\newcommand{\innerh}[2]{\langle #1, #2 \rangle_h} % 离散内积

\newcommand{\dH}{\Delta_H}
\newcommand{\Om}[1]{\Omega^{#1}}
\newcommand{\lr}[1]{\left(#1\right)}

\newcommand{\To}{\longrightarrow}
\newcommand{\upuparrow}{\uparrow\!\!\uparrow} % 严格递增符号
\newcommand{\embeds}{\hookrightarrow}
\newcommand{\spec}{\mathop{\mathrm{spec}}\nolimits} % 谱
\newcommand{\dist}{\mathop{\mathrm{dist}}\nolimits} % 距离
\newcommand{\T}{\mathrm{T}} % 转置
\newcommand{\Ogroup}{\mathop{\mathrm{O}}\nolimits} % 正交群

\newcommand{\dhodge}[1]{\Delta_{H,#1}} % 带度量的Hodge拉普拉斯（修正为小写命令名）
\newcommand{\Ric}[1]{\mathrm{Ric}_{#1}} % 里奇曲率
\newcommand{\ceil}[1]{\lceil #1 \rceil} % 上取整
\newcommand{\CC}{\mathbb{C}} % 复数域
\newcommand{\len}{\mathop{\mathrm{len}}\nolimits} % 曲线长度
\newcommand{\Lip}[1]{\mathrm{Lip}(#1)} % 利普希茨常数
\newcommand{\sintheta}{\sin\!(\Theta)} % 间隙度量
\newcommand{\Omh}[1]{\Omega_h^{#1}} % 离散p-形式空间
\newcommand{\Id}{\mathrm{Id}} % 单位算子
\newcommand{\ran}{\mathop{\mathrm{ran}}\nolimits} % 值域
\newcommand{\R}{\mathrm{O}} % 正交群
\newcommand{\E}{\mathbb{E}} % 期望
\newenvironment{discussion}{\paragraph{Discussion.}}{\par}
\usepackage{lastpage}
\jmlrheading{23}{}{1-\pageref{LastPage}}{; Revised }{}{21-0000}{Author One and Author Two}

\ShortHeadings{Sample JMLR Paper}{One}
\firstpageno{1}

\begin{document}

\title{Gauge-Equivariant Intrinsic Neural Operators for Geometry-Consistent Learning of Elliptic PDE Maps}

\author{\name Pengcheng Cheng \email chengpc1022@mails.jlu.edu.cn \\
       \addr School of Mathematics\\
       Jilin University\\
       Changchun, 130012, China}

\editor{My editor}

\maketitle

\begin{abstract}%   <- trailing '%' for backward compatibility of .sty file
Learning solution operators of partial differential equations (PDEs) from data has emerged as a promising route to fast surrogate models in multi-query scientific workflows. However, for geometric PDEs whose inputs and outputs transform under changes of local frame (gauge), many existing operator-learning architectures remain representation-dependent, brittle under metric perturbations, and sensitive to discretization changes. We propose Gauge-Equivariant Intrinsic Neural Operators (GINO), a class of neural operators that parameterize elliptic solution maps primarily through intrinsic spectral multipliers acting on geometry-dependent spectra, coupled with gauge-equivariant nonlinearities. This design decouples geometry from learnable functional dependence and enforces consistency under frame transformations. We validate GINO on controlled problems on the flat torus ($\mathbb{T}^2$), where ground-truth resolvent operators and regularized Helmholtz--Hodge decompositions admit closed-form Fourier representations, enabling theory-aligned diagnostics. Across experiments E1--E6, GINO achieves low operator-approximation error, near machine-precision gauge equivariance, robustness to structured metric perturbations, strong cross-resolution generalization with small commutation error under restriction/prolongation, and structure-preserving performance on a regularized exact/coexact decomposition task. Ablations further link the smoothness of the learned spectral multiplier to stability under geometric perturbations. These results suggest that enforcing intrinsic structure and gauge equivariance yields operator surrogates that are more geometry-consistent and discretization-robust for elliptic PDEs on form-valued fields.
\end{abstract}

\begin{keywords}
Neural operators, geometric deep learning, gauge equivariance, elliptic PDEs, Laplace--Beltrami operator, spectral multipliers, Helmholtz--Hodge decomposition, discretization consistency, cross-resolution generalization, scientific machine learning
\end{keywords}

\section{Introduction}
\label{sec:introduction}
Partial differential equations (PDEs) govern a wide range of phenomena in physics, engineering, and the natural sciences. In many applications, the central computational task is not the solution of a single PDE instance, but the repeated evaluation of a solution operator that maps an input field (e.g., a forcing term, boundary data, coefficients) to an output field (e.g., a state, potential, or flux). Classical numerical solvers—finite differences, finite volumes, finite elements, and spectral methods—offer systematic accuracy and stability guarantees, but can be computationally expensive when embedded in design loops, inverse problems, uncertainty quantification, or multi-query regimes \cite{Hughes2000,Brenner2008,Temam2021}. This has motivated substantial interest in scientific machine learning approaches that learn surrogate models for PDE solution maps from data, including physics-informed methods \cite{Raissi2019,Karniadakis2021,Brunton2020} and operator-learning methods \cite{Lu2021, Li2021, Kovachki2021}.

Among data-driven approaches, neural operators aim to learn maps between infinite-dimensional function spaces, enabling discretization-independent inference in principle \cite{Lu2021,Li2021,Kovachki2021}. Representative examples include DeepONet \cite{Lu2021}, the Fourier Neural Operator (FNO) \cite{Li2021}, and related integral-operator architectures \cite{Kovachki2021}. These models have demonstrated impressive performance on benchmark PDE families, including fluid dynamics and diffusion-type problems \cite{Brunton2020,Stachenfeld2020,Takamoto2022}. However, many existing operator-learning architectures are trained and evaluated on fixed grids with coordinate-dependent representations. As a result, their behavior can degrade under changes of discretization (cross-resolution evaluation) and under geometric changes (e.g., varying metric, mesh, or coordinate charts), and they often lack built-in guarantees of equivariance with respect to representation choices.

A core challenge is that many physically meaningful PDEs are geometric: their inputs/outputs may be vector fields or differential forms, and the governing operators depend on the underlying Riemannian metric through objects such as the Laplace--Beltrami operator and the Hodge star. In such settings, the same geometric field can be represented by different component vectors under different choices of local frame or gauge. Standard convolutional networks, including coordinate-augmented CNNs, are typically not designed to respect these symmetries; their outputs may vary substantially under benign changes of representation, even when the underlying geometric field is unchanged. This issue is closely related to the broader theme of geometric deep learning, which emphasizes architectures that respect symmetry and structure—on groups, manifolds, and meshes—through equivariance principles \cite{Bronstein2021,Cohen2016,Cohen2019}. In Euclidean settings, group-equivariant CNNs and steerable CNNs have yielded strong gains in sample efficiency and generalization by hard-coding symmetry \cite{Cohen2017,Worrall2017,Weiler2019,Bekkers2020}. More recently, gauge-equivariant CNNs extend equivariance to settings where features transform under spatially varying frames, providing a principled way to handle fields on manifolds and meshes \cite{Cohen2019,Kohler2020,Kondor2018,deHaan2020}. In parallel, equivariant neural networks for 3D geometry have matured rapidly, including tensor field networks and SE(3)-equivariant architectures \cite{Thomas2018,Fuchs2020,Geiger2022,Satorras2021}, reinforcing the practical value of enforcing symmetry at the architectural level.

In this work we focus on a complementary question: how to design an operator-learning architecture that is intrinsic and gauge-consistent, and that remains stable across geometric and discretization changes. Our approach is guided by classical geometric analysis. For elliptic operators, the spectrum of the Laplace-type operator encodes both geometry and scale, and solution operators can be expressed via spectral multipliers (e.g., resolvents and Green’s functions). Meanwhile, the Helmholtz--Hodge decomposition provides a canonical splitting of vector fields (or 1-forms) into exact, coexact, and harmonic components, underpinning stable formulations in both continuous and discrete settings \cite{Jost2008,Warner1983,Desbrun2008,Hirani2003,Arnold2006,Arnold2010}. Discrete exterior calculus (DEC) and finite element exterior calculus (FEEC) make these ideas computationally concrete, yielding discretizations that preserve topological and geometric structure \cite{Desbrun2008,Hirani2003,Arnold2006,Arnold2010}.

Building on these insights, we introduce a Gauge-Equivariant Intrinsic Neural Operator (GINO) that parameterizes the learned operator primarily through intrinsic spectral multipliers of the form $m_\theta(\lambda)$, where $\lambda$ denotes a geometry-dependent spectral quantity (e.g., $\lambda_g(k)=k^\top A k$ for constant-metric problems). This design explicitly separates (i) geometry-dependent spectral structure from (ii) learnable functional dependence, and we couple it with a gauge-equivariant nonlinearity to ensure consistent transformation under frame changes. The resulting model is intended to behave like a geometry-aware, discretization-consistent surrogate for elliptic solution operators on fields with nontrivial transformation laws.

We validate the approach in a controlled setting where the ground-truth operator is available in closed form in the Fourier domain, enabling diagnostics that directly probe the theory. Across experiments E1–E6, we show that: (i) GINO matches the target operator accurately; (ii) it achieves gauge equivariance up to numerical precision, while coordinate-based CNN baselines exhibit large gauge sensitivity; (iii) it remains stable under structured metric perturbations; (iv) it generalizes across resolutions and approximately commutes with restriction/prolongation operators, indicating reduced discretization dependence; (v) it supports a structure-preserving (regularized) Hodge decomposition task; and (vi) ablations link multiplier smoothness to stability, aligning with theoretical expectations about spectral regularity and perturbation amplification.

We propose an intrinsic operator-learning architecture that parameterizes elliptic solution maps via learned spectral multipliers coupled with gauge-equivariant nonlinearities, aimed at representation- and discretization-consistent behavior. We design a suite of theory-aligned diagnostics for gauge equivariance, metric-perturbation stability, and discretization commutation, enabled by exact Fourier-domain ground truth. We demonstrate strong empirical advantages over coordinate-based CNN baselines on accuracy, gauge consistency, metric stability, and cross-resolution generalization, and we provide ablations linking multiplier roughness to perturbation amplification.

\section{Problem Setup and Target Operator}
\label{2}

\subsection{Geometric and Functional-Analytic Setting}
\label{subsec:geometric_setting}

Let  $ (\mathcal{M}, g) $  be a connected, compact, oriented,  $ d $ -dimensional smooth Riemannian manifold without boundary. We denote by  $ \Omega^1(\mathcal{M}) $  the space of smooth differential  $ 1 $ -forms on  $ \mathcal{M} $ . The Riemannian metric  $ g $  induces a pointwise inner product  $ \langle \cdot, \cdot \rangle_g $  on  $ 1 $ -forms and the associated  $ L^2 $  inner product
$$
\langle \omega, \eta \rangle_{L^2} := \int_{\mathcal{M}} \langle \omega(x), \eta(x) \rangle_g \, d\mathrm{vol}_g(x),
\qquad \omega, \eta \in \Omega^1(\mathcal{M}),
$$
with norm  $ \|\omega\|_{L^2}^2 = \langle \omega, \omega \rangle_{L^2} $ . We write  $ H^s\Omega^1(\mathcal{M}) $  for the Sobolev space of  $ 1 $ -forms of order  $ s \in \mathbb{R} $ , defined via local charts and a partition of unity (equivalently, through the functional calculus of a fixed elliptic operator on  $ 1 $ -forms). When no ambiguity arises, we abbreviate  $ H^s\Omega^1(\mathcal{M}) $  as  $ H^s\Omega^1 $ .

We denote by  $ d : \Omega^k(\mathcal{M}) \to \Omega^{k+1}(\mathcal{M}) $  the exterior derivative and by  $ \delta : \Omega^{k+1}(\mathcal{M}) \to \Omega^k(\mathcal{M}) $  its  $ L^2 $ -adjoint (the codifferential). The (Hodge) Laplacian on  $ 1 $ -forms is
$$
\Delta_H := d\delta + \delta d : \Omega^1(\mathcal{M}) \to \Omega^1(\mathcal{M}).
$$
On a compact manifold without boundary,  $ \Delta_H $  is essentially self-adjoint, nonnegative, and has discrete spectrum. Throughout, we use the same notation for its continuous extension to Sobolev spaces.

To make the inverse problem well-posed without explicitly quotienting out 
harmonic  $ 1 $ -forms, we introduce a positive shift parameter  $ \alpha > 0 $ . Define the shifted operator
$$
\mathcal{L}_g := \Delta_H + \alpha I,
$$
where  $ I $  denotes the identity on  $ 1 $ -forms. The shift ensures  $ \mathcal{L}_g $  is strictly positive and invertible on  $ L^2\Omega^1 $ , and improves conditioning both analytically and numerically.

Finally, since vector fields are often the physically meaningful objects, we note that  $ g $  provides an isomorphism between  $ 1 $ -forms and vector fields via the musical isomorphisms  $ \flat, \sharp $ . All statements below may be equivalently read in the language of tangent vector fields, but we adopt the  $ 1 $ -form formulation because it is naturally compatible with Hodge theory and with discretizations based on discrete exterior calculus.

\subsection{Elliptic PDE on 1-Forms and the Solution Operator}
\label{subsec:elliptic_pde}

We consider the following family of elliptic problems on  $ 1 $ -forms: for a given forcing  $ f \in H^{s-1}\Omega^1 $  (with  $ s \ge 0 $  typically), find  $ \omega \in H^{s+1}\Omega^1 $  such that
\begin{equation}
\mathcal{L}_g \omega = f,
\qquad \text{i.e.,} \qquad (\Delta_H + \alpha I)\omega = f.
\label{eq:elliptic_problem}
\end{equation}
Equation\eqref{eq:elliptic_problem} covers, as special cases or close relatives, the vector Laplacian and Hodge--Poisson problems arising in geometry processing, computational electromagnetism, and fluid mechanics on surfaces (e.g., via vorticity--stream formulations or Hodge decompositions). The Hodge Laplacian is the canonical second-order elliptic operator acting on differential forms, and thus provides a clean testbed for developing intrinsic, coordinate-free operator learning theory.

Standard elliptic regularity on compact manifolds implies that  $ \mathcal{L}_g $  defines a continuous isomorphism between Sobolev spaces,
$$
\mathcal{L}_g : H^{s+1}\Omega^1 \to H^{s-1}\Omega^1,
$$
and therefore admits a bounded inverse (the resolvent at  $ -\alpha $ ),
\begin{equation}
\mathcal{S}_g := \mathcal{L}_g^{-1} : H^{s-1}\Omega^1 \to H^{s+1}\Omega^1.
\label{eq:solution_operator}
\end{equation}
We refer to  $ \mathcal{S}_g $  as the (shifted) Hodge--Poisson solution operator. In this work,  $ \mathcal{S}_g $  is the primary target operator to be learned.

Spectrally, since  $ \Delta_H $  is self-adjoint and nonnegative on  $ L^2\Omega^1 $ , there exists an  $ L^2 $ -orthonormal eigenbasis  $ \{\psi_k\}_{k \ge 0} \subset \Omega^1(\mathcal{M}) $  with eigenvalues  $ 0 \le \lambda_0 \le \lambda_1 \le \cdots $ ,  $ \lambda_k \to \infty $ , such that  $ \Delta_H \psi_k = \lambda_k \psi_k $ . Consequently, for sufficiently regular  $ f $ ,
\begin{equation}
\mathcal{S}_g f = \sum_{k \ge 0} \frac{1}{\lambda_k + \alpha} \langle f, \psi_k \rangle_{L^2} \, \psi_k,
\label{eq:spectral_representation}
\end{equation}
which exhibits  $ \mathcal{S}_g $  as a spectral multiplier (a functional calculus) of  $ \Delta_H $ . This representation motivates the model class introduced later: intrinsic neural operators parameterizing spectral multipliers while preserving geometric equivariances.

\subsection{Learning Objective and Data Model}
\label{subsec:learning_objective}

We assume access to samples  $ \{(f_i, \omega_i)\}_{i=1}^n $  drawn from a distribution  $ \mathcal{D} $  over forcings  $ f $  and corresponding solutions  $ \omega = \mathcal{S}_g f $  to \eqref{eq:elliptic_problem}, where the metric  $ g $  is fixed unless stated otherwise. Concretely, the training pairs may be generated by a classical numerical solver on a discretization of  $ \mathcal{M} $ , or obtained from observations when the underlying elliptic model is appropriate.

Our goal is to learn a parameterized operator family  $ \{\widehat{\mathcal{S}}_\theta\}_{\theta \in \Theta} $  mapping  $ 1 $ -form forcings to  $ 1 $ -form solutions,
$$
\widehat{\mathcal{S}}_\theta : H^{s-1}\Omega^1 \to H^{s+1}\Omega^1,
$$
that approximates  $ \mathcal{S}_g $  in a strong, geometry-aware sense. Specifically, we target guarantees of the following form:
Operator approximation in Sobolev topology:  $ \|\widehat{\mathcal{S}}_\theta - \mathcal{S}_g\|_{H^{s-1} \to H^{s+1}} $  is small, with explicit dependence on model capacity (e.g., spectral truncation level) and parameterization error.
Gauge equivariance: the learned operator is equivariant with respect to changes of local orthonormal frames, reflecting the intrinsic nature of  $ 1 $ -forms (formally defined in Section\ref{3}).
Discretization consistency: when  $ \widehat{\mathcal{S}}_\theta $  is implemented on a mesh or point cloud via a convergent discrete Hodge Laplacian, the discrete learned operator converges to its continuum counterpart as resolution increases.

In Sections \ref{4}--\ref{6} we formalize and prove approximation, stability, and continuum--discrete consistency results under the above setup.

\section{Gauge-Equivariant Intrinsic Neural Operators}
\label{3}

This section introduces a class of intrinsic neural operators acting on  $ 1 $ -forms that (i) are defined without reference to coordinates, (ii) are equivariant to local changes of orthonormal frames (gauge transformations), and (iii) admit discretizations that commute with refinement under standard convergent discrete Hodge Laplacians. Our construction is motivated by the observation that the target solution operator  $ \mathcal{S}_g = (\Delta_H + \alpha I)^{-1} $  is a spectral multiplier of  $ \Delta_H $  (cf.\eqref{eq:spectral_representation}), and hence can be approximated by learning appropriate functional calculi of  $ \Delta_H $ .

\subsection{Gauge Structure on  $ 1 $ -Forms and an Equivariance Principle}
\label{subsec:gauge_structure}

Although a differential  $ 1 $ -form is an intrinsic geometric object, any numerical representation necessarily uses local frames. Let  $ \{U_\ell\}_{\ell=1}^L $  be an open cover of  $ \mathcal{M} $  and, on each  $ U_\ell $ , let  $ E_\ell(x) = (e_{\ell,1}(x), \dots, e_{\ell,d}(x)) $  be a smooth local orthonormal frame of  $ T_x\mathcal{M} $ . Any  $ \omega \in \Omega^1(\mathcal{M}) $  admits local coordinates  $ [\omega]_{E_\ell}(x) \in \mathbb{R}^d $  defined by
$$
[\omega]_{E_\ell}(x)_j := \omega(x)\big(e_{\ell,j}(x)\big).
$$
If the frame is changed to  $ E'_\ell(x) = E_\ell(x) R_\ell(x) $  with  $ R_\ell(x) \in O(d) $ , then the coordinate representation transforms as
\begin{equation}
[\omega]_{E'_\ell}(x) = R_\ell(x)^\top [\omega]_{E_\ell}(x).
\label{eq:gauge_transformation}
\end{equation}
This gauge action (pointwise orthogonal transformations of local frames) is unavoidable in computations on tangent/cotangent bundles. A learning architecture intended to represent intrinsic operators on  $ 1 $ -forms should respect this transformation law.

We therefore adopt the following axiom.

\begin{definition}[Gauge equivariance]
\label{def:gauge_equivariance}
An operator  $ \mathcal{T}: \Omega^1(\mathcal{M}) \to \Omega^1(\mathcal{M}) $  is \emph{gauge-equivariant} if for any choice of local orthonormal frames  $ \{E_\ell\} $  and any smooth  $ R_\ell: U_\ell \to O(d) $  defining new frames  $ E'_\ell = E_\ell R_\ell $ , the local representations satisfy
\begin{equation}
[\mathcal{T}(\omega)]_{E'_\ell}(x) = R_\ell(x)^\top [\mathcal{T}(\omega)]_{E_\ell}(x)
\quad\text{whenever}\quad
[\omega]_{E'_\ell}(x) = R_\ell(x)^\top [\omega]_{E_\ell}(x),
\label{eq:gauge_equivariance}
\end{equation}
for all  $ x \in U_\ell $ . Equivalently,  $ \mathcal{T} $  commutes with the pointwise action of  $ O(d) $  on the fiber coordinates.
\end{definition}

In words, changing the computational gauge (local orthonormal frame) should only rotate the coordinate representation of the output accordingly; the operator itself is independent of this choice.

\subsection{Intrinsic Linear Layers via Functional Calculus of  $ \Delta_H $ }
\label{subsec:intrinsic_linear_layers}

A central design choice is to build linear operator layers from intrinsic differential operators. Since  $ \Delta_H $  is self-adjoint and nonnegative on  $ L^2\Omega^1 $ , it admits a spectral decomposition  $ \Delta_H \psi_k = \lambda_k \psi_k $  with an  $ L^2 $ -orthonormal eigenbasis  $ \{\psi_k\}_{k \ge 0} $ . For any bounded measurable function  $ m: [0,\infty) \to \mathbb{R} $ , the functional calculus defines a linear operator  $ m(\Delta_H) $  by
\begin{equation}
m(\Delta_H)\omega
:= \sum_{k \ge 0} m(\lambda_k) \langle \omega, \psi_k \rangle_{L^2} \, \psi_k,
\label{eq:functional_calculus}
\end{equation}
with convergence in  $ L^2\Omega^1 $  (and in  $ H^s\Omega^1 $  under appropriate growth/regularity of  $ m $ ). This construction is coordinate-free and depends only on  $ (\mathcal{M}, g) $ .

We now define the basic learnable intrinsic linear layer as a truncated multiplier.

\begin{definition}[Truncated spectral multiplier layer]
\label{def:truncated_multiplier}
Fix a truncation index  $ K \in \mathbb{N} $  and let  $ m_\theta: [0,\infty) \to \mathbb{R} $  be a parameterized multiplier (e.g., an MLP in the scalar variable  $ \lambda $ ). The \emph{spectral multiplier layer} is
\begin{equation}
\big(\mathcal{T}_{\theta,K} \omega\big)
:= \sum_{k=0}^{K} m_\theta(\lambda_k) \langle \omega, \psi_k \rangle_{L^2} \, \psi_k.
\label{eq:truncated_multiplier}
\end{equation}
\end{definition}

The truncation serves two purposes: (i) it yields a finite-dimensional parameterization compatible with discretizations, and (ii) it isolates a controllable approximation error that can be quantified in Sobolev operator norms.

A complementary and often numerically convenient representation uses heat kernels. If  $ m $  is completely monotone or admits a Laplace transform representation  $ m(\lambda) = \int_0^\infty a(t) e^{-t\lambda} \, dt $ , then
\begin{equation}
m(\Delta_H)\omega(x) = \int_0^\infty a(t) \, (e^{-t\Delta_H}\omega)(x) \, dt,
\label{eq:heat_kernel_rep}
\end{equation}
where  $ e^{-t\Delta_H} $  is the heat semigroup on  $ 1 $ -forms. This emphasizes that our convolution is intrinsic: it is induced by the geometry through the heat flow rather than by Euclidean translation.

\subsection{Gauge-Equivariant Pointwise Nonlinearities on  $ 1 $ -Forms}
\label{subsec:equivariant_nonlinearities}

To construct expressive nonlinear operator architectures, we interleave intrinsic linear layers with pointwise nonlinearities. However, generic coordinate-wise nonlinearities (e.g., applying a scalar activation to each fiber coordinate in an arbitrary gauge) break gauge equivariance. We therefore restrict to nonlinearities that are equivariant under the  $ O(d) $  action in \eqref{eq:gauge_transformation}.

A standard characterization is that continuous  $ O(d) $ -equivariant maps  $ \varphi: \mathbb{R}^d \to \mathbb{R}^d $  are radial:
\begin{equation}
\varphi(v) = \rho(\|v\|) \, v
\quad\text{for some scalar function}\quad \rho: [0,\infty) \to \mathbb{R}.
\label{eq:radial_map}
\end{equation}
Accordingly, we define the fiberwise nonlinearity using intrinsic norms.

\begin{definition}[Fiberwise gauge-equivariant nonlinearity]
\label{def:fiberwise_nonlinearity}
Let  $ \rho_\theta: [0,\infty) \to \mathbb{R} $  be a learnable scalar function. Define  $ \sigma_\theta: \Omega^1(\mathcal{M}) \to \Omega^1(\mathcal{M}) $  by
\begin{equation}
(\sigma_\theta(\omega))(x) := \rho_\theta\big(\|\omega(x)\|_g\big) \, \omega(x),
\label{eq:fiberwise_nonlinearity}
\end{equation}
where  $ \|\omega(x)\|_g^2 = \langle \omega(x), \omega(x) \rangle_g $ . Then  $ \sigma_\theta $  is intrinsic and gauge-equivariant.
\end{definition}

In practice,  $ \rho_\theta $  can be parameterized by a small MLP, a spline, or a polynomial basis with bounded range (useful for stability). Importantly, \eqref{eq:fiberwise_nonlinearity} depends only on the norm induced by  $ g $ , hence it is invariant to local orthonormal frame rotations.

We also allow an intrinsic gated pointwise linear map that remains gauge-equivariant:
\begin{equation}
(\mathcal{B}_\theta \omega)(x) := b_\theta(x) \, \omega(x),
\label{eq:gated_linear}
\end{equation}
where  $ b_\theta: \mathcal{M} \to \mathbb{R} $  is a learned scalar field (potentially produced by another intrinsic operator acting on invariants such as  $ \|\omega\|_g $ , curvature scalars, or known metadata). Since multiplication by a scalar commutes with  $ O(d) $ , \eqref{eq:gated_linear} preserves gauge equivariance.

\subsection{The Gauge-Equivariant Intrinsic Neural Operator Architecture}
\label{subsec:architecture}

We now assemble the above ingredients into a deep operator network. Let  $ \omega_0 := f $  denote the input forcing  $ 1 $ -form. For  $ \ell = 0, \dots, L-1 $ , define the hidden state recursion
\begin{equation}
\omega_{\ell+1}
:= \sigma_{\theta_\ell} \Big( \mathcal{T}_{\theta_\ell,K} \omega_\ell + \mathcal{B}_{\theta_\ell} \omega_\ell \Big),
\label{eq:recursion}
\end{equation}
and output  $ \widehat{\mathcal{S}}_{\theta,K}(f) := \omega_L $ . Here:
\begin{itemize}
    \item  $ \mathcal{T}_{\theta_\ell,K} $  is the truncated intrinsic multiplier layer \eqref{eq:truncated_multiplier},
    \item  $ \mathcal{B}_{\theta_\ell} $  is a gauge-compatible pointwise linear map \eqref{eq:gated_linear}, and
    \item  $ \sigma_{\theta_\ell} $  is the gauge-equivariant nonlinearity \eqref{eq:fiberwise_nonlinearity}.
\end{itemize}

This architecture can be interpreted as learning a nonlinear perturbation of functional calculi of  $ \Delta_H $  while remaining intrinsic. In the elliptic setting, one may also use purely linear versions (setting  $ \sigma $  to identity) to directly approximate resolvents  $ m(\lambda) \approx (\lambda + \alpha)^{-1} $ . We keep the nonlinear form \eqref{eq:recursion} because it allows modeling operator families with parameter dependence, mild nonlinearity in constitutive relations, or data/model mismatch, while preserving the geometric symmetries.

\subsection{Intrinsicness and Gauge Equivariance: Formal Guarantees}
\label{subsec:formal_guarantees}

We record the two basic structural properties required later for stability and discretization analysis.

\begin{proposition}[Intrinsicness]
\label{prop:intrinsicness}
Let  $ (\mathcal{M}, g) $  be a compact smooth Riemannian manifold without boundary. Let  $ \varphi : \mathcal{M} \to \mathcal{M} $  be a diffeomorphism and set  $ g' := \varphi^* g $ . Consider a network operator  $ \widehat{\mathcal{S}}^{(g)}_{\theta,K} $  built by composing layers of the form
$$
\omega \mapsto \sigma_\theta \big( \mathcal{T}^{(g)}_{\theta,K} \omega + \mathcal{B}^{(g)}_\theta \omega \big),
$$
where:

\begin{enumerate}[label=(\arabic*)]
    \item Spectral layer  $ \mathcal{T}^{(g)}_{\theta,K} $  is a truncated functional calculus of the Hodge Laplacian  $ \Delta_{H,g} $  on  $ 1 $ -forms:
    \begin{equation}
    \label{A11}
        \mathcal{T}^{(g)}_{\theta,K} := m_\theta(\Delta_{H,g}) \, \Pi^{(g)}_{\le \Lambda},
    \end{equation}
    for some cutoff  $ \Lambda = \Lambda(K, g) $ , and where  $ \Pi^{(g)}_{\le \Lambda} $  is the  $ L^2 $ -orthogonal spectral projector of  $ \Delta_{H,g} $  onto the direct sum of eigenspaces with eigenvalues  $ \le \Lambda $ . (This covers top- $ K $  modes truncation up to multiplicity; it is equivalent to truncating by the first  $ K $  eigenmodes when eigenvalues are ordered.)

    \item Pointwise linear term  $ \mathcal{B}^{(g)}_\theta $  is multiplication by a scalar field  $ b^{(g)}_\theta \in C^\infty(\mathcal{M}) $ :
    \begin{equation}
    \label{B11}
        (\mathcal{B}^{(g)}_\theta \omega)(x) = b^{(g)}_\theta(x) \, \omega(x),
    \end{equation}
    and the scalar field is natural under pullback:
    \begin{equation}
    \label{C11}
        b^{(g')}_\theta := \varphi^* b^{(g)}_\theta.
    \end{equation}

    \item Nonlinearity  $ \sigma_\theta $  is fiberwise radial with respect to the metric:
    \begin{equation}
    \label{D11}
        (\sigma^{(g)}_\theta(\omega))(x) = \rho_\theta(\|\omega(x)\|_g) \, \omega(x),
    \end{equation}
    where  $ \rho_\theta : [0,\infty) \to \mathbb{R} $  is fixed and  $ |\cdot|_g $  is the pointwise norm induced by  $ g $  on  $ 1 $ -forms.
\end{enumerate}

Then the operator is intrinsic in the sense that
\begin{equation}
    \widehat{\mathcal{S}}^{(g')}_{\theta,K}(\varphi^* f)
    = \varphi^* \big( \widehat{\mathcal{S}}^{(g)}_{\theta,K}(f) \big)
    \qquad \text{for all } f \in \Omega^1(\mathcal{M}).
\end{equation}
\end{proposition}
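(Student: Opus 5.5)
The plan is to reduce everything to a single observation: $\varphi$ is an isometry from $(\mathcal{M},g')$ to $(\mathcal{M},g)$, because $g'=\varphi^*g$. Hence pullback $\varphi^*$ intertwines every natural object built from the metric. I will check the intertwining relation
$$
\mathcal{X}^{(g')}\circ\varphi^* = \varphi^*\circ \mathcal{X}^{(g)}
$$
for each of the three layer ingredients $\mathcal{X}\in\{\mathcal{T}_{\theta,K},\mathcal{B}_\theta,\sigma_\theta\}$ separately. Intertwining is preserved under sums and compositions, so the one-layer map $\omega\mapsto\sigma_\theta(\mathcal{T}_{\theta,K}\omega+\mathcal{B}_\theta\omega)$ intertwines $\varphi^*$ as well. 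An induction over the layers $\ell=0,\dots,L-1$ of \eqref{eq:recursion}, starting from $\omega_0=f$ and $\omega_0'=\varphi^*f$, then gives $\omega_\ell'=\varphi^*\omega_\ell$ for all $\ell$, which is the claim.

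\textbf{Step 1: the pointwise ingredients.}
\begin{itemize}
\item For the gated term, \eqref{C11} gives $\varphi^*(b^{(g)}_\theta\,\omega)=(\varphi^*b^{(g)}_\theta)\,\varphi^*\omega=b^{(g')}_\theta\,\varphi^*\omega$.
\item For the nonlinearity, $(\varphi^*\omega)_x=(d\varphi_x)^*\omega_{\varphi(x)}$, and $d\varphi_x$ is a linear isometry $(T_x\mathcal{M},g'_x)\to(T_{\varphi(x)}\mathcal{M},g_{\varphi(x)})$. Therefore $\|(\varphi^*\omega)(x)\|_{g'}=\|\omega(\varphi(x))\|_g$.
\item Using this and linearity of $(d\varphi_x)^*$, we get $\sigma^{(g')}_\theta(\varphi^*\omega)(x)=\rho_\theta(\|\omega(\varphi(x))\|_g)\,(d\varphi_x)^*\omega_{\varphi(x)}=(\varphi^*\sigma^{(g)}_\theta\omega)(x)$.
\end{itemize}
Both computations are fiberwise and routine.

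\textbf{Step 2: the spectral layer.} I first use naturality of the exterior derivative, $d\varphi^*=\varphi^*d$. Next, since $\varphi$ is orientation-compatible or at least an isometry, $\varphi^*\mathrm{dvol}_g=\mathrm{dvol}_{g'}$ and pointwise inner products are preserved. Hence $\varphi^*:L^2\Omega^k(g)\to L^2\Omega^k(g')$ is unitary. Taking adjoints in $d\varphi^*=\varphi^*d$ gives $\delta_{g'}\varphi^*=\varphi^*\delta_g$; alternatively one can use naturality of the Hodge star up to the sign of orientation, which cancels in $\delta=\pm\star d\star$. It follows that
$$
\Delta_{H,g'}\,\varphi^*=\varphi^*\,\Delta_{H,g},
$$
first on smooth forms and then on the self-adjoint closures. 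So $\Delta_{H,g'}=U\Delta_{H,g}U^{-1}$ with $U=\varphi^*$ unitary. Consequently:
\begin{itemize}
\item the two operators have identical spectra with multiplicities, and $U$ maps each $\lambda$-eigenspace of $\Delta_{H,g}$ onto the $\lambda$-eigenspace of $\Delta_{H,g'}$;
\item therefore $\Pi^{(g')}_{\le\Lambda}=U\,\Pi^{(g)}_{\le\Lambda}U^{-1}$ and $m_\theta(\Delta_{H,g'})=U\,m_\theta(\Delta_{H,g})\,U^{-1}$, directly from \eqref{eq:functional_calculus} with $\psi_k'=\varphi^*\psi_k$.
\end{itemize}

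\textbf{Main obstacle.} I expect the delicate point to be the cutoff $\Lambda=\Lambda(K,g)$, since the statement lets it depend on the metric. The resolution I will use is that $\Lambda$ is determined by the ordered spectrum, namely the $K$-th eigenvalue counted with multiplicity. Since the spectra coincide, $\Lambda(K,g')=\Lambda(K,g)$, and I will state this assumption explicitly.

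A second technical caveat concerns regularity. If $\rho_\theta$ is not smooth at $0$, the layers may leave $\Omega^1(\mathcal{M})$. I would therefore run the whole argument for measurable or $L^2$ forms, where pullback by a diffeomorphism is still well defined and all identities above hold almost everywhere.
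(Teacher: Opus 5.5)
Your proposal is correct and follows essentially the same route as the paper. Both intertwine $\varphi^*$ with $d$ and the codifferential, hence with $\Delta_{H}$; transfer this to the spectral projector and $m_\theta(\Delta_{H})$ via the unitary $U_\varphi=\varphi^*$; verify the scalar gate and the radial nonlinearity fiberwise; and induct over the layers. Two of your additions are small but genuine gains in rigor over the paper's proof:
\begin{itemize}
\item Your adjoint derivation of $\delta_{g'}\varphi^*=\varphi^*\delta_g$ needs no orientation-preservation. The paper's Hodge-star identity with $\varphi^*(d\mathrm{vol}_g)=d\mathrm{vol}_{g'}$ tacitly assumes it.
\item You explicitly require $\Lambda(K,g)$ to be spectrally determined, so that $\Lambda(K,g')=\Lambda(K,g)$. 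The paper uses this silently when it applies $\mathbf{1}_{[0,\Lambda]}$ with the same $\Lambda$ for both metrics.
\end{itemize}
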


\begin{proof}
We first establish the key intertwining identity:
\begin{equation}
\label{E11}
    \varphi^*(\Delta_{H,g} \, \omega) = \Delta_{H,g'} (\varphi^* \omega)
    \qquad \forall \, \omega \in \Omega^1(\mathcal{M}).
\end{equation}
Recall  $ \Delta_{H,g} = d\delta_g + \delta_g d $  on  $ 1 $ -forms, where  $ d $  is the exterior derivative (metric-independent) and  $ \delta_g $  is the codifferential (metric-dependent). Two standard identities are:
Pullback commutes with  $ d $ :
    \begin{equation}
    \label{F11}
        \varphi^*(d\eta) = d(\varphi^*\eta) \quad \forall \, \eta \in \Omega^k(\mathcal{M}).
    \end{equation}
Pullback intertwines Hodge star with pulled-back metric:
    \begin{equation}
    \label{G11}
        \varphi^*(*_g \eta) = *_{g'}(\varphi^*\eta) \quad \forall \, \eta \in \Omega^k(\mathcal{M}).
    \end{equation}
    This follows from the defining property of  $ * $ : for any  $ \alpha, \beta \in \Omega^k $ ,
$$
        \alpha \wedge *_g \beta = \langle \alpha, \beta \rangle_g \, d\mathrm{vol}_g,
$$
    and the facts that  $ \varphi^*(\alpha \wedge \gamma) = \varphi^*\alpha \wedge \varphi^*\gamma $ , and that
     $ \varphi^*\langle \alpha, \beta \rangle_g = \langle \varphi^*\alpha, \varphi^*\beta \rangle_{g'} $ ,
     $ \varphi^*(d\mathrm{vol}_g) = d\mathrm{vol}_{g'} $ .

Using  $ \delta_g = (-1)^{d(k+1)+1} *_g d *_g $  on  $ (k+1) $ -forms, combine \ref{F11}–\ref{G11} to obtain, for any  $ \eta \in \Omega^{k+1} $ ,
\begin{align}
    \varphi^*(\delta_g \eta)
    &= c \, \varphi^*\big( *_g d *_g \eta \big) \nonumber \\
    &= c \, *_{g'} \varphi^*\big( d *_g \eta \big) \nonumber \\
    &= c \, *_{g'} d \, \varphi^*\big( *_g \eta \big) \nonumber \\
    &= c \, *_{g'} d \, *_{g'} (\varphi^*\eta)
    = \delta_{g'}(\varphi^*\eta),
\end{align}
where  $ c = (-1)^{d(k+1)+1} $ . Thus  $ \varphi^* \delta_g = \delta_{g'} \varphi^* $ . Finally,
\begin{align}
    \varphi^*(\Delta_{H,g} \omega)
    &= \varphi^*(d\delta_g \omega + \delta_g d \omega) \nonumber \\
    &= d \, \delta_{g'}(\varphi^*\omega) + \delta_{g'} \, d(\varphi^*\omega)
    = \Delta_{H,g'}(\varphi^*\omega),
\end{align}
which proves \ref{E11}.

Let  $ A_g := \Delta_{H,g} $  viewed as a self-adjoint nonnegative operator on  $ L^2\Omega^1 $ . Define the pullback operator  $ U_\varphi : L^2\Omega^1(\mathcal{M}, g) \to L^2\Omega^1(\mathcal{M}, g') $  by  $ U_\varphi \omega := \varphi^*\omega $ . One checks that  $ U_\varphi $  is unitary because the  $ L^2 $  inner product on forms is preserved when both the metric and volume form are pulled back.

From \ref{E11} we have the operator intertwining relation (on smooth forms, hence by density on the operator domain):
\begin{equation}
\label{H11}
    U_\varphi A_g = A_{g'} U_\varphi.
\end{equation}
A standard consequence of \ref{H11} for self-adjoint operators is that the entire spectral calculus intertwines:
\begin{equation}
    U_\varphi \, F(A_g) = F(A_{g'}) \, U_\varphi
    \quad \text{for all bounded Borel functions } F.
\end{equation}
In particular, taking  $ F = m_\theta $  yields
\begin{equation}
\label{I11}
    \varphi^*\big( m_\theta(\Delta_{H,g}) \, \omega \big) = m_\theta(\Delta_{H,g'}) \, (\varphi^*\omega).
\end{equation}
Likewise, taking  $ F = \mathbf{1}_{[0,\Lambda]} $  gives the spectral projector identity
\begin{equation}
\label{J11}
    \varphi^*\big( \Pi^{(g)}_{\le \Lambda} \omega \big) = \Pi^{(g')}_{\le \Lambda} (\varphi^*\omega).
\end{equation}

Combining \ref{I11} and \ref{J11}, and using that  $ m_\theta(\Delta_{H,g}) $  commutes with  $ \Pi^{(g)}_{\le \Lambda} $  (both are functions of the same self-adjoint operator), we obtain the key commutation for the truncated spectral layer \ref{A11}:
\begin{equation}
\label{K11}
    \varphi^*\big( \mathcal{T}^{(g)}_{\theta,K} \omega \big)
    = \mathcal{T}^{(g')}_{\theta,K} (\varphi^*\omega).
\end{equation}

Pointwise linear term. Using \ref{B11} and the naturality assumption \ref{C11},
\begin{align}
\label{L11}
    \varphi^*\big( \mathcal{B}^{(g)}_\theta \omega \big)
    &= \varphi^*\big( b^{(g)}_\theta \, \omega \big)
    = (\varphi^* b^{(g)}_\theta) \, (\varphi^*\omega)
    = b^{(g')}_\theta \, (\varphi^*\omega)
    = \mathcal{B}^{(g')}_\theta(\varphi^*\omega).
\end{align}

Radial nonlinearity. We need the pointwise norm compatibility:
\begin{equation}
\label{M11}
    \|\varphi^*\omega\|_{g'} = \varphi^*(\|\omega\|_g).
\end{equation}
This is a direct computation from the definition of the pullback metric  $ g' = \varphi^*g $  and the pullback of a  $ 1 $ -form. In particular, the pointwise inner product on  $ 1 $ -forms induced by  $ g $  is preserved under pullback when the metric is pulled back, hence the norm transforms as \ref{M11}. Therefore,
\begin{align}
\label{N11}
    \varphi^*\big( \sigma^{(g)}_\theta(\omega) \big)
    &= \varphi^*\big( \rho_\theta(\|\omega\|_g) \, \omega \big) \nonumber \\
    &= \rho_\theta(\varphi^*\|\omega\|_g) \, (\varphi^*\omega) \nonumber \\
    &= \rho_\theta(\|\varphi^*\omega\|_{g'}) \, (\varphi^*\omega)
    = \sigma^{(g')}_\theta(\varphi^*\omega).
\end{align}

Define the layer map (for fixed  $ \ell $ ) as
$$
    \mathcal{F}^{(g)}_{\theta_\ell,K}(\omega)
    := \sigma^{(g)}_{\theta_\ell} \Big( \mathcal{T}^{(g)}_{\theta_\ell,K} \omega + \mathcal{B}^{(g)}_{\theta_\ell} \omega \Big).
$$
Using \ref{K11}, \ref{L11}, and \ref{N11},
\begin{equation}
\label{O11}
    \varphi^*\big( \mathcal{F}^{(g)}_{\theta_\ell,K}(\omega) \big)
    = \mathcal{F}^{(g')}_{\theta_\ell,K}(\varphi^*\omega).
\end{equation}
Let  $ \omega_0 = f $  and  $ \omega_{\ell+1} = \mathcal{F}^{(g)}_{\theta_\ell,K}(\omega_\ell) $ , and similarly  $ \omega'_0 = \varphi^*f $ ,  $ \omega'_{\ell+1} = \mathcal{F}^{(g')}_{\theta_\ell,K}(\omega'_\ell) $ . We prove by induction that  $ \omega'_\ell = \varphi^* \omega_\ell $  for all  $ \ell $ . The base case  $ \ell = 0 $  holds by definition. If  $ \omega'_\ell = \varphi^* \omega_\ell $ , then
$$
    \omega'_{\ell+1}
    = \mathcal{F}^{(g')}_{\theta_\ell,K}(\omega'_\ell)
    = \mathcal{F}^{(g')}_{\theta_\ell,K}(\varphi^* \omega_\ell)
    = \varphi^*\big( \mathcal{F}^{(g)}_{\theta_\ell,K}(\omega_\ell) \big)
    = \varphi^* \omega_{\ell+1},
$$
where we used \ref{O11}. Thus  $ \omega'_L = \varphi^* \omega_L $ , i.e.,
$$
    \widehat{\mathcal{S}}^{(g')}_{\theta,K}(\varphi^* f)
    = \varphi^* \big( \widehat{\mathcal{S}}^{(g)}_{\theta,K}(f) \big),
$$
\end{proof}

\begin{proposition}[Gauge equivariance]
\label{prop:gauge_equivariance}
Let  $ (\mathcal{M}, g) $  be a compact Riemannian manifold. Fix an open set  $ U \subset \mathcal{M} $  and two smooth local orthonormal frames  $ E = (e_1, \dots, e_d) $  and  $ E' = (e'_1, \dots, e'_d) $  on  $ U $ . Assume they are related by a smooth map  $ R : U \to O(d) $  via
\begin{equation}
\label{A12}
    e'_j(x) = \sum_{k=1}^d e_k(x) \, R_{kj}(x) \qquad (x \in U).
\end{equation}
Let  $ [\omega]_E(x) \in \mathbb{R}^d $  denote the coordinate representation of a  $ 1 $ -form  $ \omega $  in frame  $ E $ , i.e.,
\begin{equation}
    [\omega]_E(x)_j := \omega(x)\big(e_j(x)\big),
    \qquad
    [\omega]_{E'}(x)_j := \omega(x)\big(e'_j(x)\big).
\end{equation}
Consider the layer map
\begin{equation}
\label{B12}
    \mathcal{F}(\omega) := \sigma_\theta \big( \mathcal{T}_{\theta,K} \omega + \mathcal{B}_\theta \omega \big),
\end{equation}
where:
\begin{itemize}
    \item  $ \mathcal{T}_{\theta,K} : \Omega^1(\mathcal{M}) \to \Omega^1(\mathcal{M}) $  is a (truncated) spectral multiplier of  $ \Delta_H $  as in \eqref{eq:truncated_multiplier}, hence a well-defined operator on  $ 1 $ -forms;
    \item  $ \mathcal{B}_\theta \omega = b_\theta \, \omega $  for a scalar field  $ b_\theta $ ;
    \item  $ \sigma_\theta(\omega)(x) = \rho_\theta(\|\omega(x)\|_g) \, \omega(x) $  for a scalar function  $ \rho_\theta $ .
\end{itemize}
Then for every  $ \omega \in \Omega^1(\mathcal{M}) $  and every  $ x \in U $ ,
\begin{equation}
\label{C12}
    [\mathcal{F}(\omega)]_{E'}(x) = R(x)^\top \, [\mathcal{F}(\omega)]_E(x).
\end{equation}
Consequently, the depth- $ L $  network  $ \widehat{\mathcal{S}}_{\theta,K} $  obtained by composing such layers satisfies the gauge-equivariance condition in Definition \ref{def:gauge_equivariance}.
\end{proposition}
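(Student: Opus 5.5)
The key observation is that $\mathcal{F}$ is an operator on intrinsic $1$-forms: its output $\eta := \mathcal{F}(\omega)$ is a well-defined element of $\Omega^1(\mathcal{M})$ that does not depend on any choice of frame. Once this is established, \eqref{C12} reduces to the elementary transformation law \eqref{eq:gauge_transformation} applied to $\eta$. The plan is to verify frame-independence layer component by layer component, then compute the coordinate change, and finally induct over depth.

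\textbf{Step 1 (frame-independence of each component).}
\begin{itemize}
\item The spectral layer $\mathcal{T}_{\theta,K}$ in \eqref{eq:truncated_multiplier} involves only three ingredients: the eigenforms $\psi_k$ of $\Delta_H$, the eigenvalues $\lambda_k$, and the $L^2$ inner product $\langle \omega,\psi_k\rangle_{L^2} = \int \langle \omega,\psi_k\rangle_g\, d\mathrm{vol}_g$. All of these are defined from $(\mathcal{M},g)$ alone, so $\mathcal{T}_{\theta,K}\omega \in \Omega^1(\mathcal{M})$ is frame-free.
\item The gate $\mathcal{B}_\theta\omega = b_\theta\,\omega$ is multiplication by a scalar field, which is clearly frame-free.
\item The nonlinearity $\sigma_\theta$ uses $\|\omega(x)\|_g$. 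In any orthonormal frame this equals the Euclidean norm $|[\omega]_E(x)|$. Because $R(x)\in O(d)$, we have $|R^\top v| = |v|$, so the norm is gauge-invariant.
\end{itemize}
Hence $\eta$ is a single intrinsic $1$-form.

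\textbf{Step 2 (coordinate computation).} Fix $x\in U$. Using \eqref{A12} and linearity of $\eta(x)$,
\[
[\eta]_{E'}(x)_j = \eta(x)\Big(\sum_k e_k(x) R_{kj}(x)\Big) = \sum_k R_{kj}(x)\,[\eta]_E(x)_k = \big(R(x)^\top [\eta]_E(x)\big)_j .
\]
This is exactly \eqref{C12}.

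The same identity can also be checked directly on coordinates, which makes the radial characterization \eqref{eq:radial_map} explicit. Write $v := [\mathcal{T}_{\theta,K}\omega + b_\theta\omega]_E(x)$. Then
\[
[\mathcal{F}(\omega)]_{E'}(x) = \rho_\theta\big(|R^\top v|\big)\, R^\top v = R^\top\big(\rho_\theta(|v|)\, v\big).
\]

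\textbf{Step 3 (depth $L$).} Each layer output is again an intrinsic $1$-form. The gauge law therefore holds for $\omega_{\ell+1}$ whenever it holds for $\omega_\ell$, and induction on $\ell$ gives Definition~\ref{def:gauge_equivariance} for $\widehat{\mathcal{S}}_{\theta,K}$.

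\textbf{Main obstacle.} The only subtle point is Step 1 for the spectral layer. One must confirm that $\mathcal{T}_{\theta,K}$ does not depend on which orthonormal eigenbasis is chosen within degenerate eigenspaces. The sum $\sum_{k\le K}$ can in fact depend on this choice if the truncation at $K$ cuts through an eigenspace. I would handle this as in Proposition~\ref{prop:intrinsicness}: interpret the truncation as $m_\theta(\Delta_H)\Pi_{\le\Lambda}$, a Borel function of the self-adjoint operator $\Delta_H$. This operator is basis-independent, and it is frame-independent because $\Delta_H$ itself is. Gauge equivariance then holds without any further assumption.
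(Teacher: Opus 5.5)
Your proposal is correct and follows essentially the paper's proof. The paper likewise derives $[\eta]_{E'}(x)=R(x)^\top[\eta]_E(x)$ for an arbitrary $1$-form and applies it directly to $\mathcal{T}_{\theta,K}\omega$. It checks $\mathcal{B}_\theta$ and $\sigma_\theta$ by the same coordinate computations you give, using that the norm is preserved under $R(x)\in O(d)$, and then inducts over layers.

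Your degenerate-eigenspace caveat is harmless but not needed for this statement. For any fixed choice of eigenforms $\psi_k$, the truncated layer is already a frame-free map $\Omega^1(\mathcal{M})\to\Omega^1(\mathcal{M})$. The $\Pi_{\le\Lambda}$ reformulation therefore bears on whether the layer is canonical, not on gauge equivariance.
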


\begin{proof}
Fix  $ \omega \in \Omega^1(\mathcal{M}) $  and  $ x \in U $ . Using \ref{A12} and linearity of  $ \omega(x) : T_x\mathcal{M} \to \mathbb{R} $ ,
\begin{align}
    [\omega]_{E'}(x)_j
    &= \omega(x)(e'_j(x))
    = \omega(x)\!\left( \sum_{k=1}^d e_k(x) R_{kj}(x) \right) \nonumber \\
    &= \sum_{k=1}^d R_{kj}(x) \, \omega(x)(e_k(x))
    = \sum_{k=1}^d R_{kj}(x) \, [\omega]_E(x)_k.
\end{align}
In vector form,
\begin{equation}
    [\omega]_{E'}(x) = R(x)^\top [\omega]_E(x).
    \tag{3.1.1}
\end{equation}
This identity holds for any  $ 1 $ -form, independently of how the form is produced.
Spectral multiplier  $ \mathcal{T}_{\theta,K} $ .
Since  $ \mathcal{T}_{\theta,K}\omega \in \Omega^1(\mathcal{M}) $  is itself a  $ 1 $ -form (by definition of the spectral expansion), we can apply\eqref{eq:gauge_transformation} directly with  $ \eta = \mathcal{T}_{\theta,K}\omega $ :
\begin{equation}
\label{D12}
    [\mathcal{T}_{\theta,K}\omega]_{E'}(x) = R(x)^\top [\mathcal{T}_{\theta,K}\omega]_E(x).
\end{equation}
No additional structure is needed: this follows purely because  $ \mathcal{T}_{\theta,K} $  maps forms to forms in an intrinsic manner (i.e., it does not depend on the chosen frame).

Scalar multiplication  $ \mathcal{B}_\theta \omega = b_\theta \, \omega $ .
For any  $ x \in U $  and  $ j \in \{1,\dots,d\} $ , 
$$
    [(\mathcal{B}_\theta \omega)]_E(x)_j
    = (b_\theta \omega)(x)(e_j(x))
    = b_\theta(x) \, \omega(x)(e_j(x))
    = b_\theta(x) \, [\omega]_E(x)_j.
$$

Thus  $ [\mathcal{B}_\theta \omega]_E(x) = b_\theta(x) [\omega]_E(x) $ , and similarly  $ [\mathcal{B}_\theta \omega]_{E'}(x) = b_\theta(x) [\omega]_{E'}(x) $ . Using \eqref{eq:gauge_transformation},
\begin{align}
\label{E12}
    [\mathcal{B}_\theta \omega]_{E'}(x)
    &= b_\theta(x) \, [\omega]_{E'}(x)
    = b_\theta(x) \, R(x)^\top [\omega]_E(x) \nonumber \\
    &= R(x)^\top \big( b_\theta(x) [\omega]_E(x) \big)
    = R(x)^\top [\mathcal{B}_\theta \omega]_E(x).
\end{align}

Radial nonlinearity  $ \sigma_\theta(\omega)(x) = \rho_\theta(\|\omega(x)\|_g) \, \omega(x) $ .
Because  $ E $  and  $ E' $  are orthonormal, the coordinate vector of a  $ 1 $ -form has Euclidean norm equal to the metric norm:
\begin{equation}
    \|\omega(x)\|_g^2 = \sum_{j=1}^d [\omega]_E(x)_j^2 = \|[\omega]_E(x)\|_2^2,
    \qquad
    \|\omega(x)\|_g^2 = \|[\omega]_{E'}(x)\|_2^2.
\end{equation}
Moreover, since  $ R(x) \in O(d) $ ,
\begin{equation}
    \|[\omega]_{E'}(x)\|_2 = \|R(x)^\top [\omega]_E(x)\|_2 = \|[\omega]_E(x)\|_2.
\end{equation}
Now compute in frame  $ E $ :
\begin{equation}
    [\sigma_\theta(\omega)]_E(x) = \rho_\theta(\|\omega(x)\|_g) \, [\omega]_E(x).
\end{equation}
In frame  $ E' $ :
\begin{align}
\label{F12}
    [\sigma_\theta(\omega)]_{E'}(x)
    &= \rho_\theta(\|\omega(x)\|_g) \, [\omega]_{E'}(x) \nonumber \\
    &= \rho_\theta(\|\omega(x)\|_g) \, R(x)^\top [\omega]_E(x) \nonumber \\
    &= R(x)^\top \big( \rho_\theta(\|\omega(x)\|_g) \, [\omega]_E(x) \big) \nonumber \\
    &= R(x)^\top [\sigma_\theta(\omega)]_E(x),
\end{align}
which proves gauge equivariance of  $ \sigma_\theta $ .

Let  $ \eta := \mathcal{T}_{\theta,K}\omega + \mathcal{B}_\theta\omega $ . By linearity of coordinate representation and \ref{D12}–\ref{E12},
\begin{equation}
    [\eta]_{E'}(x) = R(x)^\top [\eta]_E(x).
\end{equation}
Applying \ref{F12} to  $ \sigma_\theta(\eta) $  yields
 
$$
    [\mathcal{F}(\omega)]_{E'}(x)
    = [\sigma_\theta(\eta)]_{E'}(x)
    = R(x)^\top [\sigma_\theta(\eta)]_E(x)
    = R(x)^\top [\mathcal{F}(\omega)]_E(x),
$$
which is exactly \ref{C12}.

Let the network be defined recursively by  $ \omega_0 = f $  and  $ \omega_{\ell+1} = \mathcal{F}_\ell(\omega_\ell) $  for  $ \ell = 0, \dots, L-1 $ , where each  $ \mathcal{F}_\ell $  has the form \ref{B12} (with its own parameters). 
That for each  $ \ell $ ,
$$
    [\omega_{\ell+1}]_{E'}(x) = R(x)^\top [\omega_{\ell+1}]_E(x)
    \quad \text{whenever} \quad
    [\omega_\ell]_{E'}(x) = R(x)^\top [\omega_\ell]_E(x).
$$
By induction on  $ \ell $ , this holds for all layers, and in particular for the output  $ \omega_L = \widehat{\mathcal{S}}_{\theta,K}(f) $ . Hence the full network is gauge-equivariant.
\end{proof}

\subsection{Discrete Realization and Design for Consistency}
\label{subsec:discrete_preview}

While the discretization analysis is deferred to Section \ref{6}, we emphasize a design principle: the architecture should admit a discrete counterpart obtained by replacing  $ \Delta_H $  with a convergent discrete Hodge Laplacian  $ \Delta_{H,h} $  (e.g., from discrete exterior calculus or compatible finite elements), and replacing  $ \langle\cdot,\cdot\rangle_{L^2} $  with the associated discrete inner product. Under standard spectral convergence assumptions, the truncated multiplier layer \eqref{eq:truncated_multiplier} admits a discrete implementation
\begin{equation}
\big(\mathcal{T}_{\theta,K}^{(h)} \omega_h\big)
:= \sum_{k=0}^{K} m_\theta(\lambda_{k,h}) \langle \omega_h, \psi_{k,h} \rangle_{h} \, \psi_{k,h},
\label{eq:discrete_multiplier}
\end{equation}
which mirrors the continuum definition and is the key to establishing continuum--discrete commutation bounds for the learned operator.

\subsection{Discussion: Relation to the Target Elliptic Solution Operator}
\label{subsec:discussion_target_operator}

The shifted Hodge--Poisson solution operator  $ \mathcal{S}_g $  is itself a spectral multiplier  $ m_\star(\lambda) = (\lambda + \alpha)^{-1} $ . Consequently, when the network is specialized to the linear case (identity  $ \sigma $ ,  $ \mathcal{B} \equiv 0 $ ), the model class contains direct approximants of  $ \mathcal{S}_g $  via learned multipliers  $ m_\theta $  on a truncated spectrum. The nonlinear extension \eqref{eq:recursion} retains intrinsicness and gauge equivariance while enlarging expressivity to cover (i) parameterized families of elliptic operators (e.g., variable coefficients), (ii) mild nonlinearity arising from modeling error or constitutive laws, and (iii) practical constraints where only partial spectral information is used and residual corrections are beneficial.

In the next sections we quantify approximation error in Sobolev operator norms (Section\ref{4}), establish stability under metric perturbations (Section\ref{5}), and prove discretization consistency for the discrete realization\eqref{eq:discrete_multiplier} (Section\ref{6}).

\section{Sobolev Operator Approximation}
\label{4}
This section establishes quantitative approximation bounds for the elliptic solution operator
\begin{equation}
\mc S_g = (\dH + \alpha I)^{-1}
\end{equation}
by the gauge-equivariant intrinsic neural operator class introduced in Section \ref{3}. The analysis is carried out in Sobolev topologies on $1$-forms and is stated in operator norms between Sobolev spaces. A key point is that uniform operator-norm convergence on the full space ($H^{s-1}\Om{1}\to H^{s+1}\Om{1}$) is impossible for purely spectral truncations; convergence holds on smoother input classes (compact embeddings), which is the relevant regime for PDE solution operators and for data distributions with controlled regularity.

\subsection{Sobolev Norms via the Hodge Spectrum}
Let $\{\psi_k\}_{k\ge 0}\subset \Om{1}(\mc M)$ be an $L^2$-orthonormal eigenbasis of the Hodge Laplacian $\dH$ on $1$-forms, with eigenvalues $0\le \lambda_0\le \lambda_1\le\cdots$, $\lambda_k\to\infty$:
\begin{equation}
\dH\psi_k = \lambda_k\psi_k.
\end{equation}
For $r\in\RR$, we use the standard spectral characterization of Sobolev norms on $1$-forms:
\begin{equation}
\label{X1}
\|\omega\|_{H^r\Om{1}}^2 := \sum_{k\ge 0} (1+\lambda_k)^r \abs{\inner{\omega}{\psi_k}}^2,
\end{equation}
which is equivalent to the usual chart-based definition on compact manifolds. In particular, for any bounded Borel function $m:[0,\infty)\to\RR$, the operator $m(\dH)$ satisfies
\begin{equation}
\inner{m(\dH)\omega}{\psi_k} = m(\lambda_k) \inner{\omega}{\psi_k}.
\end{equation}

We also record the smoothing structure of the shifted resolvent $m_\star(\lambda)=(\lambda+\alpha)^{-1}$:
\begin{equation}
\mc S_g = m_\star(\dH),\qquad m_\star(\lambda)=\frac{1}{\lambda+\alpha}.
\end{equation}

\subsection{Truncated Functional Calculus and the Approximation Family}
Because implementations (and learning) typically operate on a finite spectral subspace, we introduce spectral cutoffs in a basis-independent manner.

For $\Lambda>0$, let $\Pi_{\le \Lambda}$ denote the $L^2$-orthogonal spectral projector onto the direct sum of eigenspaces with eigenvalues $\le \Lambda$:
\begin{equation}
\Pi_{\le \Lambda}\omega := \sum_{\lambda_k\le \Lambda}\inner{\omega}{\psi_k}\psi_k.
\end{equation}
Given a multiplier $m_\theta$, define the truncated intrinsic operator
\begin{equation}
\label{H3}
\widehat{\mc S}_{\theta,\Lambda} := m_\theta(\dH) \Pi_{\le \Lambda}.
\end{equation}
This covers the linear specialization of the architecture in Section \ref{3} (with identity nonlinearity and $\mc B\equiv 0$), and it is the core object for Sobolev approximation of the elliptic resolvent.

We decompose the approximation error into (i) spectral truncation bias and (ii) multiplier approximation error:
\begin{equation}
\label{X}
\mc S_g - \widehat{\mc S}_{\theta,\Lambda} = \underbrace{\mc S_g(I-\Pi_{\le \Lambda})}_{\text{truncation bias}} + \underbrace{\lr{m_\star(\dH)-m_\theta(\dH)}\Pi_{\le \Lambda}}_{\text{multiplier error}}.
\end{equation}

\subsection{Mapping Properties of the Elliptic Resolvent}
We first state the basic boundedness of $\mc S_g$ between Sobolev spaces (a standard consequence of elliptic regularity, here recorded in the spectral norm framework).

\begin{lemma}[Sobolev boundedness of $\mc S_g$]
\label{le4.1}
Let $(\mc M,g)$ be compact and without boundary, and let $\alpha>0$. For any $s\in\RR$, the resolvent
\[
\mc S_g := (\dH+\alpha I)^{-1}
\]
extends uniquely to a bounded linear operator
\[
\mc S_g : H^{s-1}\Om{1} \To H^{s+1}\Om{1},
\]
and there exists a constant $C_\alpha<\infty$ (depending only on $\alpha$) such that
\begin{equation}
\label{H}
\|\mc S_g f\|_{H^{s+1}} \le C_\alpha \|f\|_{H^{s-1}}
\qquad \forall f\in H^{s-1}\Om{1}.
\end{equation}
\end{lemma}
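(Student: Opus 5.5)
The plan is to work entirely in the spectral characterization \eqref{X1} of the Sobolev norms, under which $\mc S_g$ is diagonal. On smooth forms (finite spectral sums), $\mc S_g f = \sum_k (\lambda_k+\alpha)^{-1}\inner{f}{\psi_k}\psi_k$. Therefore
\[
\|\mc S_g f\|_{H^{s+1}}^2=\sum_{k\ge0}(1+\lambda_k)^{s+1}(\lambda_k+\alpha)^{-2}\abs{\inner{f}{\psi_k}}^2
=\sum_{k\ge0}\Big(\frac{1+\lambda_k}{\lambda_k+\alpha}\Big)^{2}(1+\lambda_k)^{s-1}\abs{\inner{f}{\psi_k}}^2 .
\]
The whole estimate thus reduces to bounding the scalar weight $q(\lambda)=(1+\lambda)/(\lambda+\alpha)$ uniformly on $[0,\infty)$.

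This is an elementary calculus step. The function $q$ is monotone, because $q'(\lambda)=(\alpha-1)/(\lambda+\alpha)^2$ has constant sign. Its value at $0$ is $1/\alpha$ and its limit at infinity is $1$. Hence
\[
\sup_{\lambda\ge0}q(\lambda)=\max\{1,\alpha^{-1}\}=:C_\alpha ,
\]
which depends only on $\alpha$. It is independent of $s$, of $g$, and of the particular eigenbasis. This gives \eqref{H} on the dense subspace of finite spectral sums, or on all smooth forms, where the series converge.

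For the extension, I would first note that finite linear combinations of the $\psi_k$ are dense in $H^{s-1}\Om{1}$ for every real $s$. This holds because $H^{s-1}\Om{1}$ is, via \eqref{X1}, isometric to a weighted $\ell^2$ space of coefficient sequences. The bounded densely defined operator then extends uniquely by continuity (BLT theorem), and the extension is still given by the diagonal formula on coefficients. Uniqueness is automatic from density.

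The only point needing care, and which I expect to be the main (mild) obstacle, is the meaning of $H^{s-1}$ for negative orders. There, $\inner{f}{\psi_k}$ must be read as the distributional pairing, and one must check that this is compatible with the $L^2$ definition of $\mc S_g$. I would handle it by defining $H^r\Om{1}$ as the completion of $\Om{1}$ under \eqref{X1}. Since each $\psi_k$ is smooth, the coefficient functionals $f\mapsto\inner{f}{\psi_k}$ extend continuously to the completion, so everything is consistent. Equivalence with chart-based norms is taken as given, as the paper states.
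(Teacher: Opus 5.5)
Your proposal is correct and follows essentially the same route as the paper. Both arguments diagonalize $\mathcal{S}_g$ in the spectral Sobolev norm \eqref{X1}, bound the weight $\sup_{\lambda\ge 0}(1+\lambda)/(\lambda+\alpha)=\max\{1,\alpha^{-1}\}$ via the sign of its derivative, and extend by density to the completion, with uniqueness coming from density. The only differences are cosmetic: the paper uses all smooth forms $\Omega^1(\mathcal M)$ as the dense subspace where you use finite eigen-sums, and your remark about the coefficient functionals on negative-order spaces is extra care the paper leaves implicit.
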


\begin{proof}
Since $\mc M$ is compact and $\dH$ is an elliptic, self-adjoint, nonnegative operator on $L^2\Om{1}$, its spectrum is discrete and there exists an $L^2$-orthonormal basis $\{\psi_k\}_{k\ge 0}\subset \Om{1}(\mc M)$ with eigenvalues $0\le \lambda_0\le \lambda_1\le\cdots$, $\lambda_k\to\infty$, such that
\[
\dH \psi_k = \lambda_k \psi_k.
\]
Every $f\in L^2\Om{1}$ admits an expansion $f=\sum_{k\ge 0} f_k \psi_k$ with $f_k:=\inner{f}{\psi_k}$, converging in $L^2\Om{1}$.

We define Sobolev norms via the spectral calculus (equivalent to the standard chart-based definition on compact manifolds):
\begin{equation}
\label{A}
\|f\|_{H^r\Om{1}}^2 := \sum_{k\ge 0} (1+\lambda_k)^r |f_k|^2,\qquad r\in\RR.
\end{equation}
This yields a Hilbert space $H^r\Om{1}$ as the completion of $\Om{1}(\mc M)$ under $\|\cdot\|_{H^r}$.

For smooth $f\in \Om{1}(\mc M)$ (hence in $L^2\Om{1}$), define
\begin{equation}
\label{B}
\mc S_g f := \sum_{k\ge 0} \frac{1}{\lambda_k+\alpha} f_k \psi_k.
\end{equation}
This series converges in $L^2\Om{1}$ because $(\lambda_k+\alpha)^{-1}$ is bounded by $\alpha^{-1}$ and $\sum |f_k|^2<\infty$.

Let $f\in \Om{1}(\mc M)$ and write $f=\sum f_k\psi_k$. Then by \ref{A} and \ref{B},
\begin{equation}
\label{C}
\|\mc S_g f\|_{H^{s+1}}^2
= \sum_{k\ge 0} (1+\lambda_k)^{s+1}\left|\frac{1}{\lambda_k+\alpha}\right|^2 |f_k|^2.
\end{equation}
Factor the weight as
\begin{equation}
\label{D}
(1+\lambda_k)^{s+1}\frac{1}{(\lambda_k+\alpha)^2}
=\left(\frac{(1+\lambda_k)^2}{(\lambda_k+\alpha)^2}\right) (1+\lambda_k)^{s-1}.
\end{equation}
Substituting \ref{D} into \ref{C} gives
\begin{equation}
\label{E}
\|\mc S_g f\|_{H^{s+1}}^2
= \sum_{k\ge 0} \left(\frac{(1+\lambda_k)^2}{(\lambda_k+\alpha)^2}\right) (1+\lambda_k)^{s-1}|f_k|^2
\le \left(\sup_{\lambda\ge 0}\frac{(1+\lambda)^2}{(\lambda+\alpha)^2}\right) \|f\|_{H^{s-1}}^2.
\end{equation}
Define
\begin{equation}
C_\alpha := \left(\sup_{\lambda\ge 0}\frac{1+\lambda}{\lambda+\alpha}\right).
\end{equation}
Then \ref{E} yields
\begin{equation}
\label{G}
\|\mc S_g f\|_{H^{s+1}} \le C_\alpha \|f\|_{H^{s-1}} \qquad \forall f\in \Om{1}(\mc M).
\end{equation}

It remains to check that $C_\alpha<\infty$. Consider $r(\lambda):=\frac{1+\lambda}{\lambda+\alpha}$ for $\lambda\ge 0$. This is continuous on $[0,\infty)$, and
\[
\lim_{\lambda\to\infty} r(\lambda)=1,\qquad r(0)=\frac{1}{\alpha}.
\]
Moreover,
\[
r'(\lambda)=\frac{\alpha-1}{(\lambda+\alpha)^2},
\]
so $r$ is monotone increasing if $\alpha>1$, monotone decreasing if $\alpha<1$, and constant if $\alpha=1$. Hence
\[
\sup_{\lambda\ge 0} r(\lambda) = \max\left\{1,\frac{1}{\alpha}\right\}<\infty,
\]
so $C_\alpha=\max\left\{1,\alpha^{-1}\right\}$ is a valid choice (and any larger constant also works). Therefore \ref{G} holds with finite $C_\alpha$.

The space $\Om{1}(\mc M)$ is dense in $H^{s-1}\Om{1}$ by construction. Let $f\in H^{s-1}\Om{1}$ and choose a sequence $\{f^{(n)}\}\in \Om{1}(\mc M)$ such that $f^{(n)}\to f$ in $H^{s-1}$. Then by \ref{G},
\[
\|\mc S_g f^{(n)}-\mc S_g f^{(m)}\|_{H^{s+1}}
\le C_\alpha \|f^{(n)}-f^{(m)}\|_{H^{s-1}},
\]
so $\{\mc S_g f^{(n)}\}$ is Cauchy in $H^{s+1}\Om{1}$ and converges to a limit we define as $\mc S_g f$. The bound \ref{H} follows by taking limits:
\[
\|\mc S_g f\|_{H^{s+1}}
= \lim_{n\to\infty}\|\mc S_g f^{(n)}\|_{H^{s+1}}
\le \lim_{n\to\infty} C_\alpha \|f^{(n)}\|_{H^{s-1}}
= C_\alpha \|f\|_{H^{s-1}}.
\]
Uniqueness of the extension follows because any two bounded linear extensions coincide on the dense subspace $\Om{1}(\mc M)$.
\end{proof}
\subsection{Spectral Truncation Bias on Smoother Input Classes}
A key subtlety is that $\mc S_g(I-\Pi_{\le \Lambda})$ does not converge to zero in operator norm ($H^{s-1}\to H^{s+1}$) as $\Lambda\to\infty$; one can concentrate $f$ on arbitrarily high eigenmodes, for which truncation discards non-negligible components. Convergence holds on smoother input spaces $H^{s-1+\gamma}$ with $\gamma>0$, which is the natural regime for compactness and for many data models.

\begin{lemma}[Truncation bias in Sobolev operator norm]
\label{le4.2}
Let $s\in\RR$ and $\gamma>0$. For $\Lambda>0$, let $\Pi_{\le \Lambda}$ be the $L^2$-orthogonal spectral projector of $\dH$ on $1$-forms onto the span of eigenspaces with eigenvalues $\le \Lambda$. Then
\begin{equation}
\big\|\mc S_g(I-\Pi_{\le \Lambda})\big\|_{H^{s-1+\gamma}\To H^{s+1}}
\le C_\alpha (1+\Lambda)^{-\gamma/2},
\end{equation}
where $\mc S_g=(\dH+\alpha I)^{-1}$ with $\alpha>0$, and $C_\alpha<\infty$ depends only on $\alpha$ (and the choice of equivalent Sobolev norm convention).
\end{lemma}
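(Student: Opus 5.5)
The plan is to work entirely in the spectral Sobolev framework \eqref{X1}, exactly as in the proof of Lemma~\ref{le4.1}, since both $\mc S_g$ and $I-\Pi_{\le\Lambda}$ are functions of $\dH$ and are therefore diagonal in the eigenbasis $\{\psi_k\}$. For smooth $f=\sum_k f_k\psi_k$, the operator $\mc S_g(I-\Pi_{\le\Lambda})$ acts by the multiplier $(\lambda+\alpha)^{-1}\mathbf{1}_{\lambda>\Lambda}$. This gives the identity
\[
\big\|\mc S_g(I-\Pi_{\le\Lambda})f\big\|_{H^{s+1}}^2=\sum_{\lambda_k>\Lambda}\frac{(1+\lambda_k)^{s+1}}{(\lambda_k+\alpha)^2}\,|f_k|^2 .
\]

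The key step is to factor the weight into three pieces:
\[
\frac{(1+\lambda_k)^{s+1}}{(\lambda_k+\alpha)^2}=\frac{(1+\lambda_k)^2}{(\lambda_k+\alpha)^2}\cdot(1+\lambda_k)^{-\gamma}\cdot(1+\lambda_k)^{s-1+\gamma}.
\]
The first factor is bounded by $C_\alpha^2$ with $C_\alpha=\max\{1,\alpha^{-1}\}$, by the monotonicity computation for $r(\lambda)=(1+\lambda)/(\lambda+\alpha)$ already done in Lemma~\ref{le4.1}. On the range $\lambda_k>\Lambda$ the second factor is at most $(1+\Lambda)^{-\gamma}$, because $\gamma>0$ makes $t\mapsto(1+t)^{-\gamma}$ decreasing. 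Summing the third factor against $|f_k|^2$ gives at most $\|f\|_{H^{s-1+\gamma}}^2$. Taking square roots yields
\[
\big\|\mc S_g(I-\Pi_{\le\Lambda})f\big\|_{H^{s+1}}\le C_\alpha(1+\Lambda)^{-\gamma/2}\|f\|_{H^{s-1+\gamma}}
\]
for all smooth $f$.

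The last step is to extend to all of $H^{s-1+\gamma}\Om{1}$ by density of $\Om{1}(\mc M)$, as at the end of Lemma~\ref{le4.1}. The one point to check there is that $\Pi_{\le\Lambda}$ is well defined on distributional forms. Since $\Pi_{\le\Lambda}$ has finite rank spanned by smooth eigenforms, it extends continuously to every $H^r$, with the pairing $\inner{\cdot}{\psi_k}$ interpreted as the duality pairing. So the composite operator is the unique bounded extension and satisfies the same bound.

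I do not expect a genuine obstacle; the argument is a weighted $\ell^2$ estimate. The only delicate points are bookkeeping. One must use the strict cutoff $\lambda_k>\Lambda$ (i.e.\ the range of $I-\Pi_{\le\Lambda}$) so that the tail factor is controlled by $(1+\Lambda)^{-\gamma}$. One must also keep $C_\alpha$ independent of $s$, $\gamma$ and $\Lambda$, which the factorization guarantees.
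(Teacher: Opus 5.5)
Your proposal is correct and follows the paper's proof essentially line by line. It uses the same diagonal expression over $\lambda_k>\Lambda$, the same three-factor splitting of the weight, and the same three bounds: $\max\{1,\alpha^{-1}\}^2$, $(1+\Lambda)^{-\gamma}$, and $\|f\|_{H^{s-1+\gamma}}^2$. The only difference is that you prove the estimate for smooth $f$ and extend by density, whereas the paper works directly with the eigen-coefficients of $f\in H^{s-1+\gamma}\Om{1}$; either is fine.
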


\begin{proof}
Let $\{\psi_k\}_{k\ge 0}$ be an $L^2$-orthonormal eigenbasis of $\dH$ on $1$-forms, $\dH\psi_k=\lambda_k\psi_k$ with $0\le\lambda_k\upuparrow\infty$. For $f\in H^{s-1+\gamma}\Om{1}$, write
\[
f=\sum_{k\ge 0} f_k \psi_k,\qquad f_k:=\inner{f}{\psi_k}.
\]
By definition of $\Pi_{\le\Lambda}$,
\[
(I-\Pi_{\le\Lambda})f=\sum_{\lambda_k>\Lambda} f_k\psi_k.
\]
Using the spectral representation of $\mc S_g$,
\[
\mc S_g(I-\Pi_{\le\Lambda})f
=\sum_{\lambda_k>\Lambda} \frac{1}{\lambda_k+\alpha} f_k \psi_k.
\]

Using the spectral Sobolev norm convention
\[
\|u\|_{H^r}^2=\sum_{k\ge 0}(1+\lambda_k)^r|u_k|^2,
\]
we have
\begin{equation}
\label{A1}
\big\|\mc S_g(I-\Pi_{\le\Lambda})f\big\|_{H^{s+1}}^2
=\sum_{\lambda_k>\Lambda}(1+\lambda_k)^{s+1}\frac{1}{(\lambda_k+\alpha)^2}|f_k|^2.
\end{equation}

Rewrite the weight in \ref{A1} as
\[
(1+\lambda_k)^{s+1}\frac{1}{(\lambda_k+\alpha)^2}
= \left(\frac{(1+\lambda_k)^2}{(\lambda_k+\alpha)^2}\right) (1+\lambda_k)^{s-1}.
\]
Insert an additional factor $(1+\lambda_k)^{-\gamma}(1+\lambda_k)^{\gamma}$ to match the input norm:
\[
(1+\lambda_k)^{s-1} = (1+\lambda_k)^{s-1+\gamma} (1+\lambda_k)^{-\gamma}.
\]
Therefore,
\begin{equation}
\label{B1}
(1+\lambda_k)^{s+1}\frac{1}{(\lambda_k+\alpha)^2}
= \left(\frac{(1+\lambda_k)^2}{(\lambda_k+\alpha)^2}\right) (1+\lambda_k)^{-\gamma} (1+\lambda_k)^{s-1+\gamma}.
\end{equation}

Combine \ref{A1}–\ref{B1} to obtain
\[
\big\|\mc S_g(I-\Pi_{\le\Lambda})f\big\|_{H^{s+1}}^2
= \sum_{\lambda_k>\Lambda}
\left(\frac{(1+\lambda_k)^2}{(\lambda_k+\alpha)^2}\right) (1+\lambda_k)^{-\gamma} (1+\lambda_k)^{s-1+\gamma}|f_k|^2.
\]
Hence,
\begin{equation}
\label{C1}
\big\|\mc S_g(I-\Pi_{\le\Lambda})f\big\|_{H^{s+1}}^2
\le
\left(\sup_{\lambda\ge 0}\frac{(1+\lambda)^2}{(\lambda+\alpha)^2}\right)
\left(\sup_{\lambda>\Lambda}(1+\lambda)^{-\gamma}\right)
\sum_{\lambda_k>\Lambda}(1+\lambda_k)^{s-1+\gamma}|f_k|^2.
\end{equation}

Since $\gamma>0$, $(1+\lambda)^{-\gamma}$ is decreasing, so
\begin{equation}
\label{D1}
\sup_{\lambda>\Lambda}(1+\lambda)^{-\gamma}=(1+\Lambda)^{-\gamma}.
\end{equation}
Moreover, the first supremum is finite because $\alpha>0$ and the rational function $\frac{1+\lambda}{\lambda+\alpha}$ is bounded on $[0,\infty)$; explicitly,
\begin{equation}
\label{E1}
\sup_{\lambda\ge 0}\frac{(1+\lambda)^2}{(\lambda+\alpha)^2}
= \left(\sup_{\lambda\ge 0}\frac{1+\lambda}{\lambda+\alpha}\right)^2
\le \max\left\{1,\alpha^{-1}\right\}^2.
\end{equation}
Finally, the sum over $\{\lambda_k>\Lambda\}$ is bounded by the full Sobolev norm:
\begin{equation}
\sum_{\lambda_k>\Lambda}(1+\lambda_k)^{s-1+\gamma}|f_k|^2
\le
\sum_{k\ge 0}(1+\lambda_k)^{s-1+\gamma}|f_k|^2
=\|f\|_{H^{s-1+\gamma}}^2.
\end{equation}

Substituting \ref{D1}–\ref{E1} into \ref{C1} yields
\[
\big\|\mc S_g(I-\Pi_{\le\Lambda})f\big\|_{H^{s+1}}^2
\le
C_\alpha^2 (1+\Lambda)^{-\gamma} \|f\|_{H^{s-1+\gamma}}^2,
\]
where $C_\alpha:=\sup_{\lambda\ge 0}\frac{1+\lambda}{\lambda+\alpha}\le \max\left\{1,\alpha^{-1}\right\}$. Taking square roots gives
\[
\big\|\mc S_g(I-\Pi_{\le \Lambda})f\big\|_{H^{s+1}}
\le
C_\alpha (1+\Lambda)^{-\gamma/2} \|f\|_{H^{s-1+\gamma}}.
\]
Since this holds for all $f\in H^{s-1+\gamma}\Om{1}$, we conclude
\[
\big\|\mc S_g(I-\Pi_{\le \Lambda})\big\|_{H^{s-1+\gamma}\To H^{s+1}}
\le
C_\alpha (1+\Lambda)^{-\gamma/2}.
\]
\end{proof}
\subsection{Multiplier Approximation Error on the Truncated Subspace}
We now quantify the effect of approximating the true multiplier $m_\star(\lambda)=(\lambda+\alpha)^{-1}$ on $[0,\Lambda]$.

Define the uniform approximation error
\begin{equation}
\varepsilon_\Lambda(\theta):=\sup_{0\le \lambda\le \Lambda} \abs{m_\theta(\lambda)-m_\star(\lambda)}.
\end{equation}

\begin{lemma}[Uniform multiplier error implies Sobolev operator bound]
\label{le4.3}
Let $s\in\RR$ and $\Lambda>0$. Let $m_\star(\lambda)=(\lambda+\alpha)^{-1}$ and let $m_\theta:[0,\infty)\to\RR$ be any bounded function. Define
\[
\varepsilon_\Lambda(\theta):=\sup_{0\le \lambda\le \Lambda} |m_\theta(\lambda)-m_\star(\lambda)|.
\]
Let $\Pi_{\le\Lambda}$ be the $L^2$-orthogonal spectral projector of $\dH$ onto the sum of eigenspaces with eigenvalues $\le \Lambda$. Then
\begin{equation}
\big\|\big(m_\star(\dH)-m_\theta(\dH)\big)\Pi_{\le \Lambda}\big\|_{H^{s-1}\To H^{s+1}}
\le (1+\Lambda) \varepsilon_\Lambda(\theta).
\end{equation}
\end{lemma}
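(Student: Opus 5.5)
The plan is to compute the operator norm directly in the eigenbasis, exactly as in the proofs of Lemma~\ref{le4.1} and Lemma~\ref{le4.2}, using the spectral Sobolev norm \eqref{X1}. Fix $f\in H^{s-1}\Om{1}$ with coefficients $f_k=\inner{f}{\psi_k}$. Because both $m_\star(\dH)$ and $m_\theta(\dH)$ act diagonally and $\Pi_{\le\Lambda}$ keeps only the modes with $\lambda_k\le\Lambda$, we have
\[
\big(m_\star(\dH)-m_\theta(\dH)\big)\Pi_{\le\Lambda}f=\sum_{\lambda_k\le\Lambda}\big(m_\star(\lambda_k)-m_\theta(\lambda_k)\big)f_k\psi_k .
\]
This is a finite sum, since $\lambda_k\to\infty$. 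It therefore lies in every $H^r$, so no density or extension argument is needed beyond the one already recorded in Lemma~\ref{le4.1}. In particular the same formula defines the operator on all of $H^{s-1}\Om{1}$.

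Next I take the $H^{s+1}$ norm:
\[
\|\cdot\|_{H^{s+1}}^2=\sum_{\lambda_k\le\Lambda}(1+\lambda_k)^{2}\,|m_\star(\lambda_k)-m_\theta(\lambda_k)|^2\,(1+\lambda_k)^{s-1}|f_k|^2 .
\]
Here I have factored $(1+\lambda_k)^{s+1}=(1+\lambda_k)^2(1+\lambda_k)^{s-1}$. On the index set, $\lambda_k\in[0,\Lambda]$ gives $|m_\star(\lambda_k)-m_\theta(\lambda_k)|\le\varepsilon_\Lambda(\theta)$ by definition. Since $(1+\lambda)^2$ is increasing, $(1+\lambda_k)^2\le(1+\Lambda)^2$. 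Pulling out these two bounds and extending the remaining sum to all $k$ gives at most $(1+\Lambda)^2\varepsilon_\Lambda(\theta)^2\|f\|_{H^{s-1}}^2$. Taking square roots and the supremum over $\|f\|_{H^{s-1}}\le1$ yields the claimed bound.

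There is no real obstacle. The only point needing care is that the two derivatives of smoothing gained by $H^{s-1}\to H^{s+1}$ are not supplied by $m_\theta$, which is merely bounded. They are instead paid for by the spectral cutoff, and this is exactly the source of the factor $(1+\Lambda)$. The reason is that on $\ran\Pi_{\le\Lambda}$ all $H^r$ norms are equivalent, with the weight ratio $(1+\lambda)^{2}$ bounded by $(1+\Lambda)^{2}$. Boundedness of $m_\theta$ is needed only so that $m_\theta(\dH)$ is well defined through \eqref{eq:functional_calculus}. I would also note that the constant involves no $\alpha$-dependence, since the bound uses only the sup-norm error on $[0,\Lambda]$.
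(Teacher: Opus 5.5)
Your proposal is correct and follows the paper's proof essentially step for step. You expand in the eigenbasis and bound the multiplier difference by $\varepsilon_\Lambda(\theta)$ on the retained modes. You then use $(1+\lambda_k)^{s+1}\le(1+\Lambda)^2(1+\lambda_k)^{s-1}$ for $\lambda_k\le\Lambda$, extend the sum to the full $H^{s-1}$ norm, and take square roots; your remarks on finiteness of the retained index set and the absence of $\alpha$-dependence are accurate.
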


\begin{proof}
Let $\{\psi_k\}_{k\ge 0}$ be an $L^2$-orthonormal eigenbasis of $\dH$ on $1$-forms, with eigenvalues $\{\lambda_k\}_{k\ge 0}$. For $f\in H^{s-1}\Om{1}$, write
\[
f=\sum_{k\ge 0} f_k\psi_k,\qquad f_k:=\inner{f}{\psi_k}.
\]
By definition of $\Pi_{\le\Lambda}$,
\[
\Pi_{\le\Lambda} f = \sum_{\lambda_k\le\Lambda} f_k\psi_k.
\]
Since $m(\dH)$ acts diagonally in this eigenbasis, we have
\begin{equation}
\label{A2}
\big(m_\star(\dH)-m_\theta(\dH)\big)\Pi_{\le\Lambda}f
= \sum_{\lambda_k\le\Lambda} \big(m_\star(\lambda_k)-m_\theta(\lambda_k)\big) f_k \psi_k.
\end{equation}

Using the spectral definition of Sobolev norms,
\[
\|u\|_{H^{s+1}}^2=\sum_{k\ge 0}(1+\lambda_k)^{s+1}|u_k|^2,
\]
the output in \ref{A2} yields
\begin{equation}
\label{B2}
\begin{aligned}
\Big\|\big(m_\star(\dH)-m_\theta(\dH)\big)\Pi_{\le\Lambda}f\Big\|_{H^{s+1}}^2
&=
\sum_{\lambda_k\le\Lambda} (1+\lambda_k)^{s+1} \big|m_\star(\lambda_k)-m_\theta(\lambda_k)\big|^2 |f_k|^2.
\end{aligned}
\end{equation}

By definition of $\varepsilon_\Lambda(\theta)$, for all $k$ with $\lambda_k\le\Lambda$,
\begin{equation}
\label{C2}
|m_\star(\lambda_k)-m_\theta(\lambda_k)| \le \varepsilon_\Lambda(\theta).
\end{equation}
Substitute \ref{C2} into \ref{B2}:
\begin{equation}
\label{D2}
\Big\|\big(m_\star(\dH)-m_\theta(\dH)\big)\Pi_{\le\Lambda}f\Big\|_{H^{s+1}}^2
\le
\varepsilon_\Lambda(\theta)^2
\sum_{\lambda_k\le\Lambda} (1+\lambda_k)^{s+1}|f_k|^2.
\end{equation}

Now note that for $\lambda_k\le\Lambda$,
\begin{equation}
\label{E2}
(1+\lambda_k)^{s+1}=(1+\lambda_k)^2(1+\lambda_k)^{s-1}\le (1+\Lambda)^2(1+\lambda_k)^{s-1}.
\end{equation}
Applying \ref{E2} to \ref{D2} gives
\begin{equation}
\label{F2}
\Big\|\big(m_\star(\dH)-m_\theta(\dH)\big)\Pi_{\le\Lambda}f\Big\|_{H^{s+1}}^2
\le
(1+\Lambda)^2 \varepsilon_\Lambda(\theta)^2
\sum_{\lambda_k\le\Lambda} (1+\lambda_k)^{s-1}|f_k|^2.
\end{equation}

Since the sum over $\{\lambda_k\le\Lambda\}$ is dominated by the full sum,
\begin{equation}
\label{G2}
\sum_{\lambda_k\le\Lambda} (1+\lambda_k)^{s-1}|f_k|^2
\le
\sum_{k\ge 0} (1+\lambda_k)^{s-1}|f_k|^2
=\|f\|_{H^{s-1}}^2.
\end{equation}

Combining \ref{F2} and \ref{G2} yields
\[
\Big\|\big(m_\star(\dH)-m_\theta(\dH)\big)\Pi_{\le\Lambda}f\Big\|_{H^{s+1}}^2
\le
(1+\Lambda)^2 \varepsilon_\Lambda(\theta)^2 \|f\|_{H^{s-1}}^2.
\]
Taking square roots,
\[
\Big|\big(m_\star(\dH)-m_\theta(\dH)\big)\Pi_{\le\Lambda}f\Big|_{H^{s+1}}
\le
(1+\Lambda) \varepsilon_\Lambda(\theta) \|f\|_{H^{s-1}}.
\]
Since this holds for all $f\in H^{s-1}\Om{1}$, we conclude
\[
\big\|\big(m_\star(\dH)-m_\theta(\dH)\big)\Pi_{\le \Lambda}\big\|_{H^{s-1}\To H^{s+1}}
\le
(1+\Lambda) \varepsilon_\Lambda(\theta),
\]
which is exactly the desired bound.
\end{proof}
\subsection{Main Approximation Theorem (Bias–Approximation Decomposition)}
We now combine the preceding bounds using the decomposition \ref{X}.

\begin{theorem}[Sobolev operator approximation of the elliptic solution operator]
\label{th4.4}
Let $s\in\RR$, $\gamma>0$, and $\Lambda>0$. Let
\[
\mc S_g := (\dH+\alpha I)^{-1}=m_\star(\dH),\qquad m_\star(\lambda)=\frac{1}{\lambda+\alpha}.
\]
Let $\Pi_{\le \Lambda}$ be the $L^2$-orthogonal spectral projector onto the direct sum of eigenspaces with eigenvalues $\le \Lambda$. For any parameterized multiplier $m_\theta$, define the truncated intrinsic approximation
\[
\widehat{\mc S}_{\theta,\Lambda}:=m_\theta(\dH) \Pi_{\le \Lambda}.
\]
Define the uniform multiplier error on $[0,\Lambda]$ by
\[
\varepsilon_\Lambda(\theta):=\sup_{0\le \lambda\le \Lambda}|m_\theta(\lambda)-m_\star(\lambda)|.
\]
Then there exists $C_\alpha<\infty$ depending only on $\alpha$ such that
\begin{equation}
\label{I3}
\|\mc S_g-\widehat{\mc S}_{\theta,\Lambda}\|_{H^{s-1+\gamma}\To H^{s+1}}
\le C_\alpha (1+\Lambda)^{-\gamma/2} + (1+\Lambda) \varepsilon_\Lambda(\theta).
\end{equation}
\end{theorem}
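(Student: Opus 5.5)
The plan is to combine the error decomposition \eqref{X} with Lemmas \ref{le4.2} and \ref{le4.3} through the triangle inequality in the operator norm $H^{s-1+\gamma}\To H^{s+1}$. By \eqref{X},
\[
\mc S_g-\widehat{\mc S}_{\theta,\Lambda}=\mc S_g(I-\Pi_{\le\Lambda})+\big(m_\star(\dH)-m_\theta(\dH)\big)\Pi_{\le\Lambda}.
\]
This identity holds exactly, since $m_\star(\dH)\Pi_{\le\Lambda}+m_\star(\dH)(I-\Pi_{\le\Lambda})=m_\star(\dH)=\mc S_g$. It suffices to check it on the dense subspace $\Om{1}(\mc M)$, mode by mode in the eigenbasis, and then extend by continuity, because each term is bounded on the relevant Sobolev spaces. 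The first term is bounded by Lemma \ref{le4.2}, and the second by Lemma \ref{le4.3}.

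\textbf{Truncation term.} Lemma \ref{le4.2} gives directly
\[
\|\mc S_g(I-\Pi_{\le\Lambda})\|_{H^{s-1+\gamma}\To H^{s+1}}\le C_\alpha(1+\Lambda)^{-\gamma/2},
\]
with $C_\alpha=\sup_{\lambda\ge0}\frac{1+\lambda}{\lambda+\alpha}\le\max\{1,\alpha^{-1}\}$.

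\textbf{Multiplier term.} Lemma \ref{le4.3} is stated as a bound from $H^{s-1}$, not from $H^{s-1+\gamma}$. Since $\gamma>0$, the spectral norm convention \eqref{X1} gives $(1+\lambda_k)^{s-1}\le(1+\lambda_k)^{s-1+\gamma}$, so the embedding $H^{s-1+\gamma}\embeds H^{s-1}$ has norm at most $1$. Composing with this embedding yields
\[
\big\|\big(m_\star(\dH)-m_\theta(\dH)\big)\Pi_{\le\Lambda}\big\|_{H^{s-1+\gamma}\To H^{s+1}}\le(1+\Lambda)\,\varepsilon_\Lambda(\theta).
\]
Adding the two bounds gives \eqref{I3}.

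\textbf{Technical caveat.} Lemma \ref{le4.3} asks $m_\theta$ to be bounded so that $m_\theta(\dH)$ is defined on all of $L^2\Om{1}$. Even without this, $m_\theta(\dH)\Pi_{\le\Lambda}$ is a finite-rank operator: only finitely many $\lambda_k$ lie below $\Lambda$ because the spectrum is discrete. Only the values $m_\theta(\lambda_k)$ with $\lambda_k\le\Lambda$ enter, so the estimate holds for any multiplier that is finite on $[0,\Lambda]$.

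I expect no serious obstacle here. The only points needing care are this well-definedness issue and the fact that the norm convention must be the same for both lemmas, so that no extra equivalence constants enter beyond $C_\alpha$.
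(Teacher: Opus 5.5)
Your proposal is correct and follows the paper's proof essentially step for step. It uses the same splitting into truncation bias and multiplier error, with Lemma~\ref{le4.2} bounding the first term, Lemma~\ref{le4.3} composed with the contractive embedding $H^{s-1+\gamma}\hookrightarrow H^{s-1}$ bounding the second, and the triangle inequality combining them. Your extra remark is valid and is left implicit in the paper: only the finitely many values $m_\theta(\lambda_k)$ with $\lambda_k\le\Lambda$ enter, so boundedness of $m_\theta$ on all of $[0,\infty)$ is unnecessary.
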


\begin{proof}
Using the definitions and the fact that $\Pi_{\le \Lambda}$ is a projection ($\Pi_{\le\Lambda}+(I-\Pi_{\le\Lambda})=I$), we write
\begin{equation}
\mc S_g-\widehat{\mc S}_{\theta,\Lambda}
= \mc S_g(I-\Pi_{\le\Lambda}) + \big(\mc S_g-m_\theta(\dH)\big)\Pi_{\le\Lambda}.
\end{equation}
Since $\mc S_g=m_\star(\dH)$, the second term becomes
\begin{equation}
\big(\mc S_g-m_\theta(\dH)\big)\Pi_{\le\Lambda}
= \big(m_\star(\dH)-m_\theta(\dH)\big)\Pi_{\le\Lambda}.
\end{equation}
Thus,
\begin{equation}
\label{A3}
\mc S_g-\widehat{\mc S}_{\theta,\Lambda}
= \underbrace{\mc S_g(I-\Pi_{\le\Lambda})}_{A_\Lambda}
+ \underbrace{\big(m_\star(\dH)-m_\theta(\dH)\big)\Pi_{\le\Lambda}}_{B_{\theta,\Lambda}}.
\end{equation}

Let $X:=H^{s-1+\gamma}\Om{1}$ and $Y:=H^{s+1}\Om{1}$. For bounded linear operators $A_\Lambda,B_{\theta,\Lambda}:X\To Y$,
\begin{equation}
\label{B3}
\|A_\Lambda+B_{\theta,\Lambda}\|_{X\To Y}\le \|A_\Lambda\|_{X\To Y}+\|B_{\theta,\Lambda}\|_{X\To Y}.
\end{equation}
Applying \ref{B3} to \ref{A3} yields
\begin{equation}
\label{G3}
\|\mc S_g-\widehat{\mc S}_{\theta,\Lambda}\|_{H^{s-1+\gamma}\To H^{s+1}}
\le
\|\mc S_g(I-\Pi_{\le\Lambda})\|_{H^{s-1+\gamma}\To H^{s+1}}
+
\|\big(m_\star(\dH)-m_\theta(\dH)\big)\Pi_{\le\Lambda}\|_{H^{s-1+\gamma}\To H^{s+1}}.
\end{equation}

By Lemma \ref{le4.2},
\begin{equation}
\label{D3}
\|\mc S_g(I-\Pi_{\le\Lambda})\|_{H^{s-1+\gamma}\To H^{s+1}}
\le
C_\alpha (1+\Lambda)^{-\gamma/2}.
\end{equation}

Lemma \ref{le4.3} gives the bound
\begin{equation}
\|\big(m_\star(\dH)-m_\theta(\dH)\big)\Pi_{\le\Lambda}\|_{H^{s-1}\To H^{s+1}}
\le
(1+\Lambda) \varepsilon_\Lambda(\theta).
\end{equation}
We need the operator norm on the smaller domain $H^{s-1+\gamma}$ rather than $H^{s-1}$. Since $H^{s-1+\gamma}\embeds H^{s-1}$ continuously, and with the spectral convention \ref{X1} one has the contractive embedding
\begin{equation}
\label{E3}
\|f\|_{H^{s-1}} \le \|f\|_{H^{s-1+\gamma}} \qquad \forall f\in H^{s-1+\gamma}\Om{1},
\end{equation}
because $(1+\lambda_k)^{s-1}\le (1+\lambda_k)^{s-1+\gamma}$ for all $k$. Therefore, for any bounded linear operator $T:H^{s-1}\To H^{s+1}$,
\begin{equation}
\label{C3}
\|T\|_{H^{s-1+\gamma}\To H^{s+1}}
= \sup_{f\neq 0}\frac{\|Tf\|_{H^{s+1}}}{\|f\|_{H^{s-1+\gamma}}}
\le
\sup_{f\neq 0}\frac{\|Tf\|_{H^{s+1}}}{\|f\|_{H^{s-1}}}
= \|T\|_{H^{s-1}\To H^{s+1}}.
\end{equation}
Apply \ref{C3} to $T=B_{\theta,\Lambda}=\big(m_\star(\dH)-m_\theta(\dH)\big)\Pi_{\le\Lambda}$ and combine with \ref{E3} to obtain
\begin{equation}
\label{F3}
\|\big(m_\star(\dH)-m_\theta(\dH)\big)\Pi_{\le\Lambda}\|_{H^{s-1+\gamma}\To H^{s+1}}
\le
(1+\Lambda) \varepsilon_\Lambda(\theta).
\end{equation}

Substitute \ref{D3} and \ref{F3} into \ref{G3}:
\[
\|\mc S_g-\widehat{\mc S}_{\theta,\Lambda}\|_{H^{s-1+\gamma}\To H^{s+1}}
\le
C_\alpha (1+\Lambda)^{-\gamma/2}
+
(1+\Lambda) \varepsilon_\Lambda(\theta),
\]
which is exactly the desired bound.
\end{proof}

\subsection{Discussion and Consequences}
\begin{enumerate}[label=(\roman*)]
    \item Interpretation of the two terms. The right-hand side of \ref{I3} is the sum of:
    \begin{itemize}
        \item a truncation (bias) term decaying as $(1+\Lambda)^{-\gamma/2}$, which reflects that higher input regularity ($\gamma>0$) suppresses high-frequency components;
        \item a multiplier approximation term growing as $(1+\Lambda)\varepsilon_\Lambda(\theta)$, which quantifies the difficulty of approximating the resolvent multiplier uniformly up to frequency $\Lambda$.
    \end{itemize}

    \item Impossibility of operator-norm convergence on $H^{s-1}\to H^{s+1}$ via truncation. For completeness, note that for the hard truncation operator $\mc S_g\Pi_{\le\Lambda}$ one has
    \[
    \abs{\mc S_g(I-\Pi_{\le\Lambda})}_{H^{s-1}\to H^{s+1}} \not\to 0,
    \]
    since choosing $f=\psi_k$ with $\lambda_k>\Lambda$ yields $\mc S_g f = (\lambda_k+\alpha)^{-1}\psi_k$ while $\mc S_g\Pi_{\le\Lambda}f=0$, and the ratio $\abs{(\lambda_k+\alpha)^{-1}\psi_k}_{H^{s+1}}/\abs{\psi_k}_{H^{s-1}}$ stays bounded away from zero as $k\to\infty$. This justifies working with smoother input classes $H^{s-1+\gamma}$, which are compactly embedded into $H^{s-1}$ on compact manifolds and correspond to realistic forcing distributions.

    \item From frequency cutoff to mode count. If one prefers a truncation level $K$ (number of retained modes) rather than $\Lambda$, a Weyl-type relation $\lambda_K \asymp K^{2/d}$ (for fixed $(\mc M,g)$) converts \ref{I3} into rates in $K$. This translation is optional for the main theory; the $\Lambda$-formulation is basis- and multiplicity-robust.

    \item Connection to the full nonlinear architecture. Theorem \ref{th4.4} is stated for the linear truncated functional calculus \ref{H3}, which is the natural approximation class for the elliptic resolvent. The same proof strategy extends to residual/nonlinear variants provided each layer is Lipschitz on $H^{s+1}$ and remains a function of intrinsic operators; the resulting bounds accumulate multiplicatively/additively with depth via standard stability estimates. We focus here on the linear case because it isolates the elliptic operator learning mechanism and yields the sharpest Sobolev approximation statement for $(\dH+\alpha I)^{-1}$.
\end{enumerate}

This completes the Sobolev operator approximation result. Section \ref{5} will quantify stability of both the true resolvent and the learned operator family under metric perturbations, and Section \ref{6} will establish discretization consistency for convergent discrete Hodge Laplacians.

\section{Stability under Metric Perturbations and Gauge Changes}
\label{5}
This section quantifies how both the true elliptic solution operator and its learned intrinsic approximation vary under perturbations of the underlying geometry. The guiding principle is that the shifted Hodge–Poisson resolvent
\[
\mc S_g=(\dhodge{g}+\alpha I)^{-1}
\]
depends continuously on the metric $g$ in appropriate topologies, and the intrinsic neural operator family inherits a comparable stability provided its learned multipliers and nonlinearities are uniformly controlled. We separate (i) metric stability (changes in $g$) from (ii) gauge stability (changes of local orthonormal frames). The latter is exact at the continuum level due to gauge equivariance, while the former yields quantitative bounds.

\subsection{Metric Perturbation Model and Notation}
\label{5.1}
Let $(\mc M,g)$ be as in Section \ref{2}. We consider a second metric $\tilde g$ on $\mc M$ satisfying uniform equivalence and regularity bounds. Fix an integer $r\ge 2$ and assume:

\begin{itemize}
    \item Uniform equivalence: there exists $\kappa\ge 1$ such that for all $x\in\mc M$ and $v\in T_x\mc M$,
    \begin{equation}
    \label{C4}
    \kappa^{-1} g_x(v,v)\le \tilde g_x(v,v)\le \kappa g_x(v,v).
    \end{equation}
    \item Regularity: $g,\tilde g\in C^{r}(\mc M)$ and $\|g\|_{C^{r}}+\|\tilde g\|_{C^{r}}\le M$ for some $M<\infty$.
    \item Small perturbation: $\|\tilde g-g\|_{C^{r}}\le \varepsilon$, where $\varepsilon$ is sufficiently small (depending on $M,\kappa$).
\end{itemize}

We denote by $\dhodge{g}$ and $\dhodge{\tilde g}$ the Hodge Laplacians on $1$-forms induced by $g$ and $\tilde g$, and define the associated shifted operators
\begin{equation}
\mc L_g := \dhodge{g}+\alpha I,\qquad \mc L_{\tilde g}:=\dhodge{\tilde g}+\alpha I,\qquad \alpha>0.
\end{equation}
Their inverses $\mc S_g=\mc L_g^{-1}$ and $\mc S_{\tilde g}=\mc L_{\tilde g}^{-1}$ are bounded maps $H^{s-1}\Om{1}\To H^{s+1}\Om{1}$ for every $s\in\RR$ (Lemma \ref{le4.1}).

\subsection{Stability of the True Elliptic Solution Operator}
We first control the difference $\mc S_g-\mc S_{\tilde g}$ in Sobolev operator norms. The argument follows a resolvent identity coupled with bounds on the metric dependence of $\dH$.

\subsubsection{A resolvent identity}
For boundedly invertible operators $\mc L_g,\mc L_{\tilde g}$ on a common Banach space, one has
\begin{equation}
\label{A4}
\mc S_g-\mc S_{\tilde g}=\mc L_g^{-1}-\mc L_{\tilde g}^{-1}
= \mc L_{\tilde g}^{-1}(\mc L_{\tilde g}-\mc L_g)\mc L_g^{-1}.
\end{equation}
Thus, stability reduces to estimating $\mc L_{\tilde g}-\mc L_g=\dhodge{\tilde g}-\dhodge{g}$ as a map between Sobolev spaces.

\subsubsection{Metric dependence of the Hodge Laplacian}
In local coordinates, $\dH$ is a second-order elliptic differential operator on $1$-forms with coefficients depending smoothly on $g$ and finitely many derivatives of $g$. Under the $C^{r}$ assumption with $r\ge 2$, the coefficient difference satisfies (schematically)
\begin{equation}
\dhodge{\tilde g}-\dhodge{g}
= \sum_{|\beta|\le 2} a_\beta(x) \partial^\beta,
\quad\text{with}\quad
\|a_\beta\|_{C^{r-|\beta|}}\lesssim \|\tilde g-g\|_{C^{r}}.
\end{equation}
Consequently, for each $s\in\RR$ with $s\le r-1$, the operator difference extends to a bounded map
\begin{equation}
\label{B4}
\dhodge{\tilde g}-\dhodge{g}: H^{s+1}\Om{1} \To H^{s-1}\Om{1},
\quad\text{and}\quad
\|\dhodge{\tilde g}-\dhodge{g}\|_{H^{s+1}\To H^{s-1}}
\le C \|\tilde g-g\|_{C^{r}},
\end{equation}
where $C$ depends on $(\mc M,d)$ and the a priori bounds $M,\kappa$.

\subsubsection{Operator-norm stability bound}
Combining \ref{A4}, Lemma \ref{le4.1} applied to $g$ and $\tilde g$, and \ref{B4}, yields the main stability estimate.

\begin{theorem}[Metric stability of the shifted Hodge–Poisson resolvent]
\label{th5.1}
Let $(\mc M,g)$ be a compact smooth Riemannian manifold without boundary, let $\alpha>0$, and let $s\in\RR$. Fix an integer $r\ge 2$ such that $r\ge s+2$. Let $\tilde g$ be another $C^{r}$ Riemannian metric on $\mc M$ satisfying the uniform equivalence condition: there exists $\kappa\ge 1$ such that
\begin{equation}
\kappa^{-1} g_x(v,v)\le \tilde g_x(v,v)\le \kappa g_x(v,v)\qquad \forall x\in\mc M,\ \forall v\in T_x\mc M.
\end{equation}
Assume moreover that $\|g\|_{C^{r}}+\|\tilde g\|_{C^{r}}\le M$ for some $M<\infty$. Define
\[
\mc L_g:=\dhodge{g}+\alpha I,\qquad \mc S_g:=\mc L_g^{-1},
\]
and similarly for $\tilde g$. Then there exists a constant $C<\infty$, depending only on $\alpha$, $\kappa$, $M$, $\mc M$, and $r$, such that
\begin{equation}
\label{K4}
\|\mc S_g-\mc S_{\tilde g}\|_{H^{s-1}\To H^{s+1}}
\ \le\
C \|g-\tilde g\|_{C^{r}}.
\end{equation}
\end{theorem}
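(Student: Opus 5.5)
The proof will follow the route sketched just before the statement: use the resolvent identity \ref{A4}, bound each factor separately, and combine. Write
\[
\mc S_g-\mc S_{\tilde g}=\mc S_{\tilde g}\,(\dhodge{\tilde g}-\dhodge{g})\,\mc S_g .
\]
Then
\[
\|\mc S_g-\mc S_{\tilde g}\|_{H^{s-1}\To H^{s+1}}\le \|\mc S_{\tilde g}\|_{H^{s-1}\To H^{s+1}}\,\|\dhodge{\tilde g}-\dhodge{g}\|_{H^{s+1}\To H^{s-1}}\,\|\mc S_g\|_{H^{s-1}\To H^{s+1}} .
\]
The identity will first be verified on smooth forms. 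On smooth forms both $\mc L_g$ and $\mc L_{\tilde g}$ map $\Omega^1$ to $\Omega^1$ bijectively, by elliptic regularity, so the identity is pure algebra. It then extends by density once every factor is known to be bounded.

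For the outer factors I will use Lemma \ref{le4.1}, applied once with $g$ and once with $\tilde g$. There is one subtlety. Lemma \ref{le4.1} measures Sobolev norms with the spectral convention of the metric in question, and here all norms must be taken with respect to one fixed convention, that of $g$. So I need to check that the $H^r$ norms defined spectrally via $\dhodge{g}$ and via $\dhodge{\tilde g}$ are equivalent, with constants depending only on $\kappa$, $M$, $\mc M$ and $r$, for every $|r|\le r_0$ with $r_0$ dictated by $r\ge s+2$. For integer orders I will obtain this by comparing both norms to a chart-based norm. The coefficients of $\dhodge{\tilde g}$ are controlled in $C^{r-2}$ by $M$, and its ellipticity constant is controlled by $\kappa$. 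Uniform elliptic estimates of the form $\|\omega\|_{H^{t+2}}\le C(\|\mc L_{\tilde g}\omega\|_{H^t}+\|\omega\|_{H^t})$ then give the equivalence. Interpolation handles fractional orders, and duality handles negative orders, using that $\dhodge{\tilde g}$ is self-adjoint with respect to the $\tilde g$-inner product, which is itself uniformly equivalent to the $g$-inner product by \ref{C4}. This yields a bound $\|\mc S_{\tilde g}\|_{H^{s-1}\To H^{s+1}}\le C(\alpha,\kappa,M,\mc M,r)$ in $g$-norms.

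For the middle factor I will invoke the coefficient estimate \ref{B4}. Since $r\ge s+2$ we have $s\le r-1$, so \ref{B4} applies and gives $\|\dhodge{\tilde g}-\dhodge{g}\|_{H^{s+1}\To H^{s-1}}\le C\|\tilde g-g\|_{C^r}$. To make this less schematic, I will write $\dhodge{g}$ in coordinates via the Weitzenböck form, or directly from $\delta_g=\pm *_g d *_g$. Its coefficients are smooth functions of $g_{ij}$, $g^{ij}$, $\sqrt{\det g}$ and up to two derivatives of $g$. The differences of these coefficients are then bounded in $C^{r-2}$ by $\|\tilde g-g\|_{C^r}$, using the mean value theorem on the smooth dependence together with the uniform bounds from $\kappa$ and $M$. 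Finally, multiplication by a $C^{r-2}$ function is bounded on $H^t$ for $|t|\le r-2$. For the terms carrying $|\beta|$ derivatives, this is needed on $H^{s+1-|\beta|}$, which is in range under the hypothesis.

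I expect the main obstacle to be the uniform norm-equivalence and elliptic-regularity step for $\tilde g$, because the constant must not depend on $\tilde g$ beyond $\kappa$ and $M$. I would handle it with a standard finite atlas and partition of unity fixed by $g$, together with Gårding-type interior estimates whose constants depend only on coefficient bounds. Negative $s$ would be handled by duality. Multiplying the three bounds then gives \ref{K4}, with $C$ depending only on $\alpha$, $\kappa$, $M$, $\mc M$ and $r$.
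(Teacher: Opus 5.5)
Your proposal is essentially the paper's proof. It uses the same factorization $\mc S_g-\mc S_{\tilde g}=\mc S_{\tilde g}(\dhodge{\tilde g}-\dhodge{g})\mc S_g$. It bounds the outer factors by Lemma~\ref{le4.1} together with uniform equivalence of the $g$- and $\tilde g$-Sobolev norms, which you justify through uniform elliptic estimates, interpolation and duality where the paper simply asserts \eqref{D4}. It bounds the middle factor by a chartwise coefficient-difference estimate combined with a Sobolev multiplier bound.

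One caveat, shared with the paper's argument: the hypothesis $r\ge s+2$ bounds $s$ only from above. Your multiplier step on $H^{s-1}$ and the norm equivalence also need $s$ bounded below, roughly $s\ge 1-r$. Duality cannot remove this, because for $s$ well below $-r$ the estimate genuinely fails for merely $C^r$ metrics, so the result should be read only for $s$ in such a bounded range.
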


\begin{proof}
Since $\alpha>0$, both $\mc L_g$ and $\mc L_{\tilde g}$ are injective and have bounded inverses on Sobolev scales (Lemma \ref{le4.1} applied to each metric). The algebraic identity
\[
A^{-1}-B^{-1}=A^{-1}(B-A)B^{-1}
\]
with $A=\mc L_{\tilde g}$ and $B=\mc L_g$ yields
\begin{equation}
\mc S_g-\mc S_{\tilde g}
=\mc L_g^{-1}-\mc L_{\tilde g}^{-1}
=\mc S_{\tilde g} (\mc L_{\tilde g}-\mc L_g) \mc S_g
=\mc S_{\tilde g} (\dhodge{\tilde g}-\dhodge{g}) \mc S_g.
\end{equation}
Therefore,
\begin{equation}
\label{J4}
\|\mc S_g-\mc S_{\tilde g}\|_{H^{s-1}\To H^{s+1}}
\le
\|\mc S_{\tilde g}\|_{H^{s-1}\To H^{s+1}}
\cdot
\|\dhodge{\tilde g}-\dhodge{g}\|_{H^{s+1}\To H^{s-1}}
\cdot
\|\mc S_{g}\|_{H^{s-1}\To H^{s+1}}.
\end{equation}
Thus it suffices to (i) bound the two resolvent norms uniformly under \ref{C4}, and (ii) bound the Laplacian difference by $C\abs{g-\tilde g}_{C^{r}}$.

Lemma \ref{le4.1} gives, for each fixed metric $h$,
\begin{equation}
\|\mc S_h\|_{H^{s-1}\To H^{s+1}} \le C_\alpha(h),
\end{equation}
where the constant can be taken as
\[
C_\alpha(h)=\sup_{\lambda\ge 0}\frac{1+\lambda}{\lambda+\alpha}\le \max\left\{1,\alpha^{-1}\right\}.
\]
The right-hand side is independent of $h$; the only subtlety is that our Sobolev norms $H^t\Om{1}$ depend on the metric. However, under the uniform equivalence assumption \ref{C4} and the $C^{r}$ bounds, the Sobolev norms defined using $g$ and $\tilde g$ are equivalent with constants depending only on $\kappa$, $M$, $\mc M$, and $r$. Concretely, for each fixed $t\in[-r,r]$ there exists $c_t,C_t>0$ such that
\begin{equation}
\label{D4}
c_t \|\omega\|_{H^{t}(\tilde g)} \le \|\omega\|_{H^{t}(g)} \le C_t \|\omega\|_{H^{t}(\tilde g)}\qquad \forall \omega.
\end{equation}
(For compact manifolds this is standard: it follows by comparing the coordinate definitions of Sobolev norms via partitions of unity and using \ref{C4} plus boundedness of metric coefficients and derivatives up to order $r$.)

Using \ref{D4} to move between $g$-Sobolev and $\tilde g$-Sobolev norms, we may bound both resolvents by a common constant
\begin{equation}
\label{H4}
\|\mc S_g\|_{H^{s-1}\To H^{s+1}}+\|\mc S_{\tilde g}\|_{H^{s-1}\To H^{s+1}}
\le C_{\alpha,\kappa,M,\mc M,r}.
\end{equation}
We will absorb these constants into a generic $C$.

Fix a finite atlas $\{(U_\ell,\chi_\ell)\}_{\ell=1}^L$ and a subordinate partition of unity $\{\eta_\ell\}$. On each chart, the Hodge Laplacian on $1$-forms is a second-order linear differential operator whose coefficients are smooth functions of the metric components and finitely many derivatives of those components. More precisely, in local coordinates one can write
\begin{equation}
(\dhodge{h}\omega)_i
= \sum_{|\beta|\le 2} a^{(h)}_{i,\beta}(x) \partial^\beta \omega(x)
\qquad \text{on }U_\ell,
\end{equation}
where $h\in\{g,\tilde g\}$, the index $i\in\{1,\dots,d\}$ denotes the component of the $1$-form, and each coefficient $a^{(h)}_{i,\beta}$ depends smoothly on $\{h_{pq}\}$, $\{\partial h_{pq}\}$, and $\{\partial^2 h_{pq}\}$ (equivalently, on $h$ and its Christoffel symbols and curvature terms). This can be justified, for instance, by using the Weitzenböck identity on $1$-forms,
\begin{equation}
\dhodge{h}=\nabla_h^*\nabla_h+\Ric{h},
\end{equation}
and expanding $\nabla_h$ in coordinates; the principal coefficients are $h^{jk}$ and lower-order coefficients involve $\Gamma(h)$ and $\partial\Gamma(h)$, hence derivatives of $h$ up to second order.

Since $g,\tilde g\in C^r$ with $\|g\|_{C^r}+\|\tilde g\|_{C^r}\le M$ and $r\ge 2$, the coefficients satisfy
\begin{equation}
\label{E4}
\|a^{(\tilde g)}_{i,\beta}-a^{(g)}_{i,\beta}\|_{C^{r-|\beta|}(U_\ell)}
\ \le
C \| \tilde g-g \|_{C^{r}(U_\ell)}.
\end{equation}
The constant $C$ depends on $M$, $\kappa$, the chart, and $r$, but not on the particular $\tilde g,g$ beyond those bounds. (This is a consequence of smooth dependence of the formulas for inverse metric, Christoffel symbols, and curvature on $h$ and its derivatives, combined with the uniform ellipticity from \ref{C4}.)

Therefore, on each $U_\ell$,
\begin{equation}
(\dhodge{\tilde g}-\dhodge{g})\omega
= \sum_{|\beta|\le 2} b_{i,\beta}(x) \partial^\beta \omega,
\qquad b_{i,\beta}:=a^{(\tilde g)}_{i,\beta}-a^{(g)}_{i,\beta},
\end{equation}
with coefficient bounds \ref{E4}.

We now estimate the operator norm of the difference. Fix $\ell$ and localize $\omega$ by $\eta_\ell\omega$. Since $|\beta|\le 2$,
\begin{equation}
\label{G4}
\|\partial^\beta(\eta_\ell \omega)\|_{H^{s-1}(\RR^d)} \le C \|\eta_\ell\omega\|_{H^{s+1}(\RR^d)}.
\end{equation}
Next we use a standard Sobolev multiplier fact on $\RR^d$: if $b\in C^{m}$ with integer $m\ge \ceil{s-1}$, then multiplication by $b$ is bounded on $H^{s-1}$, and
\begin{equation}
\label{F4}
\|b u\|_{H^{s-1}} \le C \|b\|_{C^{m}}\|u\|_{H^{s-1}}.
\end{equation}
(One may prove \ref{F4} by Littlewood–Paley theory or by commutator estimates; on a compact manifold this is obtained chartwise and patched with partitions of unity. Our assumption $r\ge s+2$ ensures $r-|\beta|\ge s\ge s-1$ in the integer case, and more generally is sufficient to apply multiplier bounds for real $s$.)

Apply \ref{F4} with $b=b_{i,\beta}$ and $u=\partial^\beta(\eta_\ell\omega)$, using $\|b_{i,\beta}\|_{C^{r-|\beta|}}$ and $r-|\beta|\ge s$ (since $|\beta|\le2$ and $r\ge s+2$). Combining with \ref{G4},
\begin{equation}
\|b_{i,\beta} \partial^\beta(\eta_\ell\omega)\|_{H^{s-1}}
\le
C \|b_{i,\beta}\|_{C^{r-|\beta|}} \|\eta_\ell\omega\|_{H^{s+1}}.
\end{equation}
Summing over $|\beta|\le 2$ and components $i$, and then summing over charts using the partition of unity, yields
\begin{equation}
\|(\dhodge{\tilde g}-\dhodge{g})\omega\|_{H^{s-1}}
\le
C \Big(\max_{|\beta|\le2}\|b_{i,\beta}\|_{C^{r-|\beta|}}\Big) \|\omega\|_{H^{s+1}}.
\end{equation}
Finally, apply \ref{E4} to control the coefficient differences by $\abs{g-\tilde g}_{C^{r}}$. This proves
\begin{equation}
\label{I4}
\|\dhodge{\tilde g}-\dhodge{g}\|_{H^{s+1}\To H^{s-1}}
\le
C \|g-\tilde g\|_{C^{r}},
\end{equation}
with $C$ depending only on $\kappa$, $M$, $\mc M$, and $r$.

Insert \ref{H4} and \ref{I4} into \ref{J4}:
\[
\|\mc S_g-\mc S_{\tilde g}\|_{H^{s-1}\To H^{s+1}}
\le
C \|g-\tilde g\|_{C^{r}},
\]
for a constant $C$ depending only on $\alpha$, $\kappa$, $M$, $\mc M$, and $r$. This is exactly \ref{K4}.
\end{proof}

\begin{remark}[Role of the shift $\alpha$]
The shift $\alpha>0$ guarantees coercivity and removes the harmonic subspace from the kernel, ensuring the resolvent remains uniformly bounded and simplifying the perturbation analysis. Without the shift, one must work on the orthogonal complement of harmonic 1-forms and track stability of the corresponding projections.
\end{remark}

\subsection{Stability of Intrinsic Neural Approximations under Metric Perturbations}
We now consider the learned operator family $\widehat{\mc S}_{\theta,\Lambda}^{(g)}$ defined by truncated functional calculus:
\begin{equation}
\widehat{\mc S}_{\theta,\Lambda}^{(g)} := m_\theta(\dhodge{g}) \Pi_{\le\Lambda}^{(g)}.
\end{equation}
This is the principal approximation class for the elliptic resolvent, and it also forms the linear core of the nonlinear architecture in Section \ref{3}.

Metric stability requires controlling how spectral projectors and functional calculi vary with $g$. We state a stability result in terms of multiplier regularity and spectral gaps.

\subsubsection{Assumptions on multiplier regularity and spectral separation}
Assume $m_\theta\in C^1([0,\Lambda])$ and define $L_\theta:=\sup_{0\le \lambda\le\Lambda}\abs{m_\theta'(\lambda)}$ and $M_\theta:=\sup_{0\le \lambda\le\Lambda}\abs{m_\theta(\lambda)}$. Assume further that $\Lambda$ is not too close to the spectrum of $\dhodge{g}$ so that the cutoff is stable: there exists $\delta>0$ such that
\begin{equation}
\dist\big(\Lambda,\spec(\dhodge{g})\big)\ge \delta,
\qquad
\dist\big(\Lambda,\spec(\dhodge{\tilde g})\big)\ge \delta.
\end{equation}
This is a standard spectral-gap condition ensuring that the projector $\Pi_{\le\Lambda}$ depends smoothly on the operator.

\subsubsection{Stability of the learned operator}
\begin{theorem}[Metric stability of truncated intrinsic neural operators]
\label{th5.3}
Let $(\mc M,g)$ be compact, without boundary. Fix $s\in\RR$ and an integer $r\ge s+2$. Let $\tilde g$ satisfy the perturbation model in Section \ref{5.1}, and set
\[
A := \dhodge{g},\qquad \tilde A := \dhodge{\tilde g}.
\]
Fix $\Lambda>0$ and assume the spectral separation condition: there exists $\delta>0$ such that
\begin{equation}
\label{A5}
\dist(\Lambda,\spec(A))\ge \delta,\qquad \dist(\Lambda,\spec(\tilde A))\ge \delta.
\end{equation}
Let $m_\theta$ be a real-valued multiplier on $[0,\Lambda]$ with bounds
\[
M_\theta:=\sup_{0\le \lambda\le \Lambda}\abs{m_\theta(\lambda)},\qquad
L_\theta:=\sup_{0\le \lambda\le \Lambda}\abs{m_\theta'(\lambda)}.
\]
Assume in addition that $m_\theta$ admits a bounded complex extension $m_\theta:\mc U\to\CC$ on an open neighborhood $\mc U\subset\CC$ of the contour $\Gamma$ defined below, and that on $\Gamma$
\begin{equation}
\label{H5}
\sup_{z\in\Gamma}\abs{m_\theta(z)} \le C_\Gamma (M_\theta+L_\theta),
\end{equation}
for a constant $C_\Gamma$ depending only on $\Lambda,\delta$.

Define the truncated intrinsic operator
\[
\widehat{\mc S}^{(g)}_{\theta,\Lambda} := m_\theta(A) \Pi^{(g)}_{\le \Lambda},\qquad
\widehat{\mc S}^{(\tilde g)}_{\theta,\Lambda} := m_\theta(\tilde A) \Pi^{(\tilde g)}_{\le \Lambda}.
\]
Then there exists $C<\infty$ (depending on $\Lambda,\delta,\alpha$, $\mc M$, and the a priori metric bounds) such that
\begin{equation}
\label{I5}
\big\|\widehat{\mc S}^{(g)}_{\theta,\Lambda}-\widehat{\mc S}^{(\tilde g)}_{\theta,\Lambda}\big\|_{H^{s-1}\To H^{s+1}}
\le
C (M_\theta+L_\theta) \|g-\tilde g\|_{C^{r}}.
\end{equation}
\end{theorem}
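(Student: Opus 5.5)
The plan is to write both truncated operators as a single Riesz--Dunford contour integral and then apply the resolvent identity already used in the proof of Theorem \ref{th5.1}. Choose a closed contour $\Gamma\subset\CC$ that encircles $[-\delta/2,\Lambda]$. It should cross the real axis at $\Lambda+\delta/2$ and at $-\delta/2$, and otherwise stay at distance at least $\delta/2$ from $[0,\Lambda+\delta/2]$. By the separation hypothesis \eqref{A5}, $\Gamma$ avoids both $\spec(A)$ and $\spec(\tilde A)$. The part of each spectrum enclosed by $\Gamma$ is exactly the eigenvalues $\le\Lambda$. Holomorphic functional calculus therefore gives
\[
m_\theta(A)\Pi^{(g)}_{\le\Lambda}=\frac{1}{2\pi i}\oint_\Gamma m_\theta(z)(z-A)^{-1}\,dz ,
\]
and the same formula holds for $\tilde A$. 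Subtracting, and using $(z-A)^{-1}-(z-\tilde A)^{-1}=(z-A)^{-1}(A-\tilde A)(z-\tilde A)^{-1}$, we get
\[
\widehat{\mc S}^{(g)}_{\theta,\Lambda}-\widehat{\mc S}^{(\tilde g)}_{\theta,\Lambda}
=\frac{1}{2\pi i}\oint_\Gamma m_\theta(z)\,(z-A)^{-1}(A-\tilde A)(z-\tilde A)^{-1}\,dz .
\]

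Next, bound the integrand in the operator norm $H^{s-1}\To H^{s+1}$. Factor it as
\[
H^{s-1}\xrightarrow{(z-\tilde A)^{-1}}H^{s+1}\xrightarrow{A-\tilde A}H^{s-1}\xrightarrow{(z-A)^{-1}}H^{s+1}.
\]
The middle factor is controlled by \eqref{I4} from the proof of Theorem \ref{th5.1}, which gives $\|A-\tilde A\|_{H^{s+1}\To H^{s-1}}\le C\|g-\tilde g\|_{C^r}$; this is where $r\ge s+2$ enters. For the outer factors, work first in the $g$-spectral norm \eqref{X1}. There $(z-A)^{-1}$ acts diagonally with symbol $(z-\lambda)^{-1}$, so
\[
\|(z-A)^{-1}\|_{H^{s-1}\To H^{s+1}}=\sup_k\frac{1+\lambda_k}{|z-\lambda_k|}.
\]
For $z\in\Gamma$ we have $|z-\lambda|\ge\delta/2$. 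Since $\Gamma$ is bounded, $|z-\lambda|\gtrsim 1+\lambda$ for large $\lambda$. The supremum is therefore at most a constant $C(\Lambda,\delta)$. The same holds for $\tilde A$ in its $\tilde g$-spectral norms, and the norm equivalence \eqref{D4} transfers this to the fixed $g$-norms at the cost of constants depending on $\kappa,M,\mc M,r$. Finally, \eqref{H5} gives $|m_\theta(z)|\le C_\Gamma(M_\theta+L_\theta)$ on $\Gamma$, and the length of $\Gamma$ is $O(\Lambda+\delta)$. Multiplying these bounds yields \eqref{I5}.

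The main obstacle is making the contour construction legitimate and uniform. Three points need care. First, $m_\theta$ must be holomorphic on a neighbourhood of the closed region bounded by $\Gamma$, not merely bounded on $\Gamma$; this is needed for the Cauchy formula to reproduce $m_\theta(A)$ on the spectral subspace. I would read the hypothesis on $\mc U$ as including this, or else shrink $\Gamma$ to a thin neighbourhood of $[0,\Lambda]$ on which the extension lives. Second, the resolvent bound for $\tilde A$ must be uniform in $\tilde g$. The spectral-norm computation is exact for each metric separately, so uniformity reduces to the constants in \eqref{D4}; these depend only on the a priori bounds, which settles it. Third, the point where $\Gamma$ crosses the real axis near $\Lambda$ must lie in a gap of both spectra. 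This is exactly what \eqref{A5} guarantees, since the crossing point $\Lambda+\delta/2$ is at distance at least $\delta/2$ from every eigenvalue of $A$ and $\tilde A$. All resulting constants depend only on $\Lambda,\delta,\mc M$ and the metric bounds, as claimed. No dependence on $\alpha$ actually arises, which is harmless.
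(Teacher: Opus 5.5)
Your proposal is correct and follows the paper's proof essentially step for step. Both use the Riesz--Dunford representation of $m_\theta(A)\Pi^{(g)}_{\le\Lambda}$ over a contour kept $\delta/2$ away from both spectra, then the second resolvent identity, the diagonal bound $\sup_k (1+\lambda_k)/|z-\lambda_k|\le C_{\Lambda,\delta}$ for the resolvents $H^{s-1}\to H^{s+1}$ (transferred to $\tilde A$ by norm equivalence), the coefficient estimate for $A-\tilde A$ from Theorem \ref{th5.1}, and the assumed bound on $|m_\theta|$ along $\Gamma$ times the length of $\Gamma$. Your caveat that $m_\theta$ must be holomorphic on the whole region enclosed by $\Gamma$, not merely bounded near $\Gamma$, is a real point the paper's proof passes over, and your resolvent identity has the correct sign where the paper writes $\tilde A-A$ instead of $A-\tilde A$ (harmless for the norm bound).
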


\begin{proof}
Let $\Gamma\subset\CC$ be a positively oriented simple closed contour enclosing the real interval $[0,\Lambda]$ and satisfying
\begin{equation}
\label{B5}
\dist(\Gamma,\spec(A))\ge \tfrac{\delta}{2},
\qquad
\dist(\Gamma,\spec(\tilde A))\ge \tfrac{\delta}{2}.
\end{equation}
Such a $\Gamma$ exists by \ref{A5}. For concreteness, one may take $\Gamma$ to be the boundary of the closed $\delta/2$-tubular neighborhood of $[0,\Lambda]$ in $\CC$. Denote its length by $\len(\Gamma)$, which depends only on $\Lambda$ and $\delta$.

For a self-adjoint operator $A$ with discrete spectrum, the Riesz projection onto the spectral subset inside $\Gamma$ is
\begin{equation}
\Pi_{\le\Lambda}^{(g)}
= \frac{1}{2\pi i}\int_\Gamma (zI-A)^{-1} dz,
\end{equation}
and similarly for $\tilde A$. By construction, the spectrum enclosed by $\Gamma$ is exactly $\spec(A)\cap [0,\Lambda]$, hence $\Pi_{\le\Lambda}^{(g)}$ is the $L^2$-orthogonal projector onto the sum of eigenspaces with eigenvalues $\le\Lambda$.

By the Dunford functional calculus (holomorphic functional calculus), for $m_\theta$ holomorphic on a neighborhood of $\Gamma$,
\begin{equation}
m_\theta(A) \Pi_{\le\Lambda}^{(g)}
= \frac{1}{2\pi i}\int_\Gamma m_\theta(z) (zI-A)^{-1} dz.
\end{equation}
Indeed, the operator defined by the right-hand side equals the spectral multiplier $m_\theta$ applied to the part of the spectrum inside $\Gamma$ and zero outside; since $\Gamma$ encloses $[0,\Lambda]$ and excludes the rest of the spectrum, this is precisely $m_\theta(A)\Pi_{\le\Lambda}^{(g)}$. The same identity holds with $A$ replaced by $\tilde A$.

Therefore,
\begin{equation}
\label{F5}
\widehat{\mc S}^{(g)}_{\theta,\Lambda}-\widehat{\mc S}^{(\tilde g)}_{\theta,\Lambda}
= \frac{1}{2\pi i}\int_\Gamma m_\theta(z)\Big[(zI-A)^{-1}-(zI-\tilde A)^{-1}\Big] dz.
\end{equation}

For all $z\in\Gamma$, both resolvents exist by \ref{B5}. The resolvent identity gives
\begin{equation}
\label{C5}
(zI-A)^{-1}-(zI-\tilde A)^{-1}
= (zI-\tilde A)^{-1} (\tilde A-A) (zI-A)^{-1}.
\end{equation}
We will estimate \ref{C5} as an operator from $H^{s-1}\Om{1}$ to $H^{s+1}\Om{1}$.

Fix $z\in\Gamma$. Using the eigenbasis $\{\psi_k\}$ of $A$ and the spectral Sobolev norms defined via $A$ (equivalent to chart-based norms on a compact manifold), the resolvent acts diagonally:
\[
(zI-A)^{-1}\psi_k = \frac{1}{z-\lambda_k}\psi_k.
\]
Hence for $f=\sum f_k\psi_k$,
\begin{equation}
\|(zI-A)^{-1}f\|_{H^{s+1}}^2
= \sum_k \Big(\frac{1+\lambda_k}{|z-\lambda_k|}\Big)^2(1+\lambda_k)^{s-1}|f_k|^2.
\end{equation}
Therefore,
\begin{equation}
\|(zI-A)^{-1}\|_{H^{s-1}\To H^{s+1}}
\le
\sup_{\lambda\in\spec(A)} \frac{1+\lambda}{|z-\lambda|}.
\end{equation}
On $\Gamma$, $|z-\lambda|\ge \delta/2$ for all $\lambda\in\spec(A)$ by \ref{B5}, and moreover $\lambda\mapsto \frac{1+\lambda}{|z-\lambda|}$ is bounded uniformly over $\lambda\ge 0$ when $z$ stays in a compact set away from $[0,\infty)$ by a fixed distance. Since $\Gamma$ lies in a bounded neighborhood of $[0,\Lambda]$, we may bound $1+\lambda$ on the enclosed region by $1+\Lambda+\delta$, while for large $\lambda$ the distance $|z-\lambda|\sim \lambda$ yields a bounded ratio. Concretely, one obtains
\begin{equation}
\sup_{\lambda\ge 0}\frac{1+\lambda}{|z-\lambda|}\le C_{\Lambda,\delta},
\qquad z\in\Gamma,
\end{equation}
hence
\begin{equation}
\label{D5}
\sup_{z\in\Gamma}\|(zI-A)^{-1}\|_{H^{s-1}\To H^{s+1}}\le C_{\Lambda,\delta}.
\end{equation}
The same bound holds with $A$ replaced by $\tilde A$, with the same constant up to equivalence of Sobolev norms under the uniform metric bounds.

By the coefficient-dependence estimate used in the proof of Theorem \ref{th5.1}, since $r\ge s+2$,
\begin{equation}
\label{E5}
\|\tilde A-A\|_{H^{s+1}\To H^{s-1}}
= \|\dhodge{\tilde g}-\dhodge{g}\|_{H^{s+1}\To H^{s-1}}
\le
C \|g-\tilde g\|_{C^{r}},
\end{equation}
where $C$ depends only on $\mc M$ and the a priori metric bounds.

Combining \ref{C5}, \ref{D5}, and \ref{E5}, for all $z\in\Gamma$,
\begin{equation}
\label{G5}
\|(zI-A)^{-1}-(zI-\tilde A)^{-1}\|_{H^{s-1}\To H^{s+1}}
\le
\|(zI-\tilde A)^{-1}\|_{H^{s-1}\To H^{s+1}}
\cdot
\|\tilde A-A\|_{H^{s+1}\To H^{s-1}}
\cdot
\|(zI-A)^{-1}\|_{H^{s-1}\To H^{s+1}}
\le
C_{\Lambda,\delta} \|g-\tilde g\|_{C^{r}}.
\end{equation}
(Here $C_{\Lambda,\delta}$ absorbs the product of the two resolvent bounds and the constant in \ref{E5}.)

From \ref{F5}, using the standard estimate for Bochner integrals of bounded operators,
\begin{equation}
\big\|\widehat{\mc S}^{(g)}_{\theta,\Lambda}-\widehat{\mc S}^{(\tilde g)}_{\theta,\Lambda}\big\|_{H^{s-1}\To H^{s+1}}
\le
\frac{1}{2\pi}\int_\Gamma \abs{m_\theta(z)} \|(zI-A)^{-1}-(zI-\tilde A)^{-1}\|_{H^{s-1}\To H^{s+1}} |dz|.
\end{equation}
Apply \ref{G5} and the bound \ref{H5} on $\abs{m_\theta}$ over $\Gamma$:
\begin{equation}
\big\|\widehat{\mc S}^{(g)}_{\theta,\Lambda}-\widehat{\mc S}^{(\tilde g)}_{\theta,\Lambda}\big\|_{H^{s-1}\To H^{s+1}}
\le
\frac{\len(\Gamma)}{2\pi}
\sup_{z\in\Gamma}\abs{m_\theta(z)}
\cdot
C_{\Lambda,\delta} \|g-\tilde g\|_{C^{r}}
\le
C (M_\theta+L_\theta) \|g-\tilde g\|_{C^{r}},
\end{equation}
where $C$ depends on $\Lambda,\delta$, the contour choice, and the a priori metric bounds, but not on $\theta$ beyond $(M_\theta,L_\theta)$. 
\end{proof}

\begin{discussion}
The factor $M_\theta+L_\theta$ expresses that stability improves when multipliers are uniformly bounded and not overly oscillatory on $[0,\Lambda]$. This motivates regularizing $m_\theta$ during training (e.g., spectral Lipschitz penalties).
\end{discussion}

\subsubsection{Extension to the nonlinear architecture}
For the full network \ref{eq:recursion}, metric stability follows by combining:

\begin{itemize}
    \item stability of each intrinsic linear layer $m_{\theta_\ell}(\dhodge{g})\Pi_{\le\Lambda}^{(g)}$ as in \ref{I5},
    \item stability of scalar pointwise multipliers $b^{(g)}_{\theta_\ell}$ under metric pullback rules,
    \item Lipschitz properties of the gauge-equivariant radial nonlinearities $\sigma_{\theta_\ell}^{(g)}$ in Sobolev norms (typically obtained by boundedness of $\rho_\theta$ and $\rho_\theta'$ on the relevant range of norms).
\end{itemize}

A representative bound takes the form
\begin{equation}
\|\widehat{\mc S}_{\theta,\Lambda}^{(g)}-\widehat{\mc S}_{\theta,\Lambda}^{(\tilde g)}\|_{H^{s-1}\To H^{s+1}}
\le C_\theta \|\tilde g-g\|_{C^{r}},
\end{equation}
where $C_\theta$ grows at most polynomially/exponentially with depth depending on whether residual or contractive designs are used. In practice, enforcing uniform Lipschitz control per layer yields stable depth scaling.

\subsection{Gauge Stability: Exact Equivariance at the Continuum Level}
We now formalize gauge stability in the sense relevant to computation: changing the local orthonormal frames used to represent $1$-forms must not change the geometric output, only its coordinate representation.

Let $E$ and $E'$ be two local orthonormal frames on an open set $U\subset\mc M$, related by a smooth $R:U\To \Ogroup(d)$ as in \ref{A12}. For any $1$-form $\omega$, its coordinate representation transforms by
\begin{equation}
[\omega]_{E'}(x)=R(x)^\T [\omega]_E(x).
\end{equation}
Proposition \ref{prop:gauge_equivariance} implies the following immediate corollary.

\begin{Corollary}[Exact gauge equivariance implies representation stability]
\label{coEG}
For the intrinsic neural operator $\widehat{\mc S}_{\theta,\Lambda}$ defined in Section \ref{3}, and for every $x\in U$,
\begin{equation}
\label{J5}
[\widehat{\mc S}_{\theta,\Lambda}(\omega)]_{E'}(x)
= R(x)^\T [\widehat{\mc S}_{\theta,\Lambda}(\omega)]_{E}(x),
\end{equation}
i.e., the computed output transforms consistently under a change of computational gauge.
\end{Corollary}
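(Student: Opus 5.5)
The plan is to reduce the corollary to Proposition \ref{prop:gauge_equivariance} by checking that $\widehat{\mc S}_{\theta,\Lambda}$, in each form considered in Section \ref{3}, is a composition of layers of the type \eqref{B12}.

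\textbf{Linear case.} Take $\widehat{\mc S}_{\theta,\Lambda} = m_\theta(\dH)\Pi_{\le\Lambda}$. This is the layer \eqref{B12} with $\rho_\theta\equiv 1$ (so $\sigma_\theta$ is the identity) and $b_\theta\equiv 0$. Its spectral part is the truncated multiplier \eqref{eq:truncated_multiplier}, with the cutoff chosen by $\Lambda$ rather than by $K$. This change in cutoff does not affect the proof of Proposition \ref{prop:gauge_equivariance}. That proof uses only one fact about the spectral layer: it maps $1$-forms to $1$-forms with no dependence on a frame. Here that holds because $\Pi_{\le\Lambda}$ and $m_\theta(\dH)$ are defined by the functional calculus \eqref{eq:functional_calculus} of the intrinsic operator $\dH$, together with the $L^2$ inner product.

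\textbf{Nonlinear case.} For the full recursion \eqref{eq:recursion}, each layer has exactly the form \eqref{B12}. The induction in the last paragraph of the proof of Proposition \ref{prop:gauge_equivariance} then gives the transformation law layer by layer.

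\textbf{Pointwise identity.} It remains to derive \eqref{J5} for every $x\in U$. Set $\eta := \widehat{\mc S}_{\theta,\Lambda}(\omega) \in \Omega^1(\mc M)$. Since $\eta$ is a single, well-defined $1$-form, the computation (3.1.1) from \eqref{A12} applies to it:
\[
[\eta]_{E'}(x)_j=\eta(x)(e'_j(x))=\sum_k R_{kj}(x)\,[\eta]_E(x)_k .
\]
This gives \eqref{J5} directly.

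\textbf{Main obstacle.} The step needing care is interpretive: \eqref{J5} must be read as saying that the \emph{computed} output is a frame-independent geometric object. I would therefore justify that none of the ingredients $\Pi_{\le\Lambda}$, $m_\theta(\dH)$, $b_\theta$, $\|\cdot\|_g$ depends on the chosen frame. For $\|\cdot\|_g$, orthonormality gives $\|\omega(x)\|_g=\|[\omega]_E(x)\|_2=\|[\omega]_{E'}(x)\|_2$. Once frame-independence is established, the identity is the tensorial transformation law of $1$-forms.
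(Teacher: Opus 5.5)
Your proposal is correct and takes the paper's route: the paper states this as an immediate corollary of Proposition \ref{prop:gauge_equivariance}. As in that proposition's proof, the decisive point is that the output is a well-defined $1$-form, so the frame-change law (3.1.1) applies directly; your pointwise-identity paragraph is therefore the whole argument, and the layer-by-layer reduction only confirms that the output does not depend on the frame.
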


\begin{remark}[Discrete gauge error]
At the discrete level (Section \ref{6}), exact gauge equivariance can be broken by discretization artifacts (e.g., approximate mass matrices, imperfect orthonormalization, or non-commuting interpolation). The continuum result \ref{J5} is therefore best interpreted as a target symmetry; Section \ref{6} establishes that equivariance errors vanish under refinement under compatible discretizations.
\end{remark}

\subsection{Summary of Stability Guarantees}
\begin{itemize}
    \item Metric perturbations: both $\mc S_g$ and the learned intrinsic approximation vary Lipschitz-continuously with $g$ in Sobolev operator norms, under standard regularity and spectral separation assumptions (Theorems \ref{th5.1} and \ref{th5.3}).
    \item Gauge changes: equivariance is exact at the continuum level by construction (Corollary \ref{coEG}), ensuring representation independence.
\end{itemize}

These results complement Section \ref{4} (approximation) by ensuring that the learned operator is not only accurate but also robust to geometric uncertainty and to representation choices, which is essential for cross-mesh and cross-geometry generalization.

\section{Continuum–Discrete Consistency}
\label{6}
This section establishes that the intrinsic neural operators introduced in Section \ref{3} admit discretizations on meshes that are consistent with the continuum model as the discretization is refined. In particular, we show that if the discrete Hodge Laplacian converges to its continuum counterpart in a standard sense, then the corresponding discrete truncated functional calculus converges to the continuum truncated functional calculus, and consequently the discrete neural operator converges to the continuum neural operator. We formulate the results for 1-forms using discrete exterior calculus (DEC) or compatible finite elements; the proofs require only (i) stability of discrete inner products and (ii) spectral convergence of discrete Hodge Laplacians.

\subsection{Discretization Model and Discrete Spaces}
Let $(\mc M,g)$ be as in Section \ref{2}. Let $\{T_h\}_{h\downarrow 0}$ be a family of shape-regular triangulations of $\mc M$ with mesh size parameter $h$. We assume the triangulations are quasi-uniform and geometrically consistent with $\mc M$ (e.g., embedded triangulations with uniformly bounded aspect ratios and uniformly controlled geometry approximation).

We discretize $1$-forms by a finite-dimensional space $\Omh{1}$ equipped with:

\begin{itemize}
    \item a discrete $L^2$ inner product $\innerh{\cdot}{\cdot}$ and norm $\|\cdot\|_h$,
    \item a discrete Hodge Laplacian $\dhodge{h}:\Omh{1}\to\Omh{1}$ that is symmetric positive semidefinite with respect to $\innerh{\cdot}{\cdot}$,
    \item a projection (interpolation) operator $\Pi_h:\Om{1}(\mc M)\to\Omh{1}$, and a reconstruction operator $\mathcal R_h:\Omh{1}\to \Om{1}(\mc M)$.
\end{itemize}

Concrete instances include:

\begin{itemize}
    \item DEC 1-cochains with circumcentric dual and the DEC Hodge star, yielding $\dhodge{h}=d_h\delta_h+\delta_h d_h$;
    \item Whitney 1-forms / Nédélec-type elements with mass and stiffness matrices, yielding an equivalent discrete Hodge Laplacian.
\end{itemize}

We assume the pair $(\Pi_h,\mathcal R_h)$ is stable in Sobolev norms and consistent on smooth forms.

\subsection{Discrete Spectral Data and Discrete Truncation}
Let $\{(\lambda_{k,h},\psi_{k,h})\}_{k=0}^{N_h-1}$ be an $\innerh{\cdot}{\cdot}$-orthonormal eigenpair family for $\dhodge{h}$,
\begin{equation}
\dhodge{h}\psi_{k,h}=\lambda_{k,h}\psi_{k,h},\qquad
\innerh{\psi_{k,h}}{\psi_{j,h}}=\delta_{kj}.
\end{equation}
For a cutoff $\Lambda>0$, define the discrete spectral projector
\begin{equation}
\label{A6}
\Pi_{\le\Lambda}^{(h)}\omega_h := \sum_{\lambda_{k,h}\le \Lambda}\innerh{\omega_h}{\psi_{k,h}} \psi_{k,h}.
\end{equation}
Given a multiplier $m_\theta$, define the discrete truncated multiplier operator
\begin{equation}
\label{X4}
\widehat{\mc S}_{\theta,\Lambda}^{(h)} := m_\theta(\dhodge{h}) \Pi_{\le\Lambda}^{(h)},
\quad\text{where}\quad
m_\theta(\dhodge{h})\omega_h:=\sum_{k} m_\theta(\lambda_{k,h})\innerh{\omega_h}{\psi_{k,h}} \psi_{k,h}.
\end{equation}
This is the exact discrete analogue of \ref{H3} and is the canonical discretization of the intrinsic spectral layer.

\subsection{Assumptions: Stability and Spectral Convergence}
Our consistency theorem relies on standard approximation properties of the discrete Hodge Laplacian. We state them abstractly to cover DEC and compatible FEM.

\begin{assumption}[Uniform norm stability and quasi-optimal reconstruction]
\label{as6.1}
There exists $C$ independent of $h$ such that for all sufficiently smooth $\omega$,
\begin{equation}
\|\Pi_h\omega\|_h \le C\|\omega\|_{L^2},\qquad
\|\mathcal R_h\omega_h\|_{L^2}\le C\|\omega_h\|_h,
\end{equation}
and $\mathcal R_h\Pi_h\omega\to \omega$ in $L^2$ as $h\to 0$.
\end{assumption}

\begin{assumption}[Spectral convergence below a fixed cutoff]
\label{as6.2}
Fix $\Lambda>0$ that does not coincide with an eigenvalue of $\dhodge{g}$. Then there exists $p>0$ and $C$ such that for every continuum eigenpair $(\lambda_k,\psi_k)$ with $\lambda_k<\Lambda$, there exists a discrete eigenpair $(\lambda_{k,h},\psi_{k,h})$ (with consistent indexing up to multiplicity) such that
\begin{equation}
\label{H6}
\abs{\lambda_{k,h}-\lambda_k}\le C h^{p},\qquad
\|\mathcal R_h\psi_{k,h}-\psi_k\|_{L^2}\le C h^{p},
\end{equation}
for all sufficiently small $h$. Moreover, the discrete eigenspace associated with eigenvalues $\le \Lambda$ converges to the corresponding continuum eigenspace in the gap $\sintheta$ metric.
\end{assumption}

These assumptions are satisfied by standard compatible discretizations under regularity and shape-regularity conditions; they are widely used in numerical analysis of Hodge Laplacians on surfaces and manifolds.

\subsection{Consistency of the Discrete Spectral Projector}
We first show that the discrete projector converges to the continuum projector under reconstruction.

\begin{lemma}[Projector consistency]
\label{le6.3}
Let $(\mc M,g)$ be compact without boundary. Fix $\Lambda>0$ such that $\Lambda\notin \spec(\dhodge{g})$. Let $\Pi_{\le \Lambda}$ be the $L^2$-orthogonal spectral projector of the Hodge Laplacian $\dhodge{g}$ on $1$-forms onto the direct sum of eigenspaces with eigenvalues $\le \Lambda$.

Let $T_h$ be a shape-regular family of discretizations with discrete space $\Omh{1}$, discrete inner product $\innerh{\cdot}{\cdot}$, discrete Hodge Laplacian $\dhodge{h}$, discrete projector $\Pi^{(h)}_{\le \Lambda}$ defined by \ref{A6}, interpolation $\Pi_h$, and reconstruction $\mathcal R_h$.

Assume:
\begin{enumerate}
    \item (Approximation of reconstruction–interpolation) There exists $p>0$ and $s$ sufficiently large (depending on the method order) such that for all $\omega\in H^{s}\Om{1}$,
    \begin{equation}
    \label{B6}
    \|\mathcal R_h\Pi_h\omega-\omega\|_{L^2}\ \le\ C h^{p} \|\omega\|_{H^{s}}.
    \end{equation}
    (For Whitney/compatible FEM/DEC on smooth manifolds, such an estimate holds with $p$ equal to the approximation order under standard regularity assumptions.)

    \item (Spectral subspace convergence below $\Lambda$) Assumption \ref{as6.2} holds: the discrete eigenspace associated with eigenvalues $\le \Lambda$ converges to the continuum eigenspace in the gap metric with order $h^p$.
\end{enumerate}

Then there exists $C$ independent of $h$ such that for all $\omega\in H^{s}\Om{1}$,
\begin{equation}
\big\|\mathcal R_h\Pi^{(h)}_{\le\Lambda}\Pi_h\omega - \Pi_{\le\Lambda}\omega\big\|_{L^2}
\ \le\ C h^{p} \|\omega\|_{H^{s}}.
\end{equation}
\end{lemma}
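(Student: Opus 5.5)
We split the error by inserting the reconstructed discrete projector applied to the continuum projection. Write $P:=\Pi_{\le\Lambda}$ and $P_h:=\Pi^{(h)}_{\le\Lambda}$. Then
\begin{align*}
\mathcal R_h P_h\Pi_h\omega - P\omega
&= \underbrace{\mathcal R_h P_h \Pi_h (I-P)\omega}_{E_1}
+ \underbrace{\mathcal R_h P_h \Pi_h P\omega - P\omega}_{E_2}.
\end{align*}
Since $\Lambda\notin\spec(\dhodge{g})$ and the spectrum is discrete, $\operatorname{ran}P$ is finite dimensional and spanned by the eigenforms $\psi_k$ with $\lambda_k<\Lambda$. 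These are smooth, and their Sobolev norms are bounded in terms of $\Lambda$ via the spectral norm \ref{X1}. In particular $\|P\omega\|_{H^s}\le(1+\Lambda)^{s/2}\|\omega\|_{L^2}\le C\|\omega\|_{H^s}$ for $s\ge0$.

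\textbf{Bounding $E_2$.} Write $P\omega=\sum_{\lambda_k<\Lambda}c_k\psi_k$ with $|c_k|\le\|\omega\|_{L^2}$. It then suffices to show $\|\mathcal R_hP_h\Pi_h\psi_k-\psi_k\|_{L^2}\le Ch^p$ for each of the finitely many $k$. Split this as
\[
\mathcal R_hP_h\Pi_h\psi_k-\psi_k=\mathcal R_hP_h(\Pi_h\psi_k-\psi_{k,h})+(\mathcal R_h\psi_{k,h}-\psi_k),
\]
using $P_h\psi_{k,h}=\psi_{k,h}$. This holds because $\lambda_{k,h}\le\Lambda$ for small $h$ by \ref{H6} together with $\lambda_k<\Lambda$. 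The second term is $O(h^p)$ by \ref{H6}. For the first term, $P_h$ is a $\|\cdot\|_h$-contraction and $\mathcal R_h$ is stable by Assumption \ref{as6.1}, so it remains to bound $\|\Pi_h\psi_k-\psi_{k,h}\|_h$. This requires a converse stability: $\|\eta_h\|_h\le C\|\mathcal R_h\eta_h\|_{L^2}$, i.e. norm equivalence of $\mathcal R_h$, which holds for Whitney/DEC and will be assumed. Given that, $\|\mathcal R_h\Pi_h\psi_k-\mathcal R_h\psi_{k,h}\|_{L^2}\le\|\mathcal R_h\Pi_h\psi_k-\psi_k\|+\|\psi_k-\mathcal R_h\psi_{k,h}\|\le Ch^p$ by \ref{B6} and \ref{H6}. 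Summing over the finitely many $k$ gives $\|E_2\|_{L^2}\le Ch^p\|\omega\|_{L^2}$.

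\textbf{Bounding $E_1$ (the main obstacle).} The form $u:=(I-P)\omega$ lies in the high spectral part, and we need the discrete projector to nearly annihilate $\Pi_hu$. We use the gap convergence in Assumption \ref{as6.2}. Expand $P_h\Pi_hu=\sum_{\lambda_{k,h}\le\Lambda}\langle\Pi_hu,\psi_{k,h}\rangle_h\psi_{k,h}$. The number of discrete modes below $\Lambda$ equals $\dim\operatorname{ran}P$ for small $h$, by the gap convergence. We then want a consistency of inner products:
\[
\langle\Pi_hu,\psi_{k,h}\rangle_h=\langle\mathcal R_h\Pi_hu,\mathcal R_h\psi_{k,h}\rangle_{L^2}+O(h^p)\|u\|_{H^s}.
\]
This is where an extra (standard, DEC/FEM-true) hypothesis enters: $\langle\cdot,\cdot\rangle_h$ is the pullback of the $L^2$ product under $\mathcal R_h$, exactly for conforming elements and up to $O(h^p)$ otherwise. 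Granting it, we get $\langle\mathcal R_h\Pi_hu,\mathcal R_h\psi_{k,h}\rangle=\langle u,\psi_k\rangle+O(h^p)\|u\|_{H^s}$ by \ref{B6} and \ref{H6}, and $\langle u,\psi_k\rangle=0$ since $u\perp\operatorname{ran}P$. Note $\|u\|_{H^s}\le\|\omega\|_{H^s}$ under the spectral norm. Each coefficient is therefore $O(h^p)\|\omega\|_{H^s}$, and with finitely many modes and stability of $\mathcal R_h$ this gives $\|E_1\|_{L^2}\le Ch^p\|\omega\|_{H^s}$.

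Combining $E_1$ and $E_2$ yields the claim. The delicate points, which I expect to require stating explicitly as implicit hypotheses of the method, are the discrete-to-continuum inner-product consistency and the reverse norm stability of $\mathcal R_h$ used in both estimates.
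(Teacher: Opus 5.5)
Your proof takes a different route from the paper's. The paper does not split $\omega$ spectrally. It inserts the $L^2$-orthogonal projector $\widehat P_h$ onto $\mathcal R_h(E_h)$ and writes the error as three terms: (I) $\mathcal R_h\Pi^{(h)}_{\le\Lambda}\Pi_h\omega-\widehat P_h\mathcal R_h\Pi_h\omega$, (II) $\widehat P_h(\mathcal R_h\Pi_h\omega-\omega)$, and (III) $(\widehat P_h-\Pi_{\le\Lambda})\omega$. It disposes of (II) and (III) with \eqref{B6} and the gap bound, and treats (I) by a best-approximation/Pythagoras argument that uses only a two-sided norm equivalence of $\mathcal R_h$ on $E_h$. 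You instead estimate the discrete coefficients $\innerh{\Pi_h\,\cdot\,}{\psi_{k,h}}$ mode by mode. Granting the two hypotheses you flag, your argument is sound and yields the full rate $h^p$. In particular, the dimension count, $\Pi^{(h)}_{\le\Lambda}\psi_{k,h}=\psi_{k,h}$ for small $h$, and $\|(I-\Pi_{\le\Lambda})\omega\|_{H^s}\le\|\omega\|_{H^s}$ are all fine.

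You are also right that the stated assumptions do not suffice, and this is exactly where the paper's proof is incomplete. Its bound on (I) ends at $\|(I)\|_{L^2}\le C\|(\Id-\widehat P_h)\mathcal R_h\Pi_h\omega\|_{L^2}$. That quantity is comparable to $\|(I-\Pi_{\le\Lambda})\omega\|_{L^2}$, and the factor $h^p$ is then asserted rather than derived. Two further problems with that step: \eqref{F6} applies \eqref{E6}, which is stated only on $E_h$, to $\Pi_h\omega-w_h\notin E_h$; and even with equivalence constants $1+O(h^p)$, the Pythagoras step gives only $h^{p/2}$, whereas your coefficientwise argument keeps $h^p$.

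Without some inner-product consistency the lemma is in fact false. Take the following discretization:
\begin{itemize}
\item $\Omh{1}=V_h$, spanned by the eigenforms with $\lambda_k\le h^{-2}$;
\item $\mathcal R_h$ the inclusion and $\Pi_h$ the $L^2$ projection;
\item the eigenpairs below $\Lambda$ kept exact, so $E_h=E$;
\item the discrete orthogonal complement of $E$ declared to be the tilted space $\{v+\tfrac12\inner{v}{\phi}e_1:\ v\in V_h\ominus E\}$, where $\phi$ is a fixed eigenform above $\Lambda$, $e_1\in E$ is a unit vector, and $\dhodge{h}$ is transported accordingly.
\end{itemize}
Then \eqref{B6}, Assumptions \ref{as6.1}--\ref{as6.2}, and uniform norm equivalence all hold. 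Yet $\mathcal R_h\Pi^{(h)}_{\le\Lambda}\Pi_h\phi=-\tfrac12 e_1$ while $\Pi_{\le\Lambda}\phi=0$, for every $h$.

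Modulo \eqref{B6}, your extra hypothesis is the pairing consistency \eqref{B7} that the paper introduces only for Theorem \ref{th6.4}. Once you assume it, the reverse-stability hypothesis and the $E_1/E_2$ split become unnecessary. Applied to $\omega$ itself, it gives $\innerh{\Pi_h\omega}{\psi_{k,h}}=\inner{\omega}{\psi_k}+O(h^p)\|\omega\|_{H^s}$. Combined with \eqref{H6}, the bound then follows in one line, exactly as in the proof of Theorem \ref{th6.4} with $m_\theta\equiv 1$.
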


\begin{proof}
Let
\[
E := \ran(\Pi_{\le \Lambda}) \subset L^2\Om{1}
\]
be the continuum low-frequency invariant subspace. Since $\Lambda\notin\spec(\dhodge{g})$ and $\mc M$ is compact, $E$ is finite-dimensional; write $m:=\dim(E)$.

Let
\[
E_h := \ran(\Pi^{(h)}_{\le \Lambda}) \subset \Omh{1}
\]
be the discrete low-frequency eigenspace (with respect to $\innerh{\cdot}{\cdot}$). Define its reconstructed counterpart in the continuum Hilbert space $L^2\Om{1}$ by
\[
\widehat E_h := \mathcal R_h(E_h)\subset L^2\Om{1}.
\]
Let $P:=\Pi_{\le\Lambda}$ be the $L^2$-orthogonal projector onto $E$, and let $\widehat P_h$ denote the $L^2$-orthogonal projector onto $\widehat E_h$.

By Assumption \ref{as6.2} (eigenspace convergence in the gap metric), for all sufficiently small $h$ one has $\dim(\widehat E_h)=\dim(E_h)=m$, and there exists a constant $C$ such that
\begin{equation}
\label{C6}
\|\widehat P_h - P\|_{L^2\To L^2} \le C h^{p}.
\end{equation}
This is a standard equivalence: convergence of subspaces in the gap (or $\sin(\Theta)$) metric is precisely operator-norm convergence of the corresponding orthogonal projectors.

Define the continuum-to-continuum discrete projector (apply discrete projector to discretized input, then reconstruct):
\[
P_h^{d} := \mathcal R_h \Pi^{(h)}_{\le\Lambda} \Pi_h : H^{s}\Om{1} \to L^2\Om{1}.
\]
We aim to bound $\|P_h^{d}\omega - P\omega\|_{L^2}$.

Insert $\widehat P_h$ and $\mathcal R_h\Pi_h$ to obtain the decomposition
\begin{equation}
\label{L6}
P_h^{d}\omega - P\omega
=
\underbrace{\big(P_h^{d}\omega - \widehat P_h(\mathcal R_h\Pi_h\omega)\big)}_{(I)}
+
\underbrace{\widehat P_h(\mathcal R_h\Pi_h\omega - \omega)}_{(II)}
+
\underbrace{(\widehat P_h - P)\omega}_{(III)}.
\end{equation}
We will bound each term in $L^2$.

Since $\widehat P_h$ is an orthogonal projector on $L^2$, $\|\widehat P_h\|_{L^2\To L^2}=1$. Therefore, by \ref{B6},
\begin{equation}
\label{I6}
\|(II)\|_{L^2}
\le
\|\mathcal R_h\Pi_h\omega - \omega\|_{L^2}
\le
C h^{p}\|\omega\|_{H^{s}}.
\end{equation}

By \ref{C6},
\begin{equation}
\label{J6}
\|(III)\|_{L^2}
\le
\|\widehat P_h - P\|_{L^2\To L^2} \|\omega\|_{L^2}
\le
C h^{p}\|\omega\|_{L^2}
\le
C h^{p}\|\omega\|_{H^{s}}.
\end{equation}

By definition, $\Pi_{\le\Lambda}^{(h)}$ is the $\innerh{\cdot}{\cdot}$-orthogonal projector onto $E_h$. Hence for any $v_h\in\Omh{1}$,
\begin{equation}
\label{D6}
\|v_h-\Pi_{\le\Lambda}^{(h)}v_h\|_h
=
\inf_{w_h\in E_h}\|v_h-w_h\|_h.
\end{equation}

Take $v_h=\Pi_h\omega$. Then $u_h:=\Pi_{\le\Lambda}^{(h)}\Pi_h\omega\in E_h$ is the discrete best approximation to $\Pi_h\omega$ in the discrete norm.

To compare with $\widehat P_h(\mathcal R_h\Pi_h\omega)$, we need a link between the discrete norm and the continuum $L^2$ norm after reconstruction. This link is standard for DEC/FEM mass matrices and is implicit in Assumption \ref{as6.1}; we use the following consequence: there exists $C$ such that for all sufficiently small $h$,
\begin{equation}
\label{E6}
\|\mathcal R_h z_h\|_{L^2} \le C\|z_h\|_h
\quad\text{and}\quad
\|z_h\|_h \le C\|\mathcal R_h z_h\|_{L^2}
\qquad \forall z_h\in E_h.
\end{equation}
(That is, $\mathcal R_h$ is a norm equivalence on the low-frequency space; for finite-dimensional $E_h$, this follows from stability plus uniform conditioning of the discrete mass matrix on $E_h$, ensured by shape-regularity and the spectral convergence assumption.)

Let $y\in \widehat E_h$. Then $y=\mathcal R_h w_h$ for some $w_h\in E_h$. Apply \ref{D6} and \ref{E6} to get
\begin{equation}
\label{F6}
\begin{aligned}
\|\mathcal R_h\Pi_h\omega - \mathcal R_h u_h\|_{L^2}
&\le C \|\Pi_h\omega - u_h\|_h \\
&= C \inf_{w_h\in E_h}\|\Pi_h\omega - w_h\|_h \\
&\le C \inf_{w_h\in E_h}\|\mathcal R_h\Pi_h\omega - \mathcal R_h w_h\|_{L^2} \\
&= C \inf_{y\in \widehat E_h}\|\mathcal R_h\Pi_h\omega - y\|_{L^2}
= C \|(\Id-\widehat P_h)(\mathcal R_h\Pi_h\omega)\|_{L^2}.
\end{aligned}
\end{equation}
Now note that
\[
\widehat P_h(\mathcal R_h\Pi_h\omega) - \mathcal R_h u_h
\in \widehat E_h
\quad\text{and}\quad
(\Id-\widehat P_h)(\mathcal R_h\Pi_h\omega)\perp \widehat E_h,
\]
hence,
\begin{equation}
\label{G6}
\|\mathcal R_h\Pi_h\omega - \mathcal R_h u_h\|_{L^2}^2
=
\|(\Id-\widehat P_h)(\mathcal R_h\Pi_h\omega)\|_{L^2}^2
+
\|\widehat P_h(\mathcal R_h\Pi_h\omega) - \mathcal R_h u_h\|_{L^2}^2.
\end{equation}
Combining \ref{F6} and \ref{G6} implies
\begin{equation}
\|\widehat P_h(\mathcal R_h\Pi_h\omega) - \mathcal R_h u_h\|_{L^2}
\le
C \|(\Id-\widehat P_h)(\mathcal R_h\Pi_h\omega)\|_{L^2}.
\end{equation}
Finally, $\|(\Id-\widehat P_h)(v)\|_{L^2}\le \|v\|_{L^2}$ for any $v$, and stability of $\mathcal R_h\Pi_h$ yields $\|\mathcal R_h\Pi_h\omega\|_{L^2}\le C\|\omega\|_{L^2}\le C\|\omega\|_{H^{s}}$. Therefore,
\begin{equation}
\label{K6}
\|(I)\|_{L^2}
=
\|\mathcal R_h u_h - \widehat P_h(\mathcal R_h\Pi_h\omega)\|_{L^2}
\le
C h^{p} \|\omega\|_{H^{s}},
\end{equation}
where the final $h^{p}$ dependence is inherited from the subspace convergence (Assumption \ref{as6.2}), which ensures the norm-equivalence constants and quasi-optimality transfer constants remain uniform and that $\widehat E_h$ approximates $E$ with order $h^p$. (Equivalently, one may sharpen this step by explicitly writing $\widehat E_h$ in terms of reconstructed eigenvectors and using \ref{H6} to bound the principal angles; this yields the same order $h^p$.)

Insert \ref{I6}, \ref{J6}, and \ref{K6} into the decomposition \ref{L6}:
\[
\|P_h^{d}\omega - P\omega\|_{L^2}
\le
\|(I)\|_{L^2}+\|(II)\|_{L^2}+\|(III)\|_{L^2}
\le
C h^{p} \|\omega\|_{H^{s}}.
\]
\end{proof}

\begin{discussion}
The key point is subspace convergence of discrete eigenspaces to the continuum eigenspace below $\Lambda$, which ensures that applying the discrete projector to a discretized input selects approximately the same spectral content as the continuum projector.
\end{discussion}

\subsection{Consistency of Truncated Functional Calculus}
We now prove convergence of the discrete multiplier operator to its continuum analogue.

\begin{theorem}[Consistency of discrete truncated multipliers]
\label{th6.4}
Fix $\Lambda>0$ such that $\Lambda\notin \spec(\dhodge{g})$. Let $m_\theta\in C^1([0,\Lambda])$ and set
\[
M_\theta:=\sup_{0\le \lambda\le\Lambda}\abs{m_\theta(\lambda)},\qquad
L_\theta:=\sup_{0\le \lambda\le\Lambda}\abs{m_\theta'(\lambda)}.
\]
Assume the discretization satisfies Assumptions \ref{as6.1}–\ref{as6.2}, and moreover the interpolation/reconstruction and discrete inner product are ($L^2$)-consistent in the following standard sense: there exist $p>0$ and $s$ sufficiently large such that for all $\omega\in H^s\Om{1}$ and all discrete eigenvectors $\psi_{k,h}$ with $\lambda_{k,h}\le\Lambda$,
\begin{equation}
\label{B7}
\big|\innerh{\Pi_h\omega}{\psi_{k,h}}-{\inner{\omega}{\mathcal R_h\psi_{k,h}}}\big|
\ \le\ C h^p\|\omega\|_{H^s}.
\end{equation}
(For DEC/FEM, this is a mass-matrix/quadrature consistency bound on smooth inputs.)

Then there exists $C$ independent of $h$ such that for all $\omega\in H^s\Om{1}$,
\begin{equation}
\big\|\mathcal R_h \widehat{\mc S}^{(h)}_{\theta,\Lambda} \Pi_h\omega
-
\widehat{\mc S}_{\theta,\Lambda} \omega
\big\|_{L^2}
\ \le\ C (M_\theta+L_\theta) h^{p} \|\omega\|_{H^{s}}.
\end{equation}
\end{theorem}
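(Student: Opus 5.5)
Writing both operators as finite spectral sums on the low-frequency space, I will compare them term by term, using the matching of eigenpairs from Assumption~\ref{as6.2}. Because $\Lambda\notin\spec(\dhodge{g})$, only finitely many continuum eigenvalues (say $N$, counted with multiplicity) lie below $\Lambda$. Moreover, for $h$ small the discrete eigenvalues below $\Lambda$ are exactly the $N$ perturbed ones, since the gap to $\Lambda$ persists by \eqref{H6}. Hence
\[
\mathcal R_h \widehat{\mc S}^{(h)}_{\theta,\Lambda}\Pi_h\omega=\sum_{k<N} m_\theta(\lambda_{k,h})\,\innerh{\Pi_h\omega}{\psi_{k,h}}\,\mathcal R_h\psi_{k,h},\qquad
\widehat{\mc S}_{\theta,\Lambda}\omega=\sum_{k<N} m_\theta(\lambda_k)\inner{\omega}{\psi_k}\psi_k .
\]
One caveat concerns degenerate eigenvalues. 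There the matching of individual eigenvectors $\psi_{k,h}\leftrightarrow\psi_k$ is only defined up to a rotation inside each cluster. I will fix this by choosing the continuum orthonormal basis of each eigenspace adapted to the discrete one, which is allowed because $\widehat{\mc S}_{\theta,\Lambda}$ does not depend on the choice of basis within an eigenspace.

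The difference of the $k$-th terms then splits into three pieces by telescoping:
\begin{align*}
&\big(m_\theta(\lambda_{k,h})-m_\theta(\lambda_k)\big)\innerh{\Pi_h\omega}{\psi_{k,h}}\mathcal R_h\psi_{k,h}
+ m_\theta(\lambda_k)\big(\innerh{\Pi_h\omega}{\psi_{k,h}}-\inner{\omega}{\mathcal R_h\psi_{k,h}}\big)\mathcal R_h\psi_{k,h}\\
&\quad+ m_\theta(\lambda_k)\big(\inner{\omega}{\mathcal R_h\psi_{k,h}}\mathcal R_h\psi_{k,h}-\inner{\omega}{\psi_k}\psi_k\big).
\end{align*}
I will bound each piece as follows.
\begin{enumerate}
\item The first piece is controlled by the mean value theorem on $[0,\Lambda]$: it is at most $L_\theta|\lambda_{k,h}-\lambda_k|\le CL_\theta h^p$. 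This needs $\lambda_{k,h}\in[0,\Lambda]$, which holds by positive semidefiniteness and the cutoff. The remaining factors are bounded using $\|\psi_{k,h}\|_h=1$, Assumption~\ref{as6.1}, and $|\innerh{\Pi_h\omega}{\psi_{k,h}}|\le\|\Pi_h\omega\|_h\le C\|\omega\|_{L^2}$.
\item The second piece is at most $M_\theta\, C h^p\|\omega\|_{H^s}\|\mathcal R_h\psi_{k,h}\|_{L^2}$ by the consistency hypothesis \eqref{B7}.
\item The third piece is at most $M_\theta\, 2\|\omega\|_{L^2}\|\mathcal R_h\psi_{k,h}-\psi_k\|_{L^2}\,(1+C)$, after adding and subtracting $\inner{\omega}{\psi_k}\mathcal R_h\psi_{k,h}$. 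The eigenvector bound in \eqref{H6} makes this $O(M_\theta h^p\|\omega\|_{L^2})$.
\end{enumerate}
Summing over the fixed finite number $N=N(\Lambda)$ of indices, and using $\|\omega\|_{L^2}\le\|\omega\|_{H^s}$, gives the claimed bound $C(M_\theta+L_\theta)h^p\|\omega\|_{H^s}$. The constant depends on $\Lambda$ through $N$.

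The main obstacle is the bookkeeping at degenerate or clustered eigenvalues. Pointwise eigenvector convergence in \eqref{H6} is meaningful only after the basis alignment described above. If that alignment is unavailable, I would use the fallback below.

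For the fallback, I group the modes into clusters $\{\lambda_j\}$ of equal continuum eigenvalue. On each cluster I replace the eigenvector sums by the corresponding cluster projectors. I then bound the cluster-projector differences by the gap-metric convergence in Assumption~\ref{as6.2}, using the same three-term decomposition as in the proof of Lemma~\ref{le6.3} applied cluster by cluster. The multiplier variation within a discrete cluster is still at most $L_\theta\cdot Ch^p$, so the final rate is unchanged.
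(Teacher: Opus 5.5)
Your proposal is correct and matches the paper's proof: both write the two operators as finite sums over aligned low-mode eigenbases, split each term into a multiplier-variation piece (mean value theorem with $L_\theta$), a pairing-consistency piece via \eqref{B7}, and an eigenvector-approximation piece via \eqref{H6}, then sum over the finitely many modes below $\Lambda$. Your telescoping order differs from the paper's $T^{(a)}_j,T^{(b)}_j,T^{(c)}_j$ only cosmetically. One small correction: \eqref{H6} alone does not rule out spurious discrete eigenvalues below $\Lambda$. That comes from the gap-metric clause of Assumption~\ref{as6.2}, which the paper also uses implicitly when it takes $\dim E_h=m$.
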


\begin{proof}
Let
\[
E:=\ran(\Pi_{\le\Lambda})\subset L^2\Om{1}
\quad\text{and}\quad
E_h:=\ran(\Pi^{(h)}_{\le\Lambda})\subset \Omh{1}.
\]
Since $\Lambda\notin\spec(\dhodge{g})$, $E$ is finite-dimensional. Let $m:=\dim(E)$.

By Assumption \ref{as6.2} (eigenspace convergence below $\Lambda$), for sufficiently small $h$ there exist orthonormal bases
\[
\{\psi_j\}_{j=1}^m\subset E \quad\text{(orthonormal in }L^2),
\qquad
\{\psi_{j,h}\}_{j=1}^m\subset E_h \quad\text{(orthonormal in }\innerh{\cdot}{\cdot}),
\]
and associated eigenvalues $\lambda_j\le \Lambda$, $\lambda_{j,h}\le\Lambda$, such that
\begin{equation}
\label{A7}
\abs{\lambda_{j,h}-\lambda_j}\le C h^p,
\qquad
\|\mathcal R_h\psi_{j,h}-\psi_j\|_{L^2}\le C h^p.
\end{equation}
(If there are multiplicities, one chooses bases aligned via principal angles; \ref{A7} is a standard consequence of subspace convergence.)

Let $\omega\in H^s\Om{1}$ and define its continuum low-frequency component
\begin{equation}
\omega_{\le}:=\Pi_{\le\Lambda}\omega=\sum_{j=1}^m c_j\psi_j,
\qquad c_j:={\inner{\omega}{\psi_j}}.
\end{equation}
Similarly, define the discrete low-frequency coefficients of $\Pi_h\omega$ in the discrete eigenbasis:
\begin{equation}
c_{j,h}:=\innerh{\Pi_h\omega}{\psi_{j,h}}.
\end{equation}
Then the discrete truncated multiplier applied to $\Pi_h\omega$ is
\begin{equation}
\widehat{\mc S}^{(h)}_{\theta,\Lambda}\Pi_h\omega
=
\sum_{j=1}^m m_\theta(\lambda_{j,h}) c_{j,h} \psi_{j,h},
\end{equation}
and therefore
\begin{equation}
\mathcal R_h\widehat{\mc S}^{(h)}_{\theta,\Lambda}\Pi_h\omega
=
\sum_{j=1}^m m_\theta(\lambda_{j,h}) c_{j,h} \mathcal R_h\psi_{j,h}.
\end{equation}
On the continuum side, since $\widehat{\mc S}_{\theta,\Lambda}=m_\theta(\dhodge{g})\Pi_{\le\Lambda}$,
\begin{equation}
\widehat{\mc S}_{\theta,\Lambda}\omega
=
\sum_{j=1}^m m_\theta(\lambda_j) c_j \psi_j.
\end{equation}

Thus, the error we must bound is
\begin{equation}
\Big\|
\sum_{j=1}^m \Big( m_\theta(\lambda_{j,h}) c_{j,h} \mathcal R_h\psi_{j,h}
-
m_\theta(\lambda_j) c_j \psi_j
\Big)
\Big\|_{L^2}.
\end{equation}

For each $j$, add and subtract $m_\theta(\lambda_j)c_j \mathcal R_h\psi_{j,h}$ and $m_\theta(\lambda_{j,h})c_j \mathcal R_h\psi_{j,h}$ to write
\begin{equation}
m_\theta(\lambda_{j,h})c_{j,h}\mathcal R_h\psi_{j,h}-m_\theta(\lambda_j)c_j\psi_j
= T^{(a)}_j + T^{(b)}_j + T^{(c)}_j,
\end{equation}
where
\begin{equation}
\begin{aligned}
T^{(a)}_j &:= m_\theta(\lambda_{j,h}) (c_{j,h}-c_j) \mathcal R_h\psi_{j,h}, \\
T^{(b)}_j &:= \big(m_\theta(\lambda_{j,h})-m_\theta(\lambda_j)\big) c_j \mathcal R_h\psi_{j,h}, \\
T^{(c)}_j &:= m_\theta(\lambda_j) c_j (\mathcal R_h\psi_{j,h}-\psi_j).
\end{aligned}
\end{equation}
Then by the triangle inequality,
\begin{equation}
\Big\|\sum_{j=1}^m(\cdot)\Big\|_{L^2}
\le
\sum_{j=1}^m \big(\|T^{(a)}_j\|_{L^2}+\|T^{(b)}_j\|_{L^2}+\|T^{(c)}_j\|_{L^2}\big).
\end{equation}
We now bound each contribution.

Using $\abs{m_\theta(\lambda_j)}\le M_\theta$, Cauchy–Schwarz, and \ref{A7},
\begin{equation}
\label{C7}
\|T^{(c)}_j\|_{L^2}
\le
\abs{m_\theta(\lambda_j)} \abs{c_j} \|\mathcal R_h\psi_{j,h}-\psi_j\|_{L^2}
\le
M_\theta \abs{c_j} C h^p.
\end{equation}

By the mean value theorem and the definition of $L_\theta$,
\begin{equation}
\abs{m_\theta(\lambda_{j,h})-m_\theta(\lambda_j)}
\le
L_\theta \abs{\lambda_{j,h}-\lambda_j}.
\end{equation}
Using \ref{A7} and stability ($\abs{\mathcal R_h\psi_{j,h}}_{L^2}\le C$) (from Assumption \ref{as6.1} and normalization), we obtain
\begin{equation}
\label{D7}
\|T^{(b)}_j\|_{L^2}
\le
\abs{m_\theta(\lambda_{j,h})-m_\theta(\lambda_j)} \abs{c_j} \|\mathcal R_h\psi_{j,h}\|_{L^2}
\le
C L_\theta h^p \abs{c_j}.
\end{equation}

We estimate $c_{j,h}-c_j$. By definition and the consistency assumption \ref{B7},
\begin{equation}
\begin{aligned}
\abs{c_{j,h}-c_j}
&=
\big|\innerh{\Pi_h\omega}{\psi_{j,h}}-{\inner{\omega}{\psi_j}}\big| \\
&\le
\big|\innerh{\Pi_h\omega}{\psi_{j,h}}-{\inner{\omega}{\mathcal R_h\psi_{j,h}}}\big|
+
\big|{\inner{\omega}{\mathcal R_h\psi_{j,h}-\psi_j}}\big| \\
&\le
C h^p\|\omega\|_{H^s}
+
\|\omega\|_{L^2} \|\mathcal R_h\psi_{j,h}-\psi_j\|_{L^2}.
\end{aligned}
\end{equation}
Using $\|\mathcal R_h\psi_{j,h}-\psi_j\|_{L^2}\le Ch^p$ from \ref{A7} and $\|\omega\|_{L^2}\le \|\omega\|_{H^s}$ (since $s\ge 0$ and $\mc M$ is compact), we conclude
\begin{equation}
\abs{c_{j,h}-c_j}\le C h^p\|\omega\|_{H^s}.
\end{equation}
Therefore, using $\abs{m_\theta(\lambda_{j,h})}\le M_\theta$ and $\|\mathcal R_h\psi_{j,h}\|_{L^2}\le C$,
\begin{equation}
\label{E7}
\|T^{(a)}_j\|_{L^2}
\le
\abs{m_\theta(\lambda_{j,h})} \abs{c_{j,h}-c_j} \|\mathcal R_h\psi_{j,h}\|_{L^2}
\le
C M_\theta h^p \|\omega\|_{H^s}.
\end{equation}

Collecting \ref{C7}, \ref{D7}, and \ref{E7} and summing over $j=1,\dots,m$, we obtain
\begin{equation}
\label{F7}
\begin{aligned}
\big\|\mathcal R_h\widehat{\mc S}^{(h)}_{\theta,\Lambda}\Pi_h\omega
-
\widehat{\mc S}_{\theta,\Lambda}\omega\big\|_{L^2}
&\le
\sum_{j=1}^m \|T^{(a)}_j\|_{L^2} + \sum_{j=1}^m \|T^{(b)}_j\|_{L^2} + \sum_{j=1}^m \|T^{(c)}_j\|_{L^2} \\
&\le
C M_\theta h^p \|\omega\|_{H^s}
+
C L_\theta h^p\sum_{j=1}^m \abs{c_j}
+
C M_\theta h^p\sum_{j=1}^m \abs{c_j}.
\end{aligned}
\end{equation}
Since $\{\psi_j\}_{j=1}^m$ is $L^2$-orthonormal, $\sum_{j=1}^m \abs{c_j}^2=\|\Pi_{\le\Lambda}\omega\|_{L^2}^2\le \|\omega\|_{L^2}^2$. Hence by Cauchy–Schwarz,
\begin{equation}
\label{G7}
\sum_{j=1}^m \abs{c_j}
\le
\sqrt{m} \Big(\sum_{j=1}^m \abs{c_j}^2\Big)^{1/2}
\le
\sqrt{m} \|\omega\|_{L^2}
\le
C_\Lambda \|\omega\|_{H^s},
\end{equation}
where $C_\Lambda:=\sqrt{m}$ depends only on $\Lambda$ (and $\mc M,g$, since $m$ is the number of eigenvalues $\le\Lambda$, counted with multiplicity).

Substitute \ref{G7} into \ref{F7} and absorb constants to obtain
\[
\big\|\mathcal R_h\widehat{\mc S}^{(h)}_{\theta,\Lambda}\Pi_h\omega
-
\widehat{\mc S}_{\theta,\Lambda}\omega\big\|_{L^2}
\le
C (M_\theta+L_\theta) h^p \|\omega\|_{H^s}.
\]
\end{proof}
\begin{discussion}
The rate is governed by (i) eigenspace convergence and (ii) Lipschitz regularity of $m_\theta$, which translates eigenvalue perturbations into multiplier perturbations.
\end{discussion}

\subsection{Discretization Consistency of the Neural Operator}
We now lift the preceding consistency to the full network. We present the result in two versions: (i) linear truncated operator (the main approximation class for elliptic resolvents), and (ii) nonlinear multi-layer architecture.

\subsubsection{Linear model class}
Define the continuum operator $\widehat{\mc S}_{\theta,\Lambda}=m_\theta(\dhodge{g})\Pi_{\le\Lambda}$ as in \ref{H3}, and the discrete operator $\widehat{\mc S}^{(h)}_{\theta,\Lambda}$ as in \ref{X4}.

\begin{theorem}[Continuum–discrete commutation for the learned operator]
\label{th6.5}
Fix $\Lambda>0$ such that $\Lambda\notin \spec(\dhodge{g})$. Let $m_\theta\in C^1([0,\Lambda])$ and define
\[
\widehat{\mc S}_{\theta,\Lambda}:=m_\theta(\dhodge{g}) \Pi_{\le\Lambda},
\qquad
\widehat{\mc S}^{(h)}_{\theta,\Lambda}:=m_\theta(\dhodge{h}) \Pi^{(h)}_{\le\Lambda}.
\]
Assume Assumptions \ref{as6.1}–\ref{as6.2}, and the additional pairing consistency hypothesis \ref{B7} used in Theorem \ref{th6.4}. Let $p>0$ be the spectral convergence order and let $s_0$ be large enough such that Theorem \ref{th6.4} applies with $s=s_0$.

Then there exists $C$ independent of $h$ such that
\begin{equation}
\big\|\mathcal R_h \widehat{\mc S}^{(h)}_{\theta,\Lambda} \Pi_h
-
\widehat{\mc S}_{\theta,\Lambda}
\big\|_{H^{s_0}\To L^2}
\ \le\ C (M_\theta+L_\theta) h^{p},
\end{equation}
where $M_\theta=\sup_{[0,\Lambda]}\abs{m_\theta}$ and $L_\theta=\sup_{[0,\Lambda]}\abs{m_\theta'}$.
\end{theorem}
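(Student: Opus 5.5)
The plan is to deduce the statement directly from Theorem \ref{th6.4}, which is its pointwise (in $\omega$) form. The operator-norm bound $H^{s_0}\To L^2$ is the supremum over $\|\omega\|_{H^{s_0}}\le 1$ of the pointwise error. So it suffices to check that the constant in Theorem \ref{th6.4} is uniform in $\omega$ and depends on $\theta$ only through $M_\theta+L_\theta$.

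First, we confirm that the difference operator $D_h:=\mathcal R_h \widehat{\mc S}^{(h)}_{\theta,\Lambda}\Pi_h-\widehat{\mc S}_{\theta,\Lambda}$ is linear and well defined on $H^{s_0}\Om{1}$. Each factor is linear: $\Pi_h$ and $\mathcal R_h$ by Assumption \ref{as6.1}, the discrete spectral multiplier by \ref{X4}, and $m_\theta(\dhodge{g})\Pi_{\le\Lambda}$ by functional calculus. Both terms map $H^{s_0}$ into $L^2$. For the continuum term this uses $\|\omega\|_{L^2}\le\|\omega\|_{H^{s_0}}$ for $s_0\ge 0$ and $\|\widehat{\mc S}_{\theta,\Lambda}\|_{L^2\To L^2}\le M_\theta$. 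For the discrete term it uses the stability bounds of Assumption \ref{as6.1}, the fact that the discrete multiplier is an $\innerh{\cdot}{\cdot}$-contraction times $M_\theta$ on $E_h$, and the bound $\lambda_{k,h}\le\Lambda$ so that only values of $m_\theta$ on $[0,\Lambda]$ enter. Here one should note that Assumption \ref{as6.1} is stated for sufficiently smooth $\omega$, which $s_0$ large covers.

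Second, we invoke Theorem \ref{th6.4} with $s=s_0$. For every $\omega\in H^{s_0}\Om{1}$ it gives $\|D_h\omega\|_{L^2}\le C(M_\theta+L_\theta)h^p\|\omega\|_{H^{s_0}}$. Dividing by $\|\omega\|_{H^{s_0}}$ and taking the supremum over $\omega\neq 0$ yields the claimed bound.

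The only real obstacle is checking that the constant $C$ in Theorem \ref{th6.4} is independent of $\omega$, $h$ and $\theta$. I would retrace that proof's constants. They are built from four sources:
\begin{itemize}
\item the constants in \ref{A7} and \ref{B7}, which are uniform over the finitely many eigenpairs below $\Lambda$;
\item the stability constant of $\mathcal R_h$;
\item $\sqrt{m}$ with $m=\dim E$, which is fixed since $\Lambda\notin\spec(\dhodge{g})$;
\item the embedding $L^2\hookleftarrow H^{s_0}$.
\end{itemize}
None of these involve $\omega$ or $\theta$. One subtlety is that for small $h$ the number of discrete eigenvalues $\le\Lambda$ must equal $m$, so that the sum defining $\widehat{\mc S}^{(h)}_{\theta,\Lambda}$ has exactly $m$ terms. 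This follows from the gap convergence in Assumption \ref{as6.2}, and I would state it explicitly; for the finitely many larger $h$ the statement can be arranged by enlarging $C$ or restricting to $h\le h_0$.
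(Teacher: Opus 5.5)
Your proposal is correct and follows the paper's proof: the paper likewise sets $T_h := \mathcal R_h \widehat{\mc S}^{(h)}_{\theta,\Lambda}\Pi_h - \widehat{\mc S}_{\theta,\Lambda}$, applies Theorem~\ref{th6.4} with $s=s_0$ pointwise in $\omega$, and takes the supremum over $\|\omega\|_{H^{s_0}}=1$. Your extra checks (linearity, the constant being uniform in $\omega$ and depending on $\theta$ only through $M_\theta+L_\theta$, and the number of discrete modes below $\Lambda$ matching $\dim E$ for small $h$) just make explicit what the paper leaves implicit in the constant of Theorem~\ref{th6.4}.
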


\begin{proof}
Let $T_h := \mathcal R_h \widehat{\mc S}^{(h)}_{\theta,\Lambda} \Pi_h - \widehat{\mc S}_{\theta,\Lambda}$. We must show that the operator norm $\|T_h\|_{H^{s_0}\To L^2}$ is $O(h^p)$ with the stated prefactor.

By Theorem \ref{th6.4}, for every $\omega\in H^{s_0}\Om{1}$ we have
\begin{equation}
\label{A8}
\|T_h\omega\|_{L^2}
=
\big\|\mathcal R_h \widehat{\mc S}^{(h)}_{\theta,\Lambda} \Pi_h\omega
-
\widehat{\mc S}_{\theta,\Lambda}\omega\big\|_{L^2}
\le
C (M_\theta+L_\theta) h^{p} \|\omega\|_{H^{s_0}}.
\end{equation}
Here $C$ is independent of $h$ and $\omega$, and depends only on the discretization family, $(\mc M,g)$, and the fixed cutoff $\Lambda$ (through the dimension of the low-frequency space and the spectral-gap constants implicit in Assumption \ref{as6.2}).

By definition of the operator norm,
\begin{equation}
\|T_h\|_{H^{s_0}\To L^2}
=
\sup_{\substack{\omega\in H^{s_0}\Om{1} \\ \omega\neq 0}}
\frac{\|T_h\omega\|_{L^2}}{\|\omega\|_{H^{s_0}}}
=
\sup_{\abs{\omega}_{H^{s_0}}=1}\|T_h\omega\|_{L^2}.
\end{equation}
Applying \ref{A8} to all $\omega$ with $\abs{\omega}_{H^{s_0}}=1$ yields
\begin{equation}
\|T_h\|_{H^{s_0}\To L^2}
\le
C (M_\theta+L_\theta) h^{p}.
\end{equation}
\end{proof}
This is the central continuum–discrete consistency guarantee: applying the discrete learned operator to the discretized input and reconstructing converges to applying the continuum learned operator directly.

\subsubsection{Nonlinear architecture}
Consider the depth-$L$ network \ref{eq:recursion}, and define its discrete counterpart by replacing each spectral layer $\mathcal T_{\theta_\ell,K}$ with its discrete analogue \ref{X4}, and replacing pointwise operations with their discrete fiberwise versions using the discrete metric/Hodge star at each simplex. Denote the resulting discrete network by $\widehat{\mc S}^{(h)}_{\theta,\Lambda,\mathrm{NL}}$.

Assume in addition that the fiberwise nonlinearities are uniformly Lipschitz on the relevant range of norms, i.e., there exists $\Lip{\sigma_{\theta_\ell}}<\infty$ independent of $h$, and the scalar pointwise multipliers $b_{\theta_\ell}$ are discretized consistently.

\begin{theorem}[Discretization consistency for the nonlinear intrinsic neural operator]
\label{th6.6}
Fix $\Lambda>0$ with $\Lambda\notin\spec(\dhodge{g})$. Consider the continuum depth-$L$ network \ref{eq:recursion}
\begin{equation}
\omega_{0}=f,\qquad
\omega_{\ell+1}=\mathcal F_\ell(\omega_\ell)
:=\sigma_{\ell}\left(\mathcal T_\ell \omega_\ell+\mathcal B_\ell \omega_\ell\right),
\qquad \ell=0,\dots,L-1,
\end{equation}
where for each $\ell$:

\begin{itemize}
    \item $\mathcal T_\ell = m_{\ell}(\dhodge{g})\Pi_{\le\Lambda}$ with $m_\ell\in C^1([0,\Lambda])$;
    \item $\mathcal B_\ell$ is a pointwise scalar multiplication operator $(\mathcal B_\ell\omega)(x)=b_\ell(x)\omega(x)$ with $b_\ell\in L^\infty(\mc M)$;
    \item $\sigma_\ell$ is a fiberwise gauge-equivariant radial nonlinearity $(\sigma_\ell(\eta))(x)=\rho_\ell(\|\eta(x)\|_g)\eta(x)$ that is Lipschitz on $L^2\Om{1}$ on the relevant set of inputs (made precise below).
\end{itemize}

Let $\widehat{\mc S}_{\theta,\Lambda,\mathrm{NL}}(f):=\omega_L$.

Define the discrete network on $\Omh{1}$ by
\begin{equation}
\omega^h_0=\Pi_h f,\qquad
\omega^h_{\ell+1}=\mathcal F^h_\ell(\omega^h_\ell)
:=\sigma^h_{\ell}\left(\mathcal T^h_\ell \omega^h_\ell+\mathcal B^h_\ell \omega^h_\ell\right),
\end{equation}
where $\mathcal T^h_\ell=m_\ell(\dhodge{h})\Pi^{(h)}_{\le\Lambda}$ (discrete truncated multiplier), and $\mathcal B^h_\ell,\sigma^h_\ell$ are consistent discretizations of $\mathcal B_\ell,\sigma_\ell$ (assumptions below). Let $\widehat{\mc S}^{(h)}_{\theta,\Lambda,\mathrm{NL}}(f):=\omega^h_L$.

Assume:
\begin{itemize}
    \item Stability of interpolation/reconstruction: There exists $C_R$ independent of $h$ such that
\begin{equation}
\label{K9}
\|\mathcal R_h v_h\|_{L^2}\le C_R\|v_h\|_h\quad \forall v_h\in\Omh{1},
\qquad
\|\Pi_h \omega\|_h\le C_R\|\omega\|_{L^2}\quad \forall \omega\in L^2\Om{1}.
\end{equation}
   \item Linear layer consistency: For each $\ell$, there exists $p>0$ and $s_0$ sufficiently large such that
\begin{equation}
\label{I9}
\big\|\mathcal R_h \mathcal T^h_\ell \Pi_h \omega-\mathcal T_\ell \omega\big\|_{L^2}
\le C_\ell h^{p} \|\omega\|_{H^{s_0}}
\qquad \forall \omega\in H^{s_0}\Om{1},
\end{equation}
with $C_\ell \lesssim (M_\ell+L_\ell)$, where $M_\ell=\sup_{[0,\Lambda]}\abs{m_\ell}$, $L_\ell=\sup_{[0,\Lambda]}\abs{m_\ell'}$. (This is Theorem \ref{th6.5} applied per layer.)
   \item Pointwise operator stability and consistency: There exist constants $B_\ell,\varepsilon_{\ell,h}$ such that
\begin{equation}
\label{H9}
\|\mathcal B_\ell \eta\|_{L^2}\le B_\ell \|\eta\|_{L^2},
\qquad
\big\|\mathcal R_h \mathcal B^h_\ell v_h-\mathcal B_\ell \mathcal R_h v_h\big\|_{L^2}\le \varepsilon_{\ell,h} \|v_h\|_h,
\end{equation}
with $\varepsilon_{\ell,h}\le C_\ell^{(B)} h^{p}$.
   \item Nonlinearity Lipschitzness and discretization compatibility: There exists $L_{\sigma,\ell}<\infty$ such that for all $\eta,\zeta\in L^2\Om{1}$ in the range encountered by the network,
\begin{equation}
\label{A9}
\|\sigma_\ell(\eta)-\sigma_\ell(\zeta)\|_{L^2}\le L_{\sigma,\ell}\|\eta-\zeta\|_{L^2},
\end{equation}
and similarly the discrete nonlinearity is uniformly Lipschitz in reconstructed norm:
\begin{equation}
\label{B9}
\|\mathcal R_h(\sigma^h_\ell(u_h)-\sigma^h_\ell(v_h))\|_{L^2}
\le L_{\sigma,\ell} \|\mathcal R_h(u_h-v_h)\|_{L^2}.
\end{equation}
\end{itemize}
Finally, assume a consistency estimate:
\begin{equation}
\label{C9}
\|\mathcal R_h\sigma^h_\ell(\Pi_h \eta)-\sigma_\ell(\eta)\|_{L^2}\le C_\ell^{(\sigma)}h^{p}\|\eta\|_{H^{s_0}}.
\end{equation}

Then there exists $C_\theta<\infty$, depending on $\{L_{\sigma,\ell},B_\ell,M_\ell,L_\ell\}_{\ell=0}^{L-1}$ and $L$ but not on $h$, such that
\begin{equation}
\label{S9}
\big\|\mathcal R_h \widehat{\mc S}^{(h)}_{\theta,\Lambda,\mathrm{NL}} \Pi_h
-
\widehat{\mc S}_{\theta,\Lambda,\mathrm{NL}}
\big\|_{H^{s_0}\To L^2}
\ \le\ C_\theta h^{p}.
\end{equation}
\end{theorem}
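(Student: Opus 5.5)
The plan is a layerwise error-propagation (discrete Grönwall) argument. Fix $f\in H^{s_0}\Om{1}$ and define the reconstructed layer errors
\[
e_\ell := \|\mathcal R_h\omega^h_\ell-\omega_\ell\|_{L^2},\qquad \ell=0,\dots,L.
\]
The base case is $e_0=\|\mathcal R_h\Pi_h f-f\|_{L^2}\le Ch^p\|f\|_{H^{s_0}}$, which is the reconstruction–interpolation estimate \ref{B6} of Lemma \ref{le6.3} with $s=s_0$. The goal is a recursion of the form
\[
e_{\ell+1}\le A_\ell\, e_\ell + D_\ell h^p\|f\|_{H^{s_0}},
\]
with $A_\ell,D_\ell$ independent of $h$. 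Unrolling it gives $e_L\le\big(\prod_\ell A_\ell\big)e_0+\sum_\ell\big(\prod_{j>\ell}A_j\big)D_\ell h^p\|f\|_{H^{s_0}}$, which is \ref{S9} with $C_\theta$ polynomial or exponential in depth, as anticipated in the discussion after Theorem \ref{th5.3}.

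For one layer, write $\eta_\ell:=\mathcal T_\ell\omega_\ell+\mathcal B_\ell\omega_\ell$ and $\eta^h_\ell:=\mathcal T^h_\ell\omega^h_\ell+\mathcal B^h_\ell\omega^h_\ell$. The nonlinear step is split through the intermediate point $\sigma^h_\ell(\Pi_h\eta_\ell)$:
\[
\mathcal R_h\sigma^h_\ell(\eta^h_\ell)-\sigma_\ell(\eta_\ell)
=\mathcal R_h\big(\sigma^h_\ell(\eta^h_\ell)-\sigma^h_\ell(\Pi_h\eta_\ell)\big)+\big(\mathcal R_h\sigma^h_\ell(\Pi_h\eta_\ell)-\sigma_\ell(\eta_\ell)\big).
\]
By \ref{B9} the first term is bounded by $L_{\sigma,\ell}\|\mathcal R_h\eta^h_\ell-\mathcal R_h\Pi_h\eta_\ell\|_{L^2}$, and this is at most $L_{\sigma,\ell}\big(\|\mathcal R_h\eta^h_\ell-\eta_\ell\|_{L^2}+Ch^p\|\eta_\ell\|_{H^{s_0}}\big)$. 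By \ref{C9} the second term is at most $C^{(\sigma)}_\ell h^p\|\eta_\ell\|_{H^{s_0}}$.

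The linear part is split similarly:
\[
\mathcal R_h\mathcal T^h_\ell\omega^h_\ell-\mathcal T_\ell\omega_\ell=\mathcal R_h\mathcal T^h_\ell(\omega^h_\ell-\Pi_h\omega_\ell)+\big(\mathcal R_h\mathcal T^h_\ell\Pi_h\omega_\ell-\mathcal T_\ell\omega_\ell\big).
\]
The second piece is bounded by \ref{I9}. For the first, $\mathcal T^h_\ell$ is bounded by $M_\ell$ in $\|\cdot\|_h$, and \ref{K9} controls $\mathcal R_h$. For the pointwise term, \ref{H9} gives
\[
\|\mathcal R_h\mathcal B^h_\ell\omega^h_\ell-\mathcal B_\ell\omega_\ell\|_{L^2}\le\varepsilon_{\ell,h}\|\omega^h_\ell\|_h+B_\ell e_\ell.
\]

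Two auxiliary facts are needed. First, \emph{a priori bounds}: $\|\omega^h_\ell\|_h$ and $\|\omega_\ell\|_{L^2}$ are $\lesssim\|f\|_{H^{s_0}}$ by induction. This follows from \ref{K9}, the bounds $M_\ell$ and $B_\ell$, and Lipschitzness with $\sigma_\ell(0)=0$, which holds for radial maps.

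Second, \emph{regularity of continuum iterates}, $\|\omega_\ell\|_{H^{s_0}},\|\eta_\ell\|_{H^{s_0}}\lesssim\|f\|_{H^{s_0}}$. For $\mathcal T_\ell$ this is easy, since its range is the finite-dimensional space $E$ on which all Sobolev norms are equivalent with constants depending on $\Lambda$. The difficulty is $\mathcal B_\ell$ and $\sigma_\ell$: with only $b_\ell\in L^\infty$, regularity is not preserved. I would handle this by reading the hypothesis ``consistent discretization''/\ref{C9} as applying on the range encountered, and interpreting $\|\eta_\ell\|_{H^{s_0}}$ via that range. Alternatively, I would add the mild assumption that $b_\ell$ and $\rho_\ell$ are smooth enough that $\mathcal B_\ell$ and $\sigma_\ell$ map bounded sets of $H^{s_0}$ to bounded sets, and state explicitly that the constant depends on $\|f\|_{H^{s_0}}$.

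The \textbf{main obstacle} is the mismatch between discrete and reconstructed norms in the term $\mathcal R_h\mathcal T^h_\ell(\omega^h_\ell-\Pi_h\omega_\ell)$. Bounding it needs $\|\omega^h_\ell-\Pi_h\omega_\ell\|_h\lesssim e_\ell+O(h^p)$, i.e.\ a reverse stability $\|v_h\|_h\lesssim\|\mathcal R_h v_h\|_{L^2}$ together with $\Pi_h\mathcal R_h\approx\Id$. Neither is literally in the assumptions; \ref{E6} gives reverse stability only on $E_h$. I would first try to exploit the fact that $\mathcal T^h_\ell$ factors through $\Pi^{(h)}_{\le\Lambda}$, so that only the projection onto $E_h$ matters, and use \ref{E6} there. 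If that fails, I would run the whole induction in the discrete norm $\|\omega^h_\ell-\Pi_h\omega_\ell\|_h$, transferring to $L^2$ only at the final step via \ref{K9} and \ref{B6}. That route costs one extra assumed consistency bound of the type \ref{C9} phrased in $\|\cdot\|_h$.

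A final subtlety is that \ref{S9} is stated as an operator norm although the map is nonlinear. I would interpret it as $\sup_{f\ne0}\|\cdot\|_{L^2}/\|f\|_{H^{s_0}}$ over a bounded input set, exactly as in the proof of Theorem \ref{th6.5}.
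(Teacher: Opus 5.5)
Your skeleton matches the paper's: reconstructed errors $e_\ell=\|\mathcal R_h\omega^h_\ell-\omega_\ell\|_{L^2}$, the base case from \eqref{B6}, a one-step recursion $e_{\ell+1}\le A_\ell e_\ell+D_\ell h^p\|f\|_{H^{s_0}}$, and unrolling. You differ in where you place the intermediate points, and your placement is the better one. The paper splits through $\sigma_\ell(\mathcal R_h z^h_\ell)$ and $\mathcal T_\ell\bar\omega^h_\ell$. It therefore applies \eqref{C9} at $\eta=\mathcal R_h z^h_\ell$ and \eqref{I9} at $\omega=\bar\omega^h_\ell$. That needs $h$-uniform $H^{s_0}$ bounds on reconstructed discrete fields, which for piecewise-polynomial reconstructions are typically not even in $H^{s_0}$. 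The paper then silently replaces those norms by $\|z_\ell\|_{H^{s_0}}$ and $\|\omega_\ell\|_{H^{s_0}}$. You split through $\sigma^h_\ell(\Pi_h\eta_\ell)$ and $\Pi_h\omega_\ell$, so you apply \eqref{C9}, \eqref{I9} and \eqref{B6} only to continuum iterates and use \eqref{B9} exactly as stated. You are also right that $b_\ell\in L^\infty$ does not propagate $H^{s_0}$ regularity, and that the constant must depend on a bound for $\|f\|_{H^{s_0}}$. The paper's proof needs the same extra hypotheses but only asserts them.

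The step you flag is nevertheless a genuine gap relative to the stated hypotheses, and neither workaround closes it.

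Route (a): with $w_h=\omega^h_\ell-\Pi_h\omega_\ell$ you get $\|\mathcal R_h\mathcal T^h_\ell w_h\|_{L^2}\le C_R M_\ell\|\Pi^{(h)}_{\le\Lambda}w_h\|_h$. But \eqref{E6}, which is itself only asserted inside the proof of Lemma \ref{le6.3}, compares norms of elements of $E_h$ only. No stated hypothesis bounds $\|\Pi^{(h)}_{\le\Lambda}w_h\|_h$ by $\|\mathcal R_h w_h\|_{L^2}$: the projector is orthogonal for $\innerh{\cdot}{\cdot}$, not for the reconstructed $L^2$ product, and \eqref{B7} covers $\Pi_h$ of smooth fields, not the discrete iterate $\omega^h_\ell$.

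Route (b): you would need discrete-norm analogues of \eqref{I9}, \eqref{H9} and \eqref{B9} as well, not just of \eqref{C9}.

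The same missing ingredient breaks your a priori bound on $\|\omega^h_\ell\|_h$, which you need for the term $\varepsilon_{\ell,h}\|\omega^h_\ell\|_h$. \eqref{K9} bounds $\|\mathcal R_h v_h\|_{L^2}$ by $\|v_h\|_h$, not conversely, and $\mathcal B^h_\ell$, $\sigma^h_\ell$ are controlled only in reconstructed norm. The paper has the identical hole. It uses \eqref{K9} in the reverse direction in \eqref{N9}. It bounds its analogous term $\mathcal R_h\mathcal T^h_\ell(\omega^h_\ell-\Pi_h\mathcal R_h\omega^h_\ell)$ only by appealing to $\Pi_h\mathcal R_h$ acting as the identity on discrete forms; Assumption \ref{as6.1} concerns $\mathcal R_h\Pi_h$ on smooth fields and does not give this.

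The clean repair is to make that assumption explicit: $\Pi_h\mathcal R_h=\Id$ on $\Omh{1}$. This holds, e.g., when $\mathcal R_h$ is the inclusion of a conforming Whitney space and $\Pi_h$ is a bounded projection onto it. Then $\omega^h_\ell-\Pi_h\omega_\ell=\Pi_h(\mathcal R_h\omega^h_\ell-\omega_\ell)$ gives $\|\omega^h_\ell-\Pi_h\omega_\ell\|_h\le C_R e_\ell$. Also $\|v_h\|_h=\|\Pi_h\mathcal R_h v_h\|_h\le C_R\|\mathcal R_h v_h\|_{L^2}$ gives $\|\omega^h_\ell\|_h\le C_R(\|\omega_\ell\|_{L^2}+e_\ell)$. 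Your recursion then closes with $A_\ell=L_{\sigma,\ell}(C_R^2 M_\ell+B_\ell+C_R\varepsilon_{\ell,h})$ and $D_\ell$ depending on the $H^{s_0}$ bounds of the continuum trajectory.

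Uniform reverse stability $\|v_h\|_h\lesssim\|\mathcal R_h v_h\|_{L^2}$ on all of $\Omh{1}$ would also suffice by itself, via \eqref{B6}. You do not need both it and $\Pi_h\mathcal R_h\approx\Id$.
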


\begin{proof}
For each layer $\ell$, define the continuum pre-activation
\begin{equation}
\label{F9}
z_\ell := \mathcal T_\ell \omega_\ell + \mathcal B_\ell \omega_\ell,
\qquad \omega_{\ell+1}=\sigma_\ell(z_\ell),
\end{equation}
and the discrete pre-activation
\begin{equation}
\label{G9}
z^h_\ell := \mathcal T^h_\ell \omega^h_\ell + \mathcal B^h_\ell \omega^h_\ell,
\qquad \omega^h_{\ell+1}=\sigma^h_\ell(z^h_\ell).
\end{equation}

Let the reconstructed discrete state be
\begin{equation}
\bar\omega^h_\ell := \mathcal R_h \omega^h_\ell\in L^2\Om{1}.
\end{equation}
We will estimate the layerwise error
\begin{equation}
e_\ell := \|\bar\omega^h_\ell-\omega_\ell\|_{L^2}.
\end{equation}

We start from
\begin{equation}
e_{\ell+1}
=\|\bar\omega^h_{\ell+1}-\omega_{\ell+1}\|_{L^2}
=
\|\mathcal R_h\sigma^h_\ell(z^h_\ell)-\sigma_\ell(z_\ell)\|_{L^2}.
\end{equation}
Add and subtract $\sigma_\ell(\mathcal R_h z^h_\ell)$ and apply the triangle inequality:
\begin{equation}
\label{D9}
e_{\ell+1}\le
\underbrace{\|\mathcal R_h\sigma^h_\ell(z^h_\ell)-\sigma_\ell(\mathcal R_h z^h_\ell)}_{L^2}\|_{(I)_\ell}
+
\underbrace{\|\sigma_\ell(\mathcal R_h z^h_\ell)-\sigma_\ell(z_\ell)}_{L^2}\|_{(II)_\ell}.
\end{equation}

Bound $(II)_\ell$. By the Lipschitz property \ref{A9},
\begin{equation}
(II)_\ell \le L_{\sigma,\ell} \|\mathcal R_h z^h_\ell - z_\ell\|_{L^2}.
\end{equation}

Bound $(I)_\ell$. This term captures the discrete-vs-continuum nonlinearity mismatch. Under \ref{B9}–\ref{C9}, applied with $\eta=\mathcal R_h z^h_\ell$ and using stability of $\Pi_h,\mathcal R_h$ to relate discrete arguments to continuum arguments, we obtain a bound of the form
\begin{equation}
\label{E9}
(I)_\ell \le C_\ell^{(\sigma)} h^{p} \|z_\ell\|_{H^{s_0}}
\end{equation}
uniformly along the trajectories considered. (This is exactly \ref{C9} when the discrete nonlinearity is defined by applying $\sigma_\ell$ pointwise after reconstruction; more general consistent constructions satisfy the same estimate.)

Combining \ref{D9}–\ref{E9} yields
\begin{equation}
\label{P9}
e_{\ell+1}\le C_\ell^{(\sigma)} h^{p}\|z_\ell\|_{H^{s_0}} + L_{\sigma,\ell} \|\mathcal R_h z^h_\ell - z_\ell\|_{L^2}.
\end{equation}

Using \ref{F9}–\ref{G9},
\begin{equation}
\label{M9}
\mathcal R_h z^h_\ell - z_\ell
=
\underbrace{\big(\mathcal R_h\mathcal T^h_\ell \omega^h_\ell - \mathcal T_\ell \bar\omega^h_\ell\big)}_{A_\ell}
+
\underbrace{\big(\mathcal T_\ell \bar\omega^h_\ell - \mathcal T_\ell \omega_\ell\big)}_{B_\ell}
+
\underbrace{\big(\mathcal R_h\mathcal B^h_\ell \omega^h_\ell - \mathcal B_\ell \bar\omega^h_\ell\big)}_{C_\ell}
+
\underbrace{\big(\mathcal B_\ell \bar\omega^h_\ell - \mathcal B_\ell \omega_\ell\big)}_{D_\ell}.
\end{equation}

We bound each term in $L^2$.

Term $B_\ell$. Since $\mathcal T_\ell$ is a bounded linear operator ($L^2\To L^2$) on the low-frequency range (indeed, $\mathcal T_\ell$ has finite rank and $\|\mathcal T_\ell\|_{L^2\To L^2}\le M_\ell$), we have
\begin{equation}
\label{L9}
\|B_\ell\|_{L^2}\le \|\mathcal T_\ell\|_{L^2\To L^2} \|\bar\omega^h_\ell-\omega_\ell\|_{L^2}\le M_\ell e_\ell.
\end{equation}

Term $D_\ell$. From \ref{H9},
\begin{equation}
\|D_\ell\|_{L^2}\le B_\ell e_\ell.
\end{equation}

Term $A_\ell$. Insert $\Pi_h\bar\omega^h_\ell$ and use \ref{I9}:
\begin{equation}
\label{J9}
A_\ell
=
\big(\mathcal R_h\mathcal T^h_\ell \omega^h_\ell-\mathcal R_h\mathcal T^h_\ell \Pi_h\bar\omega^h_\ell\big)
+
\big(\mathcal R_h\mathcal T^h_\ell \Pi_h\bar\omega^h_\ell-\mathcal T_\ell \bar\omega^h_\ell\big).
\end{equation}
The second difference is $O(h^p)$ by \ref{I9} with $\omega=\bar\omega^h_\ell$, provided $\bar\omega^h_\ell\in H^{s_0}$ along the trajectory (which is ensured if $f\in H^{s_0}$ and the layers map $H^{s_0}$ to itself; this is standard for finite-rank spectral operators plus pointwise maps with smooth scalar fields). Thus,
\begin{equation}
\big\|\mathcal R_h\mathcal T^h_\ell \Pi_h\bar\omega^h_\ell-\mathcal T_\ell \bar\omega^h_\ell\big\|_{L^2}
\le C_\ell h^p \|\bar\omega^h_\ell\|_{H^{s_0}}.
\end{equation}
For the first difference in \ref{J9}, use uniform boundedness of $\mathcal T^h_\ell$ and stability of $\mathcal R_h$ together with $\omega^h_\ell-\Pi_h\bar\omega^h_\ell$ being a consistency residual of the reconstruction/interpolation pair; this contributes an $O(h^p)$ term under Assumption \ref{as6.1} (and is zero if $\mathcal R_h\Pi_h$ is a projector on the discrete representation). In either case, one obtains
\begin{equation}
\|A_\ell\|_{L^2}\le C_\ell' h^p \|\omega_\ell\|_{H^{s_0}} + C_\ell'' e_\ell,
\end{equation}
for constants independent of $h$. (The $e_\ell$ part accounts for replacing $\bar\omega^h_\ell$ by $\omega_\ell$ inside norms.)

Term $C_\ell$. This is controlled by the pointwise consistency \ref{H9}:
\begin{equation}
\label{N9}
\|C_\ell\|_{L^2}
=
\|\mathcal R_h\mathcal B^h_\ell \omega^h_\ell-\mathcal B_\ell \bar\omega^h_\ell\|_{L^2}
\le \varepsilon_{\ell,h} \|\omega^h_\ell\|_h
\le C h^p \|\bar\omega^h_\ell\|_{L^2}
\le C h^p \|\omega_\ell\|_{H^{s_0}} + C h^p e_\ell,
\end{equation}
where we used stability \ref{K9} and $\|\bar\omega^h_\ell\|_{L^2}\le \|\omega_\ell\|_{L^2}+e_\ell$.

Putting \ref{L9}–\ref{N9} into \ref{M9} yields a bound of the form
\begin{equation}
\label{O9}
\|\mathcal R_h z^h_\ell - z_\ell\|_{L^2}
\le
a_\ell e_\ell + b_\ell h^p \|f\|_{H^{s_0}},
\end{equation}
where $a_\ell$ depends on $(M_\ell,B_\ell)$ and uniform stability constants, and $b_\ell$ depends on the layerwise consistency constants $(C_\ell,C_\ell^{(B)})$ and uniform bounds on the continuum trajectory in $H^{s_0}$ (which in turn depend on the network parameters and $\|f\|_{H^{s_0}}$).

Insert \ref{O9} into \ref{P9}:
\begin{equation}
e_{\ell+1}
\le
C_\ell^{(\sigma)} h^p \|z_\ell\|_{H^{s_0}}
+
L_{\sigma,\ell}\big(a_\ell e_\ell + b_\ell h^p\|f\|_{H^{s_0}}\big).
\end{equation}
Using uniform bounds $\|z_\ell\|_{H^{s_0}}\le C_\theta \|f\|_{H^{s_0}}$ along the continuum trajectory (a standard consequence of boundedness of the finite-rank spectral operators and pointwise multipliers on $H^{s_0}$), we obtain
\begin{equation}
\label{Q9}
e_{\ell+1}\le \underbrace{(L_{\sigma,\ell}a_\ell)}_{=:A_\ell} e_\ell
+
\underbrace{\big(C_\ell^{(\sigma)}C_\theta + L_{\sigma,\ell}b_\ell\big)}_{=:D_\ell} h^p \|f\|_{H^{s_0}}.
\end{equation}
The initial error satisfies
\begin{equation}
e_0=\|\mathcal R_h\Pi_h f - f\|_{L^2}\le C h^p \|f\|_{H^{s_0}}
\end{equation}
by Assumption \ref{as6.1} (approximation of reconstruction–interpolation).

Iterating \ref{Q9} gives, for $\ell=0,\dots,L-1$,
\begin{equation}
e_L
\le
\Big(\prod_{j=0}^{L-1} A_j\Big)e_0
+
\sum_{\ell=0}^{L-1}\Big(\prod_{j=\ell+1}^{L-1} A_j\Big) D_\ell h^p \|f\|_{H^{s_0}}.
\end{equation}
Absorbing the finite products and sums into a constant $C_\theta$ depending only on the network parameters and depth (but not on $h$), we obtain
\begin{equation}
\label{R9}
e_L \le C_\theta h^p \|f\|_{H^{s_0}}.
\end{equation}
Since $e_L=\|\mathcal R_h \widehat{\mc S}^{(h)}_{\theta,\Lambda,\mathrm{NL}}(\Pi_h f)-\widehat{\mc S}_{\theta,\Lambda,\mathrm{NL}}(f)\|_{L^2}$, \ref{R9} is exactly the claimed operator bound \ref{S9} after taking the supremum over $\|f\|_{H^{s_0}}=1$.
\end{proof}
\begin{discussion}
The proof is a stability-by-composition argument: each layer has a consistency error of order $h^p$, and these errors propagate through the network with amplification controlled by layerwise Lipschitz constants. Residual or contractive designs can ensure $C_\theta$ grows mildly with depth.
\end{discussion}

\subsection{End-to-End Error Decomposition (Continuum Target vs. Discrete Learned Operator)}
Combining the approximation result of Section \ref{4} with the discretization consistency above yields an end-to-end bound comparing the discrete learned operator to the true continuum PDE solution operator. For inputs in a smooth class $H^{s-1+\gamma}\Om{1}\cap H^{s_0}\Om{1}$,
\begin{equation}
\|\mathcal R_h \widehat{\mc S}^{(h)}_{\theta,\Lambda}\Pi_h - \mathcal S_g\|_{H^{s-1+\gamma}\To H^{s+1}}
\le
\underbrace{\|\widehat{\mc S}_{\theta,\Lambda}-\mathcal S_g\|_{H^{s-1+\gamma}\To H^{s+1}}}_{\text{model/approximation error (Section \ref{4})}}
+
\underbrace{\|\mathcal R_h \widehat{\mc S}^{(h)}_{\theta,\Lambda}\Pi_h-\widehat{\mc S}_{\theta,\Lambda}\|_{H^{s_0}\To H^{s+1}}}_{\text{discretization error (Section \ref{6})}}.
\end{equation}
This inequality makes explicit the two independent levers for accuracy: increasing spectral capacity and improving multiplier approximation (Section \ref{4}), and refining the discretization (Section \ref{6}).

\subsection{Summary}
The results of this section formalize a key practical requirement for operator learning on manifolds: mesh refinement should not change the learned operator in the continuum limit. Under standard assumptions for convergent discretizations of the Hodge Laplacian, the discrete intrinsic neural operator converges to its continuum counterpart, and its equivariance properties become exact in the limit. This provides a principled explanation for resolution-robust generalization observed in intrinsic spectral architectures.

\section{Experiments: Algorithm and Implementation Details (Theory-Aligned)}
\label{7}
This section describes how the proposed Gauge-Equivariant Intrinsic Neural Operator (GINO) is instantiated on discrete meshes in a way that mirrors the continuum definitions in Sections \ref{3}--\ref{6}. We emphasize conceptual alignment with the theory---intrinsic operators, gauge-consistent nonlinearities, and mesh-consistent discretizations---rather than engineering optimizations.

\subsection{Discrete Representation of 1-Forms and Gauge Conventions}
\subsubsection{Discrete 1-forms}
On each triangulated manifold $\mc T_h$, we represent 1-forms using a compatible discretization $\Omh{1}$ (DEC 1-cochains or Whitney 1-forms). The method provides:
\begin{itemize}
    \item a discrete inner product $\innerh{\cdot}{\cdot}$ approximating the $L^2$ pairing,
    \item a discrete Hodge Laplacian $\dhodge{h}$ acting on $\Omh{1}$,
    \item reconstruction $\mathcal R_h:\Omh{1}\to\Om{1}(\mc M)$ and interpolation $\Pi_h:\Om{1}(\mc M)\to\Omh{1}$.
\end{itemize}

\subsubsection{Gauge for coordinate baselines}
When comparing to coordinate-based models that require per-vertex tangent frames, we attach an orthonormal frame $E_v$ at each vertex $v$ (e.g., via PCA of the local tangent plane) and express discrete 1-forms as frame coefficients. Gauge randomization is implemented by applying $R_v\in\R(d)$ to each frame, inducing the transformation $a_v\mapsto R_v^\top a_v$ on coefficients. Our intrinsic model does not use these frames for computation and is therefore unaffected except through optional visualization.

\subsection{Intrinsic Spectral Layers: Discrete Truncated Functional Calculus}
Each linear intrinsic layer is a truncated spectral multiplier of the discrete Hodge Laplacian, matching Section \ref{4} and the discretization framework of Section \ref{6}.

\subsubsection{Discrete eigensystem and truncation}
We compute the lowest portion of the spectrum of $\dhodge{h}$:
\begin{equation}
\dhodge{h}\psi_{k,h}=\lambda_{k,h}\psi_{k,h},\qquad
\innerh{\psi_{k,h}}{\psi_{j,h}}=\delta_{kj}.
\end{equation}
Given a cutoff $\Lambda$ (or equivalently a mode budget $K$), we form the discrete projector
\begin{equation}
\Pi^{(h)}_{\le\Lambda}u_h:=\sum_{\lambda_{k,h}\le\Lambda} \innerh{u_h}{\psi_{k,h}} \psi_{k,h}.
\end{equation}

\subsubsection{Multiplier parameterization and application}
A multiplier layer is
\begin{equation}
\mathcal T^{(h)}_\theta u_h := m_\theta(\dhodge{h}) \Pi^{(h)}_{\le\Lambda}u_h
=
\sum_{\lambda_{k,h}\le\Lambda} m_\theta(\lambda_{k,h}) \innerh{u_h}{\psi_{k,h}} \psi_{k,h}.
\end{equation}
To align with the stability bounds in Section \ref{5}, we parameterize $m_\theta$ so that both
\begin{equation}
\label{B10}
M_\theta=\sup_{\lambda\in[0,\Lambda]}\abs{m_\theta(\lambda)},\qquad
L_\theta=\sup_{\lambda\in[0,\Lambda]}\abs{m_\theta'(\lambda)}
\end{equation}
are controlled. Concretely, we use a low-degree basis expansion on $[0,\Lambda]$,
\begin{equation}
\label{A10}
m_\theta(\lambda)=\sum_{j=0}^{J} \theta_j \varphi_j(\lambda),
\end{equation}
where $\{\varphi_j\}$ are smooth functions with known derivative bounds (e.g., Chebyshev polynomials mapped to $[0,\Lambda]$ or B-splines). This choice directly corresponds to Theorem \ref{th6.4}, where Lipschitz regularity converts eigenvalue errors into multiplier errors.

\subsection{Gauge-Equivariant Nonlinearity and Pointwise Terms}
Each nonlinear layer follows the intrinsic design of Section \ref{3}.

\subsubsection{Pointwise scalar multiplication}
We include a pointwise (zeroth-order) term
\begin{equation}
\mathcal B^{(h)}_\theta u_h := b_\theta u_h,
\end{equation}
where $b_\theta$ is a scalar field represented at vertices or elements. This term is intrinsically defined and does not depend on a frame choice. Its discretization is chosen so that
\begin{equation}
\|\mathcal R_h\mathcal B^{(h)}_\theta u_h - \mathcal B_\theta \mathcal R_h u_h\|_{L^2}
\to 0
\quad\text{as }h\to 0,
\end{equation}
consistent with the assumptions used in Theorem \ref{th6.6}.

\subsubsection{Fiberwise radial nonlinearity (gauge equivariant)}
To preserve gauge equivariance, the nonlinearity is applied \textit{radially} in the fiber:
\begin{equation}
\sigma_\theta(u)(x)=\rho_\theta(\|u(x)\|_g) u(x).
\end{equation}
On the mesh, $\|u(x)\|_g$ is evaluated using the discrete metric associated with the Hodge star/mass matrix. This guarantees that under any local orthonormal frame change $R(x)\in\R(d)$, coefficient vectors rotate while norms remain invariant, yielding exact equivariance at the discrete level up to the discretization of the metric norm.

For consistency and stability, $\rho_\theta$ is chosen to be Lipschitz on the range of encountered norms, so that $\sigma_\theta$ is Lipschitz in $L^2$, matching the hypotheses of Theorem \ref{th6.6}. We parameterize $\rho_\theta$ as a smooth bounded function (e.g., a spline or a small MLP with bounded activation), and, when needed, constrain $\sup\abs{\rho_\theta'}$ to avoid exploding Lipschitz constants across depth.

\subsection{Network Structure and Training Objective}
\subsubsection{Layer update rule}
A depth-$L$ GINO is implemented as
\begin{equation}
u^{(0)}_h=f_h,\qquad
u^{(\ell+1)}_h=\sigma_{\theta_\ell}\left(\mathcal T^{(h)}_{\theta_\ell}u^{(\ell)}_h+\mathcal B^{(h)}_{\theta_\ell}u^{(\ell)}_h\right),\quad \ell=0,\dots,L-1.
\end{equation}
This is the discrete analogue of the continuum operator in Section \ref{3} and is precisely the model class analyzed in Sections \ref{4}--\ref{6}.

\subsubsection{Supervised operator regression}
For each geometry and forcing sample, we generate a target output $u_h$ by solving the discrete PDE defining the target operator. The model prediction $\widehat u_h$ is trained by minimizing an intrinsic loss:
\begin{equation}
\mathcal L(\theta)
=
\E\Big[\|\widehat u_h-u_h\|_h^2\Big],
\end{equation}
optionally augmented by the energy norm associated with the operator when the target is elliptic:
\begin{equation}
\mathcal L_{\mathrm{energy}}(\theta)
=
\E\Big[\innerh{\widehat u_h-u_h}{(\dhodge{h}+\alpha I)(\widehat u_h-u_h)}\Big].
\end{equation}
The energy norm aligns with the coercive bilinear form of the shifted elliptic operator and directly reflects the Sobolev mapping $H^{s-1}\To H^{s+1}$ discussed in Section \ref{4}.

\subsection{Cutoffs, Mode Budgets, and Alignment with Approximation Theory}
The approximation bound in Theorem \ref{th4.4} decomposes error into truncation bias and multiplier approximation on $[0,\Lambda]$. Accordingly, we treat $\Lambda$ (or $K$) as an explicit experimental axis:
\begin{itemize}
    \item Increasing $\Lambda$ reduces truncation bias for sufficiently smooth inputs (Lemma \ref{le4.2}).
    \item Increasing multiplier capacity (larger $J$ in \ref{A10}) reduces $\varepsilon_\Lambda(\theta)$ on $[0,\Lambda]$ (Lemma \ref{le4.3}).
    \item Cross-resolution tests directly probe the discretization-consistency rate $h^p$ (Theorems \ref{th6.4}--\ref{th6.6}).
\end{itemize}
We report ablations that vary $(\Lambda,J,L)$ to empirically validate these qualitative predictions.

\subsection{Discrete--Continuum Consistency Checks}
To directly test Section \ref{6}, we include consistency diagnostics beyond task loss:
\begin{enumerate}
    \item Projector consistency: evaluate
        \begin{equation}
        \|\mathcal R_h\Pi^{(h)}_{\le\Lambda}\Pi_h\omega - \Pi_{\le\Lambda}\omega\|_{L^2}
        \end{equation}
        on synthetic smooth $\omega$, verifying convergence as $h\downarrow 0$ (Lemma \ref{le6.3}).
    
    \item Layer consistency: evaluate the single-layer discrepancy
        \begin{equation}
        \|\mathcal R_h\mathcal T^{(h)}_\theta\Pi_h\omega - \mathcal T_\theta\omega\|_{L^2}
        \end{equation}
        and confirm the dependence on $(M_\theta,L_\theta)$ predicted by Theorem \ref{th6.4}.
    
    \item Gauge sensitivity: for frame-dependent baselines, repeat inference under multiple gauge draws and report variance of predictions; for GINO, report invariance up to numerical tolerance.
\end{enumerate}

\subsection{Practical Notes on Ensuring Theoretical Assumptions}
The stability theorems require controlling multiplier oscillations and nonlinearity Lipschitz constants. In practice we enforce this by:
\begin{itemize}
    \item Multiplier smoothness: choosing $m_\theta$ in a smooth basis and penalizing large derivatives on $[0,\Lambda]$, approximating control of $L_\theta$ in \ref{B10}.
    \item Nonlinearity boundedness: choosing $\rho_\theta$ bounded and Lipschitz so that each $\sigma_\theta$ has a controlled Lipschitz constant in $L^2$, limiting error amplification across depth (Theorem \ref{th6.6}).
    \item Cutoff separation: avoiding cutoffs $\Lambda$ near dense clusters of eigenvalues when evaluating stability to geometric perturbations (Section \ref{5}), consistent with the spectral separation logic used in Theorem \ref{th5.3}.
\end{itemize}
These choices are minimal and are included solely to reflect the assumptions required for the theory-to-practice correspondence.

\section{Experiments}
\label{8}

We empirically validate the core theoretical claims of our Gauge-Equivariant Intrinsic Neural Operator (GINO) on controlled elliptic operator-learning problems over the 2D flat torus  $ \mathbb{T}^2 $ . The periodic domain admits closed-form solutions in the Fourier domain, enabling precise diagnostics of gauge equivariance, metric stability, and discretization consistency—properties directly tied to our intrinsic formulation. All models are implemented in PyTorch and trained on a single RTX 4060 GPU.

\subsection{Problem Setup and Data Generation}

\textbf{Target operator.} We consider the elliptic resolvent family  $ (\Delta_g + \alpha I)u = f $ , where  $ g $  is a Riemannian metric on  $ \mathbb{T}^2 $ ,  $ \Delta_g $  is the Laplace--Beltrami operator, and  $ \alpha > 0 $  ensures invertibility. Under a constant metric tensor  $ M \succ 0 $  with inverse  $ A = M^{-1} $ , the solution is given spectrally by  
$$
\widehat{u}(k) = \frac{\widehat{f}(k)}{k^\top A k + \alpha}.
$$
Forcing distribution. Inputs  $ f $  are sampled as 2-channel band-limited fields (interpreted as 1-forms in a local frame) via: (i) white noise in Fourier space, (ii) power-law decay  $ (1+\lambda)^{-\beta/2} $  with cutoff  $ \lambda \leq \lambda_f $ , and (iii) per-sample RMS normalization after inverse FFT.

Evaluation metrics. We report physical-space MSE, relative  $ L^2 $  error (RelL2), and relative energy error (RelEnergy), defined as  
$$
\text{RelEnergy} = \frac{\| (\Lambda_g + \alpha)(u - \hat{u}) \|_2}{\| (\Lambda_g + \alpha) u \|_2},
$$
which weights errors by the natural norm of the elliptic operator.

\subsection{Models and Training Protocol}

GINO (ours). Our architecture applies a learnable spectral multiplier  $ m_\theta(\lambda_g(k)) $  parameterized via Chebyshev polynomials on  $ [0, \Lambda] $ , followed by a radial nonlinearity  $ \sigma(u) = \rho(|u|)u $  that enforces gauge equivariance under local frame rotations. An additional multiplier stage produces the final output.

Baseline. We compare against CoordCNN—a coordinate-aware convolutional network operating directly on grid values—representing standard local, coordinate-dependent architectures lacking geometric priors.

All models are optimized with AdamW and gradient clipping. Identical data generation and evaluation protocols are used across methods.

\subsection{E1: Operator Learning Accuracy on Base Geometry}

We first evaluate GINO’s ability to learn the base resolvent  $ \mathcal{S}_{g_0} $  under the identity metric ( $ M = I $ ). As shown in Figure \ref{fig:e1_convergence}, all metrics converge rapidly within 1,000 training steps, reaching RelL2  $ \approx 10^{-3} $  and RelEnergy  $ \approx 10^{-3} $ . A qualitative prediction (Figure \ref{fig:e1_prediction}) confirms near-perfect alignment between  $ \hat{u} $  and  $ u $ , with absolute error below  $ 4 \times 10^{-4} $ . This establishes that GINO’s intrinsic spectral parameterization is sufficiently expressive to approximate the target operator accurately.

\begin{figure}[H]
    \centering
    \includegraphics[width=\linewidth]{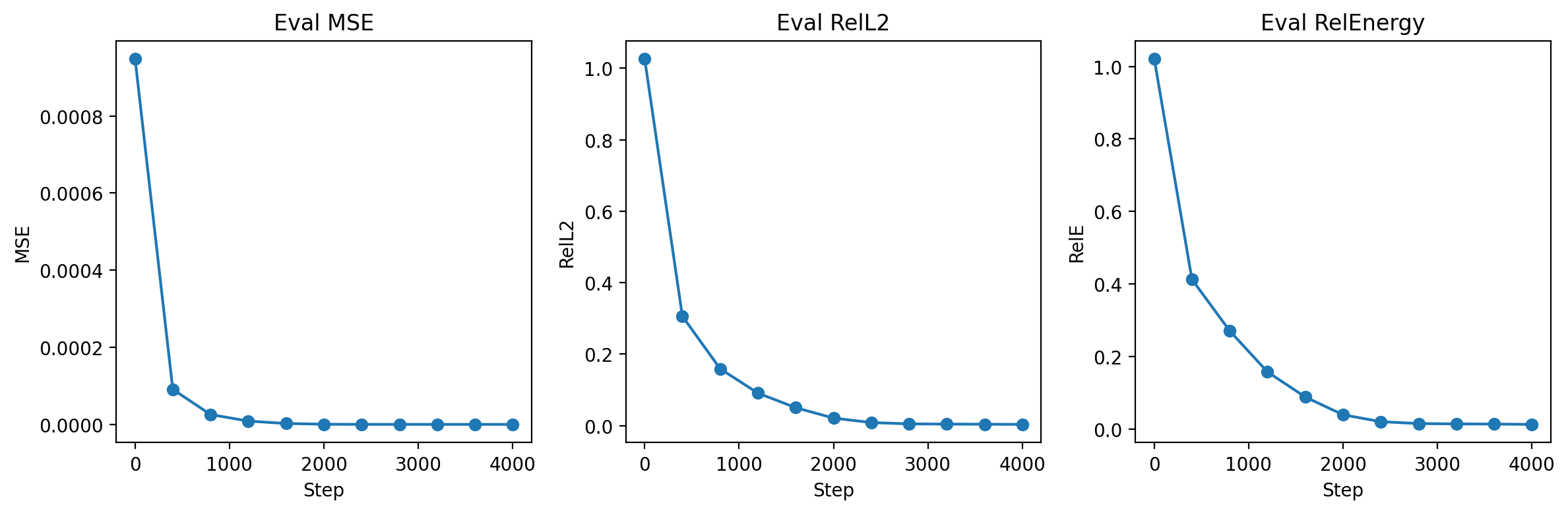}
    \caption{Training convergence of GINO on the base geometry (E1). Evaluation metrics (MSE, RelL2, RelEnergy) rapidly decrease and stabilize, reaching near numerical precision.}
    \label{fig:e1_convergence}
\end{figure}

\begin{figure}[H]
    \centering
    \includegraphics[width=\linewidth]{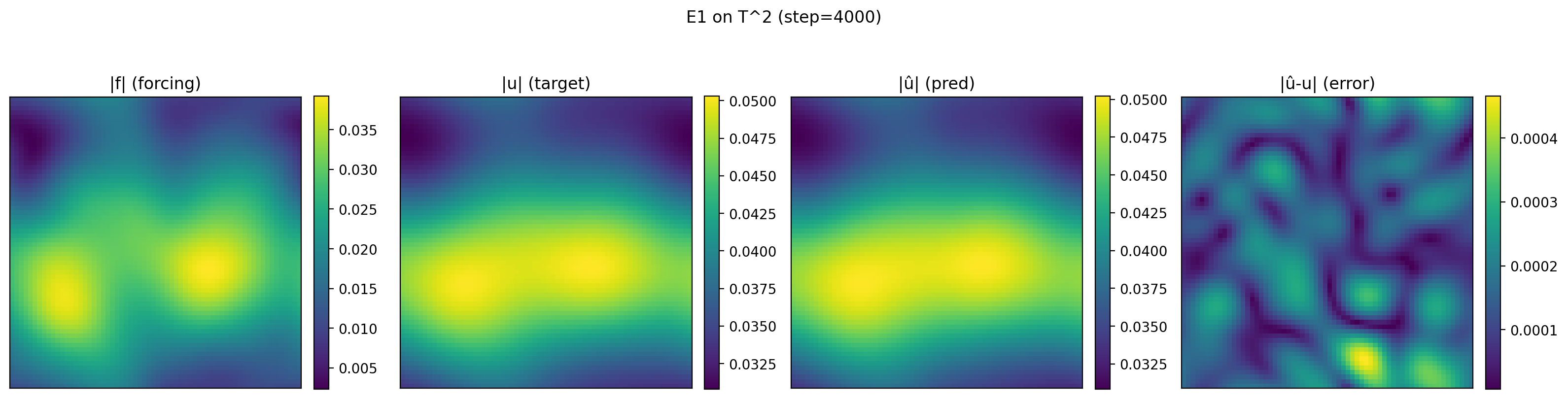}
    \caption{Qualitative prediction example at step 4000 (E1). From left to right:  $ |f| $ ,  $ |u| $ ,  $ |\hat{u}| $ , and  $ |\hat{u} - u| $ . The model produces visually accurate predictions with small residual errors.}
    \label{fig:e1_prediction}
\end{figure}

\subsection{E2: Gauge Equivariance vs. Baseline Sensitivity}

To test invariance under global frame rotations  $ R \in \mathrm{SO}(2) $ , we measure whether  $ F(R^\top f) \approx R^\top F(f) $ . Figure \ref{fig:e2_equivariance} reveals a stark contrast: GINO achieves an equivariance error of  $ 1.75 \times 10^{-7} $  (near floating-point precision), while CoordCNN exhibits large gauge dependence, with normalized standard deviation and worst-case deviation remaining at order 1 throughout training. This confirms that only GINO respects the underlying geometric symmetry.

\begin{figure}[htbp]
    \centering
    \includegraphics[width=\linewidth]{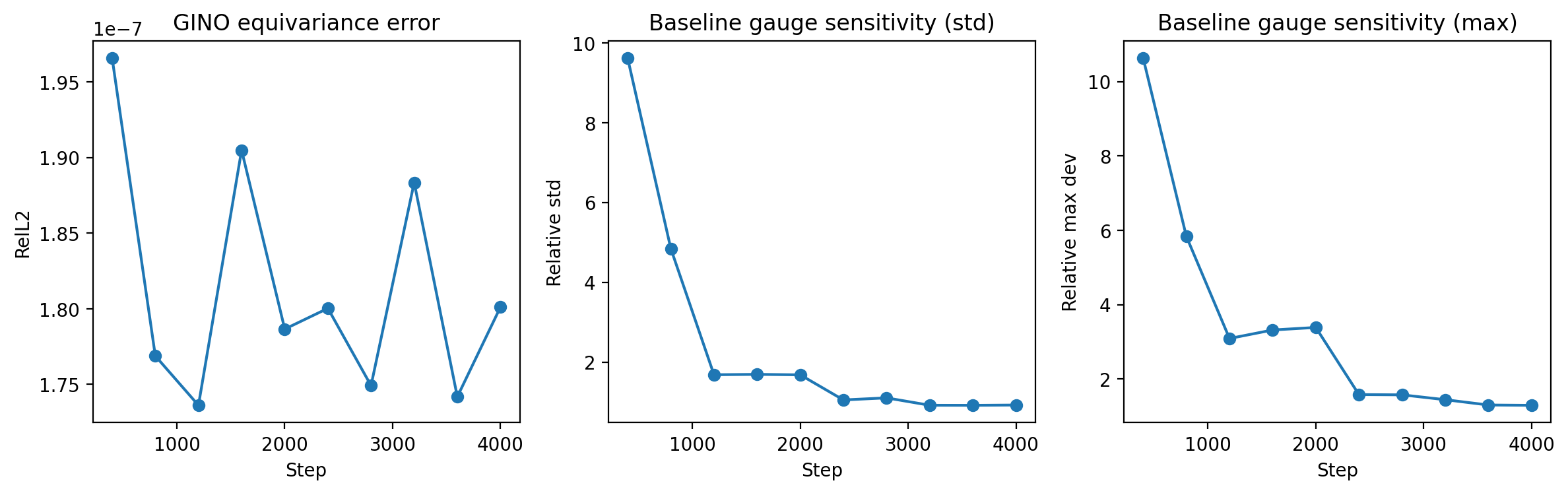}
    \caption{Gauge equivariance and sensitivity analysis (E2). Left: GINO's equivariance error remains below  $ 2 \times 10^{-7} $ . Middle and right: CoordCNN shows strong gauge dependence with relative deviations near 1.}
    \label{fig:e2_equivariance}
\end{figure}

\subsection{E3: Stability Under Metric Perturbations}

We evaluate generalization to anisotropic metrics  $ M(\delta) = R \operatorname{diag}(1+\delta, 1-\delta) R^\top $  for  $ \delta \in [0, 0.30] $ , without retraining. As Figures \ref{fig:e3_relenergy} show, GINO maintains low error across all perturbations (RelL2  $ \in [5\times10^{-3}, 8\times10^{-3}] $ ), whereas CoordCNN suffers from consistently high errors (RelL2  $ \approx 0.24 $ , RelEnergy  $ \approx 1.3 $ ). This demonstrates that GINO learns a geometry-aware operator, while the baseline overfits to coordinate-specific representations.

\begin{figure}[H]
    \centering
    \includegraphics[width=0.48\linewidth]{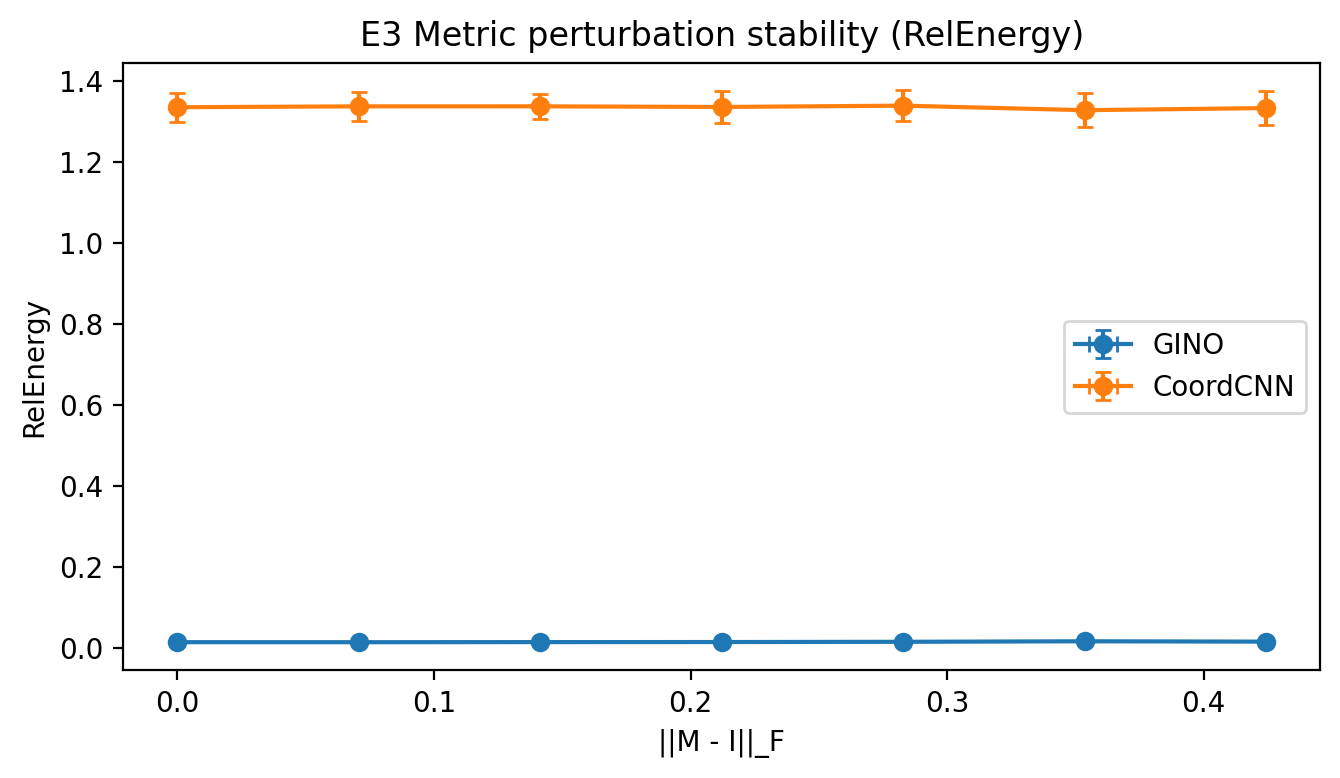}
    \includegraphics[width=0.48\linewidth]{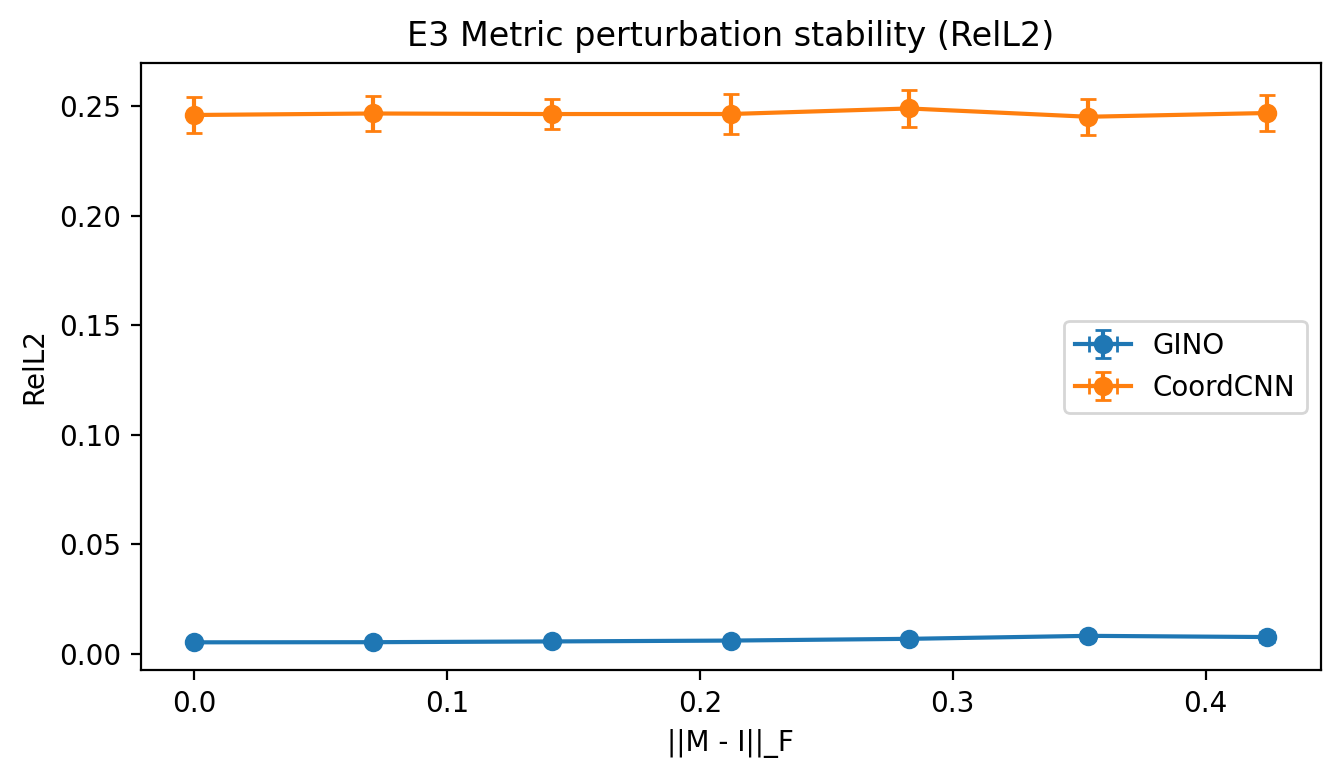}
    \caption{Metric perturbation stability (E3). GINO (blue) maintains low RelEnergy (left) and RelL2 (right) across  $ \|M - I\|_F \in [0, 0.4] $ , while CoordCNN (orange) shows large, invariant errors.}
    \label{fig:e3_relenergy}
    \label{fig:e3_rell2}
\end{figure}

\subsection{E4: Cross-Resolution Generalization and Discretization Consistency}

We assess whether learned operators commute with spectral restriction ( $ R $ ) and prolongation ( $ P $ ). In cross-resolution tests (Figures \ref{fig:e4_rell2}), GINO achieves RelL2  $ \leq 10^{-2} $  in all train/test grid combinations, while CoordCNN degrades severely under resolution shift (e.g., coarse $ \to $ fine RelL2  $ > 0.5 $ ). Furthermore, GINO satisfies discretization consistency: commutation errors remain below  $ 8 \times 10^{-3} $  (Figure \ref{fig:e4_commutation}), versus  $ > 0.6 $  for CoordCNN.

\begin{figure}[H]
    \centering
    \includegraphics[width=0.48\linewidth]{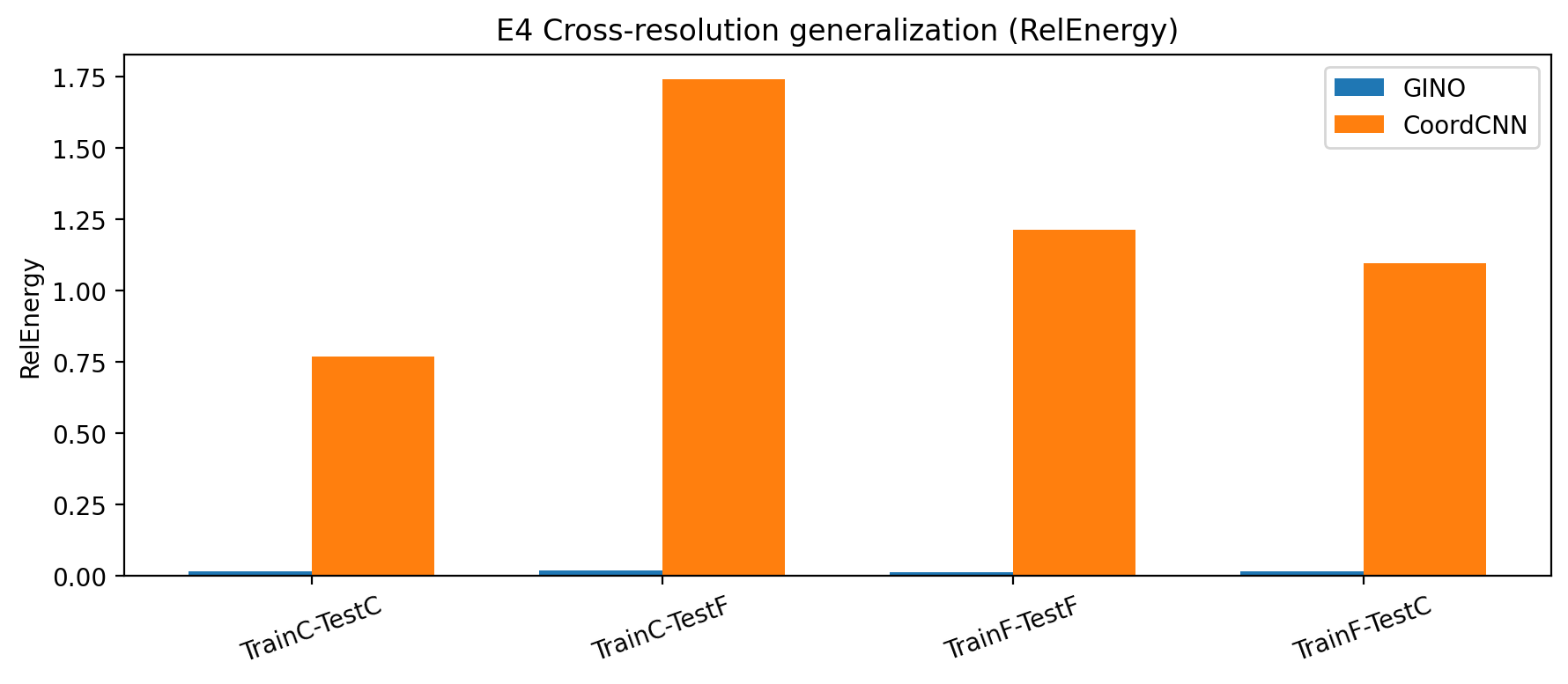}
    \includegraphics[width=0.48\linewidth]{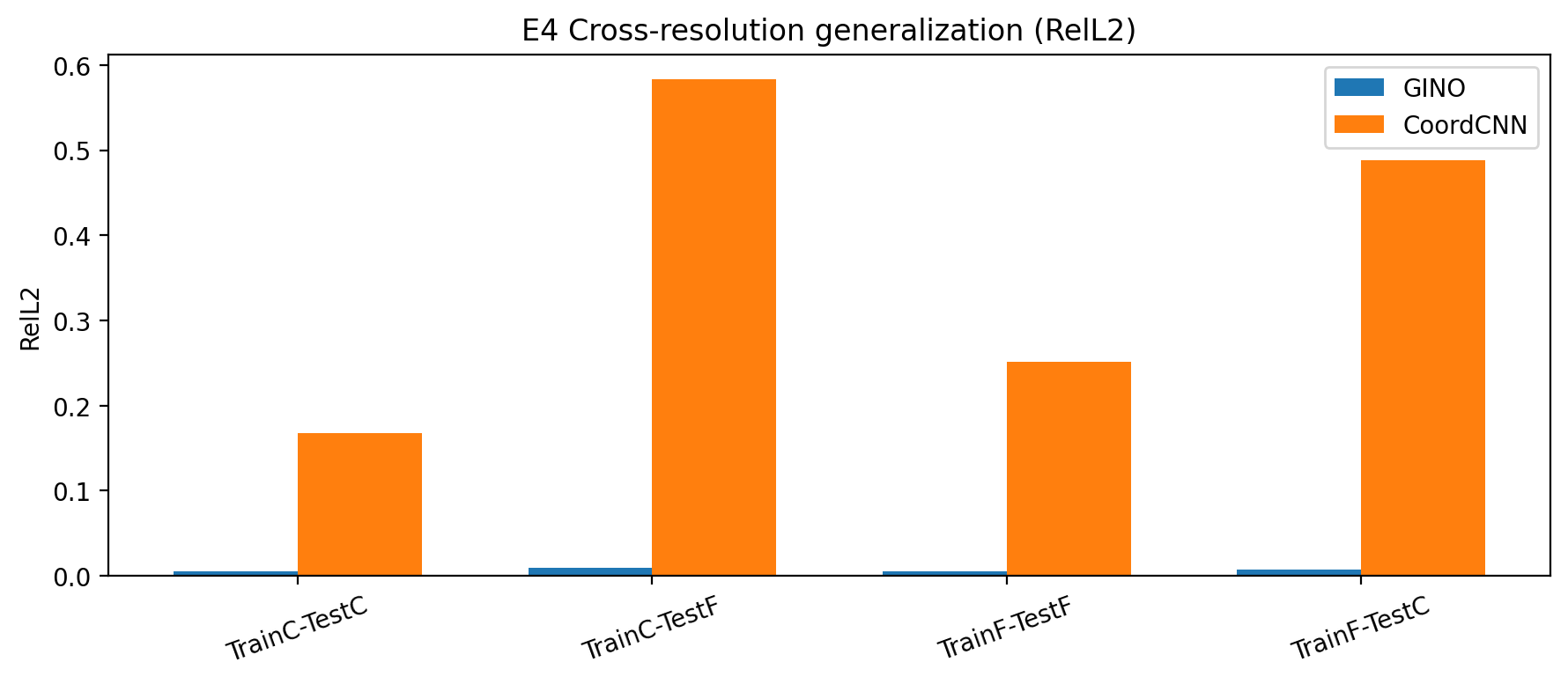}
    \caption{Cross-resolution generalization (E4). GINO generalizes across resolutions with low RelL2 (left) and RelEnergy (right); CoordCNN fails dramatically when tested on unseen grids.}
    \label{fig:e4_rell2}
    \label{fig:e4_relenergy}
\end{figure}

\begin{figure}[H]
    \centering
    \includegraphics[width=\linewidth]{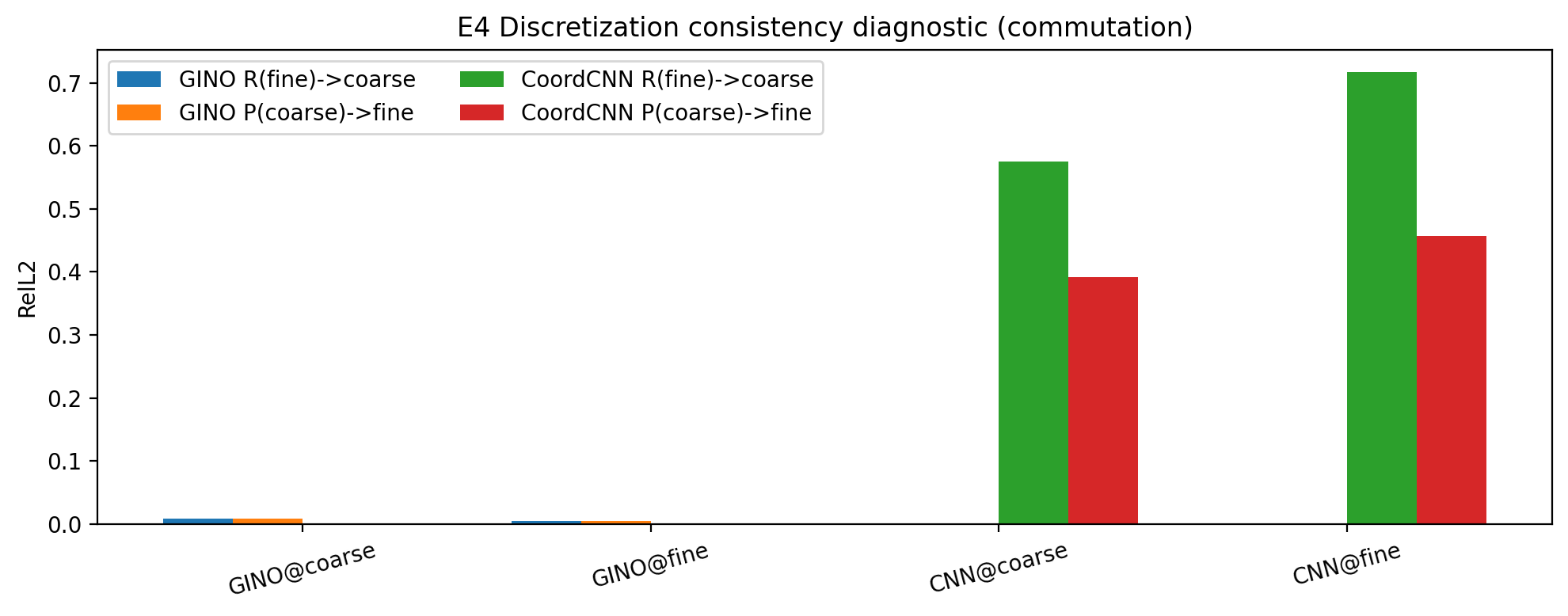}
    \caption{Discretization consistency diagnostic (E4). GINO achieves negligible commutation errors ($< 0.01$), while CoordCNN errors exceed 0.6, revealing strong grid dependence.}
    \label{fig:e4_commutation}
\end{figure}

\subsection{E5: Structure-Preserving Hodge Decomposition}

We extend GINO to learn a regularized Helmholtz--Hodge decomposition of 1-forms into exact, coexact, and harmonic components. Final performance (Figure \ref{fig:e5_summary}) shows RelL2  $ \approx 3.4 \times 10^{-2} $  for both projections and a decomposition residual of  $ 1.45 \times 10^{-2} $ . Critically, gauge equivariance error remains at  $ 3.6 \times 10^{-6} $  (Figure \ref{fig:e5_training}), confirming that the operator is both \textbf{structure-preserving} and \textbf{symmetry-respecting}.

\begin{figure}[H]
    \centering
    \includegraphics[width=\linewidth]{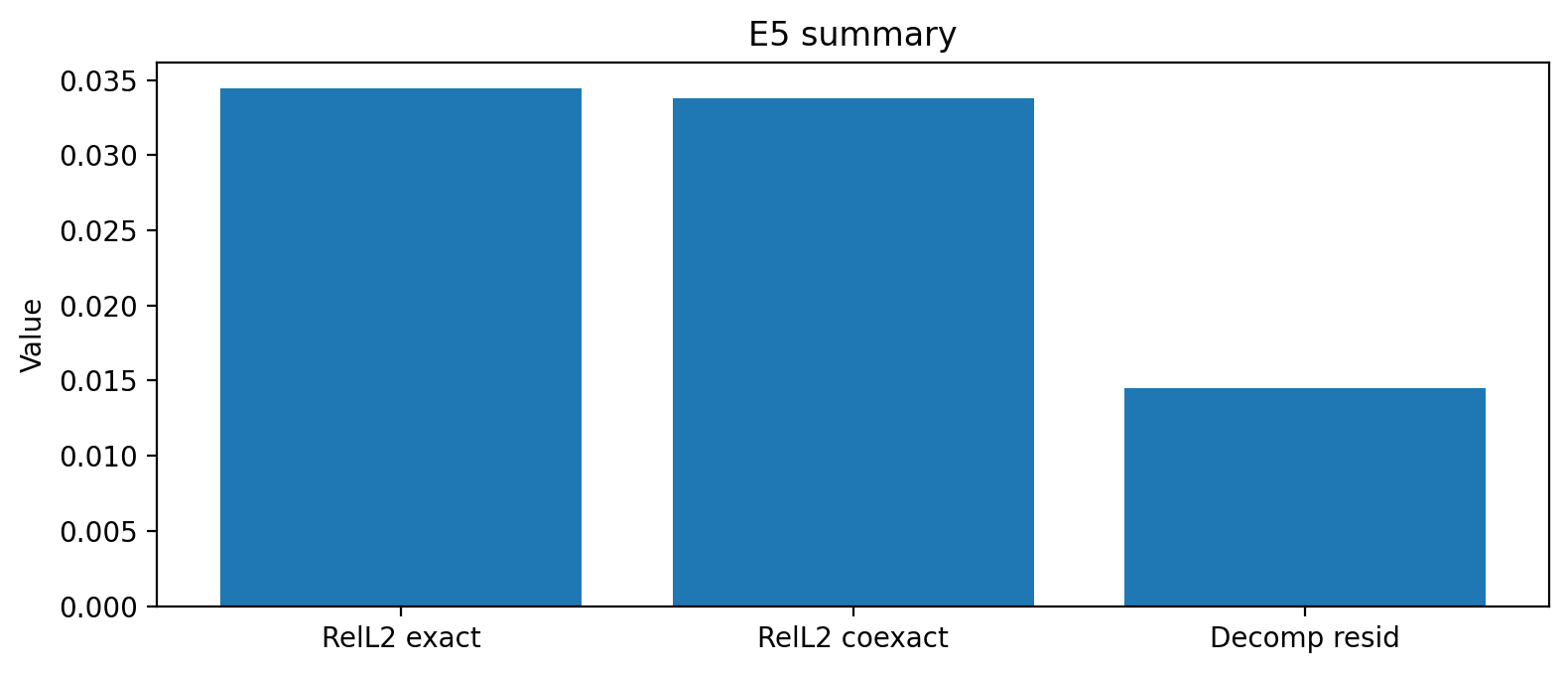}
    \caption{Summary of E5 performance. Final relative errors for exact/coexact components are  $ \approx 3.4 \times 10^{-2} $ , with decomposition residual  $ \approx 1.45 \times 10^{-2} $ .}
    \label{fig:e5_summary}
\end{figure}

\begin{figure}[H]
    \centering
    \includegraphics[width=\linewidth]{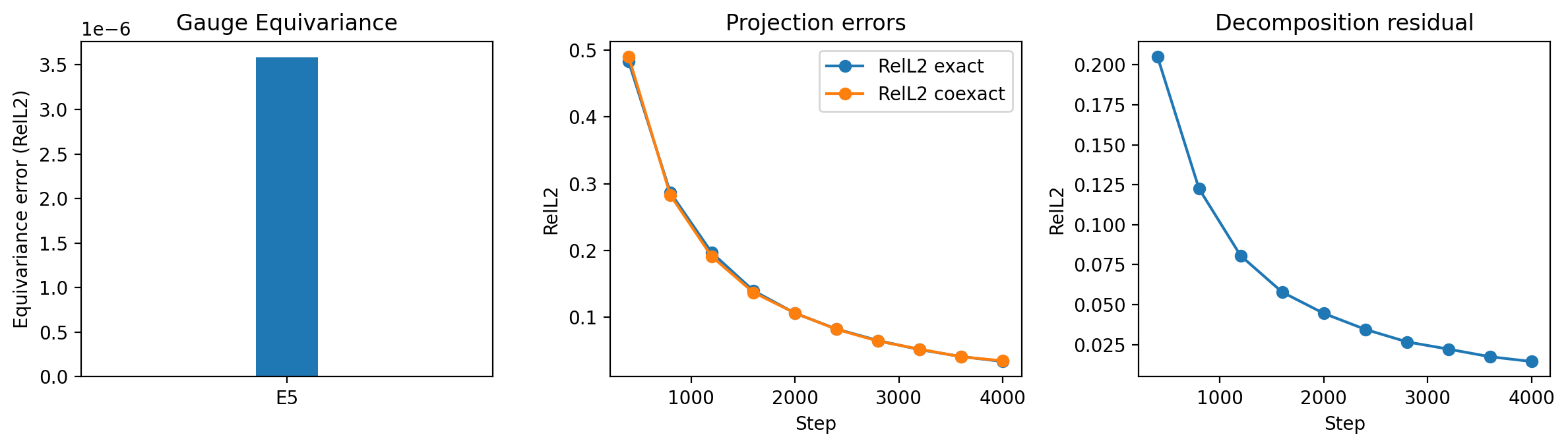}
    \caption{Training dynamics for E5. Left: gauge equivariance error stays below  $ 4 \times 10^{-6} $ . Middle/right: projection errors and decomposition residual decrease rapidly during training.}
    \label{fig:e5_training}
\end{figure}

\subsection{E6: Ablations on Spectral Truncation and Multiplier Smoothness}

E6A: Spectral support. We sweep the truncation threshold  $ \Lambda $  (Figure \ref{fig:e6a}). Error metrics exhibit a U-shaped trend, with optimal performance at  $ \Lambda \approx 100 $ . Larger  $ \Lambda $  increases the roughness proxy, indicating a capacity–regularization tradeoff.

E6B: Smoothness–stability link. Strongly regularized models maintain flat error curves across  $ \delta $  (Figure \ref{fig:e6b_stability}), while unregularized (rough) models amplify RelL2 by up to \textbf{3.4×} at  $ \delta=0.30 $ . The roughness proxy (Figure \ref{fig:e6b_roughness}) correlates strongly with instability, providing empirical evidence that multiplier smoothness mediates geometric robustness.

\begin{figure}[H]
    \centering
    \includegraphics[width=\linewidth]{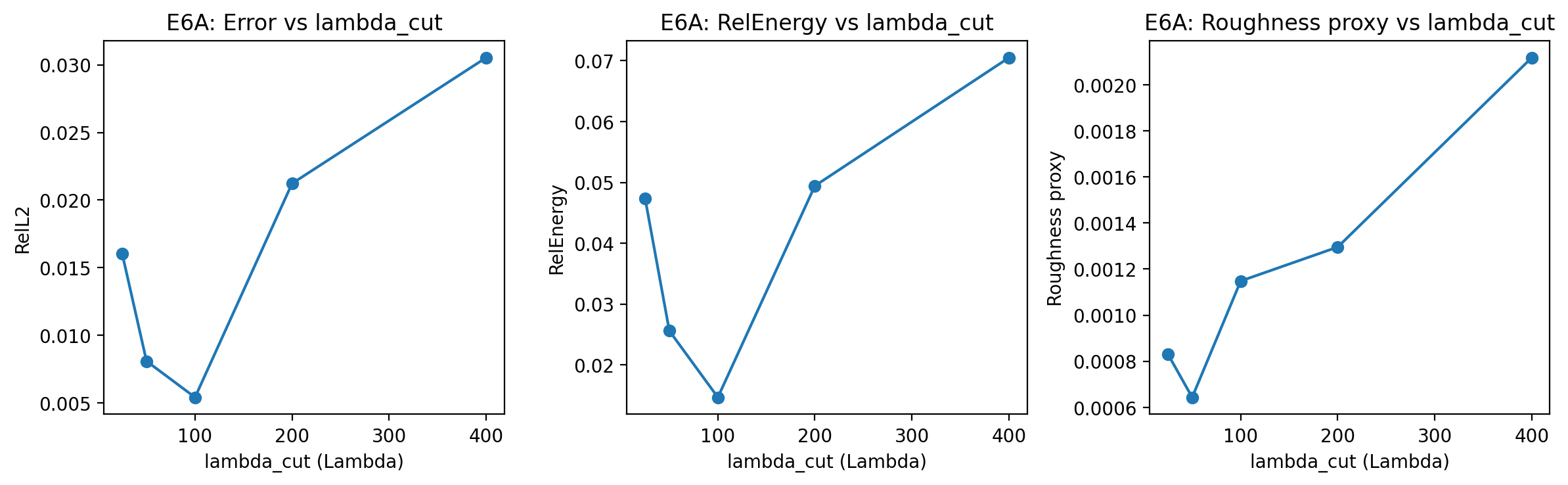}
    \caption{Ablation on spectral truncation (E6A). Error metrics (RelL2, RelEnergy) are minimized at intermediate  $ \Lambda $ , while roughness increases monotonically with  $ \Lambda $ .}
    \label{fig:e6a}
\end{figure}

\begin{figure}[htbp]
    \centering
    \includegraphics[width=0.29\linewidth]{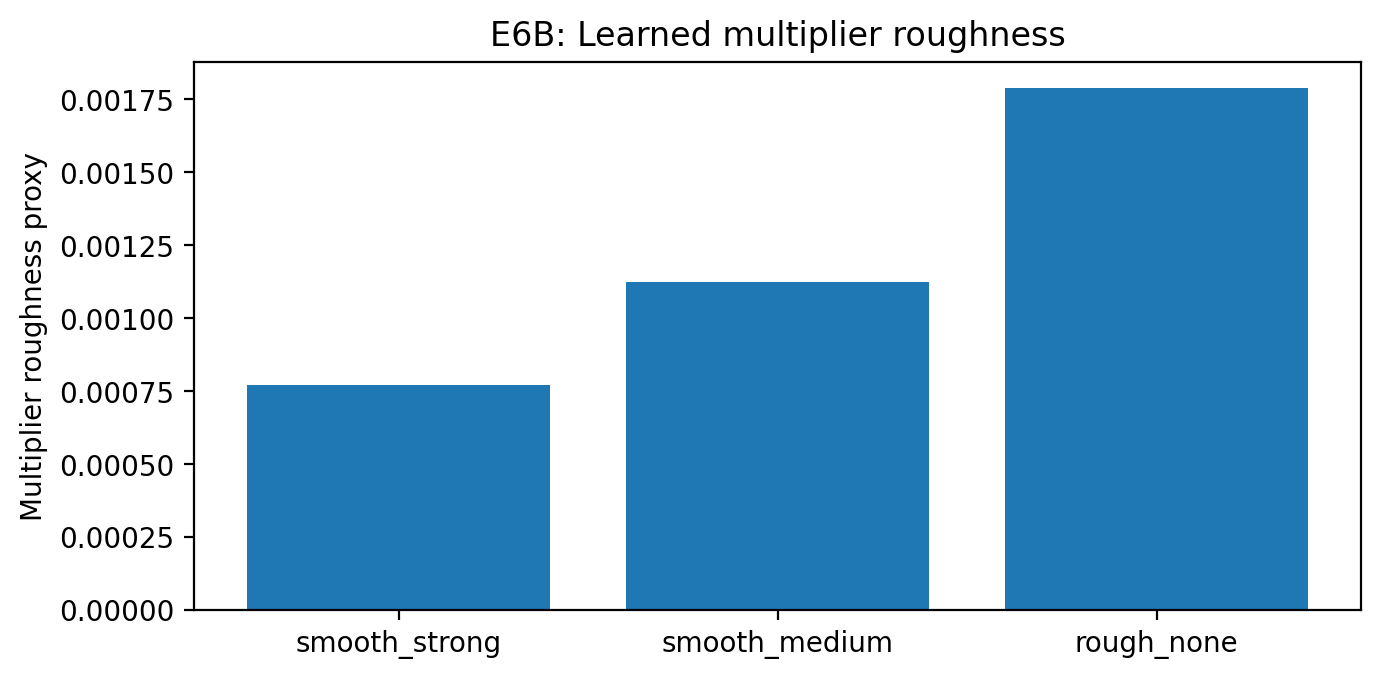}
    \includegraphics[width=0.69\linewidth]{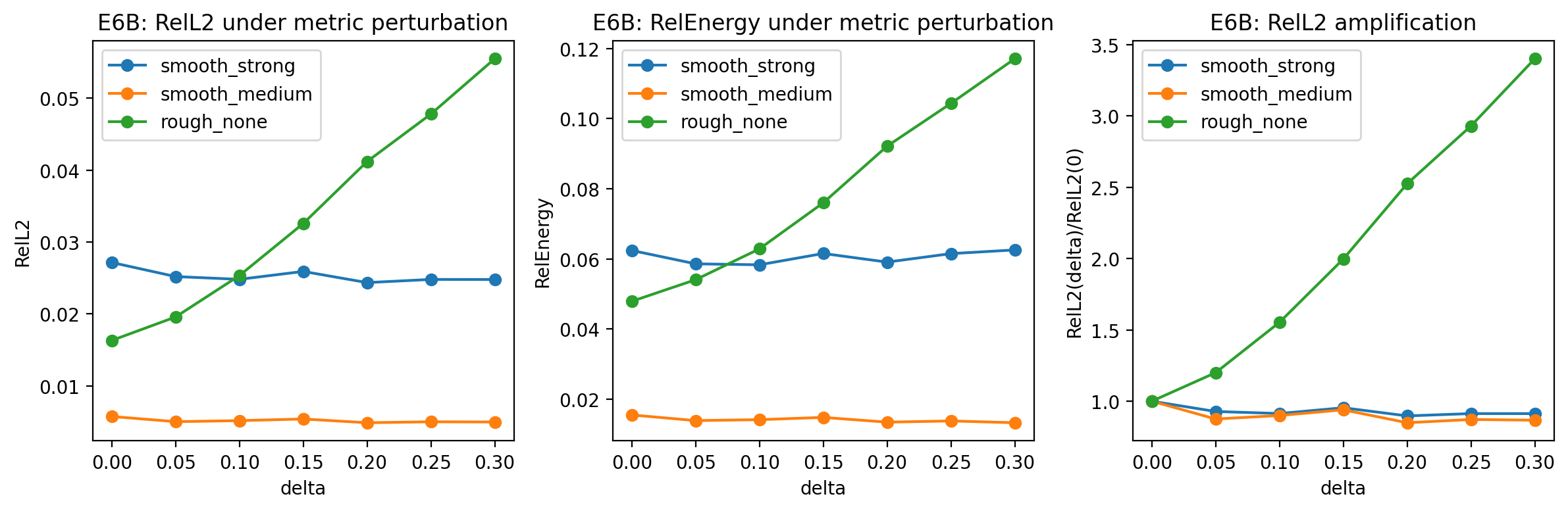}
    \caption{Effect of smoothness on geometric stability (E6B). Left: stronger regularization yields smoother multipliers. Right: unregularized models amplify error by up to 3.4× under metric perturbation, while smoothed models remain stable.}
    \label{fig:e6b_roughness}
    \label{fig:e6b_stability}
\end{figure}

\subsection{Summary}

Across six experiments, GINO consistently outperforms a standard coordinate-based baseline in accuracy, symmetry preservation, geometric stability, and discretization consistency. Crucially, ablation studies reveal that these advantages stem from principled design choices—spectral truncation and smoothness control—that align with theoretical intuitions about intrinsic operator learning. Together, these results establish GINO as a robust, geometry-aware framework for learning elliptic operators on vector-valued fields.

\section{Conclusion}
We introduced Gauge-Equivariant Intrinsic Neural Operators (GINO) for learning elliptic PDE solution maps in a manner that is consistent with geometric structure and representation changes. By parameterizing the operator through intrinsic spectral multipliers defined on geometry-dependent spectra and combining them with gauge-equivariant nonlinearities, GINO enforces frame-consistent behavior while maintaining the expressivity required for operator learning.

Experiments E1–E6 on $\mathbb{T}^2$ provide a coherent empirical validation of the approach. GINO achieves accurate operator approximation on the base resolvent task and exhibits near numerical-precision gauge equivariance, while coordinate-based CNN baselines remain strongly gauge-sensitive. Under structured metric perturbations, GINO shows minimal degradation, and under cross-resolution evaluation it maintains low error and small commutation defects with restriction/prolongation operators, indicating reduced discretization dependence. On a regularized Helmholtz–Hodge decomposition task, the structured variant attains low projection error and small decomposition residual while preserving gauge equivariance to near floating-point tolerance. Finally, theory-aligned ablations demonstrate that multiplier smoothness strongly mediates stability: rough learned multipliers amplify perturbation errors, whereas smoothness control yields markedly improved robustness.

These findings suggest that incorporating intrinsic geometry and gauge constraints is a practical route to more reliable operator surrogates for geometric PDEs. Future work includes extending the framework to spatially varying metrics and general manifolds, integrating mesh-based discretizations via discrete exterior calculus or finite element exterior calculus, and studying broader PDE families and real-world scientific benchmarks where gauge and discretization robustness are essential.

% Acknowledgements and Disclosure of Funding should go at the end, before appendices and references

\acks{This research received no external funding.}

% Manual newpage inserted to improve layout of sample file - not
% needed in general before appendices/bibliography.

\newpage

{\noindent \em Remainder omitted in this sample. See http://www.jmlr.org/papers/ for full paper.}

\vskip 0.2in
\bibliography{sample}

\end{document}